\pdfoutput=1
\documentclass[
    10pt, 
    logo, 
    twocolumn, 
    copyright, 
    nonumbering
]{nvidiatechreport}

\usepackage{hyperref}
\usepackage{url}
\usepackage{colortbl}
\usepackage{nicefrac}       %
\usepackage{microtype}      %
\usepackage[dvipsnames]{xcolor}         %
\usepackage{multirow}
\usepackage{multicol}
\usepackage{graphicx}
\usepackage{xspace}
\usepackage{amsmath}
\usepackage[export]{adjustbox}
\usepackage{tcolorbox}
\usepackage{amssymb}
\usepackage{enumitem}
\usepackage{wrapfig}
\usepackage{xcolor}
\usepackage{subcaption}    %
\usepackage{float}
\usepackage{stfloats}
\usepackage{listings}
\usepackage{mdframed}
\usepackage{tcolorbox}
\usepackage{arydshln}
\usepackage{booktabs} 
\usepackage{color,soul}
\usepackage{makecell}
\usepackage[numbers]{natbib}
\usepackage{caption}
\usepackage{makecell}
\usepackage{algorithm}
\usepackage{algpseudocode}
\usepackage{placeins}
\usepackage{amsfonts}
\usepackage{lipsum}
\usepackage{subcaption}
\usepackage{pgfplots}
\pgfplotsset{compat=1.18}
\tcbuselibrary{listings,breakable}

\definecolor{pastelblue}{RGB}{173,216,230}
\definecolor{pastelyellow}{RGB}{255,253,208}
\definecolor{pastelpink}{RGB}{255,209,220}
\definecolor{pastelgreen}{RGB}{176,226,172}
\definecolor{pastellavender}{RGB}{230,230,250}

\definecolor{NvidiaGreen}{RGB}{118, 185, 0}
\sethlcolor{red!15}

\hypersetup{
    colorlinks=true, 
    linkcolor=OliveGreen,
    citecolor=OliveGreen,
    urlcolor=OliveGreen,
}

\newcommand{\ie}{{i.e.}}

\theoremstyle{plain}
\newtheorem{theorem}{Theorem}%
\newtheorem{proposition}[theorem]{Proposition}

\theoremstyle{definition}

\theoremstyle{remark}

\makeatletter
\newtheorem*{rep@theorem}{\rep@title}
\newcommand{\newreptheorem}[2]{%
\newenvironment{rep#1}[1]{%
 \def\rep@title{\textbf{#2} \ref{##1}}%
 \begin{rep@theorem}}%
 {\end{rep@theorem}}}
\makeatother

\newreptheorem{theorem}{Theorem}
\newreptheorem{proposition}{Proposition}
\newreptheorem{lemma}{Lemma}
\newreptheorem{corollary}{Corollary}

\usepackage{amsmath,amsfonts,bm}

\newcommand{\norm}[1]{\left\Vert#1\right\Vert}

\newcommand{\set}[1]{\left\{#1\right\}}
\newcommand{\parr}[1]{\left (#1\right )}
\newcommand{\brac}[1]{\left [#1\right ]}

\newcommand{\too}{\rightarrow}

\newcommand{\deq}%
{\sim}%

\definecolor{mygray}{gray}{0.95}

\def\eqref#1{equation~\ref{#1}}

\def\1{\bm{1}}

\DeclareMathAlphabet{\mathsfit}{\encodingdefault}{\sfdefault}{m}{sl}
\SetMathAlphabet{\mathsfit}{bold}{\encodingdefault}{\sfdefault}{bx}{n}

\def\gL{{\mathcal{L}}}

\def\gX{{\mathcal{X}}}

\def\sN{{\mathbb{N}}}

\newcommand{\E}{\mathbb{E}}

\newcommand{\R}{\mathbb{R}}

\usepackage{courier}

\definecolor{code_bg}{HTML}{FCF6EC}

\definecolor{code_comment}{HTML}{888888} %
\definecolor{code_keyword}{HTML}{5B3C88} %
\definecolor{code_pytorch}{HTML}{316896} %
\definecolor{code_model}{HTML}{9E3636}   %

\lstdefinelanguage{ModernPython}{
    language=Python,
    basicstyle=\fontsize{7}{9}\ttfamily, 
    backgroundcolor=\color{code_bg},           %
    commentstyle=\color{code_comment},         %
    keywordstyle=\color{code_keyword}\bfseries,
    morekeywords={with, no_grad, dim},
    classoffset=1,
    morekeywords={randn, arange, expand, cat, cumsum, detach, mse_loss, mean, einsum, randint, clip, diff, randn_like},
    keywordstyle=\color{code_pytorch},         %
    classoffset=2,
    morekeywords={pd, fm, teacher, student},
    keywordstyle=\color{code_model}\bfseries,  %
    classoffset=0,
    framesep=8pt,
    xleftmargin=8pt,
    frame=none,
    showstringspaces=false
}

\definecolor{framebg}{HTML}{EEF2F5} %
\newmdenv[
    backgroundcolor=framebg,
    roundcorner=3pt,
    skipabove=7pt,
    linewidth=0pt,
    innertopmargin=6pt,
    innerbottommargin=6pt, %
    innerleftmargin=4pt,   %
    innerrightmargin=4pt   %
]{myframe}

\definecolor{best}{HTML}{E2F0D9}
\definecolor{secondbest}{HTML}{F1F8EB} %
\title{\centering{Parallel Decoding Distillation\\ for Fast Image and Video Generation}}
\newcommand{\headertitle}{Parallel Decoding Distillation for Image and Video Generation}
\newcommand{\framecell}[1]{\raisebox{-0.5\height}{\includegraphics[width=2.7cm,keepaspectratio]{#1}}}
\newcommand{\framecellw}[1]{\raisebox{-0.5\height}{\includegraphics[width=3.2cm,keepaspectratio]{#1}}}

\usepackage{cuted} %

\makeatletter
\@ifundefined{@setmarks}{\let\@setmarks\relax}{}
\makeatother

\usepackage{caption} %
\author{
\centering{
Neta Shaul$^2$, Chao Liu$^1$, Arash Vahdat$^{1,\dagger}$, Julius Berner$^{1,\dagger}$\\
$^1$NVIDIA, $^2$Weizmann Institute of Science, $^\dagger$equal advising
}
}

\begin{abstract}
Generation in video diffusion or flow models is computationally expensive due to the slow and iterative sampling process. Current state-of-the-art (SOTA) acceleration methods heavily rely on variational score distillation (VSD) and adversarial losses to distill diffusion models into few-step generators. Albeit achieving high-quality video generation, these training losses are notoriously hard to optimize and suffer from mode collapse, leading to loss of video diversity and lack of motion. In this paper, we introduce Parallel Decoding Distillation (PDD), a simplified and scalable trajectory-based distillation method for fast inference of diffusion and flow matching models. Our architecture and training procedure are compatible with any pre-trained model and support sampling with a varying number of function evaluations (NFE). PDD accelerates generation by predicting multiple denoising steps per network evaluation. Conceptually, it learns a representation of the mean velocity without regressing its derivative using JVPs or finite-difference approximations. Our method achieves SOTA performance with 4-8 NFE on LTX-2.3 Text-to-Video/Audio, Wan 14B Text-to-Video, and Qwen-Image Text-to-Image. Moreover, PDD presents a significant improvement in generated video diversity. Project page: \url{https://research.nvidia.com/labs/genair/pdd}
\vspace{-20pt}
\end{abstract}

\begin{document}

\maketitle

\begin{strip}
  \vspace{-0.75em}
  \centering
  \resizebox{0.93\textwidth}{!}{
  \begin{tabular}{@{} r @{\hspace{6pt}} c@{\hspace{2pt}}c@{\hspace{2pt}}c@{\hspace{2pt}}c@{\hspace{2pt}}c @{}}
                & $t{=}0$s & $t{=}2.5$s & $t{=}5$s & $t{=}7.5$s & $t{=}10$s \\ \addlinespace[2pt]
    Teacher
      & \framecellw{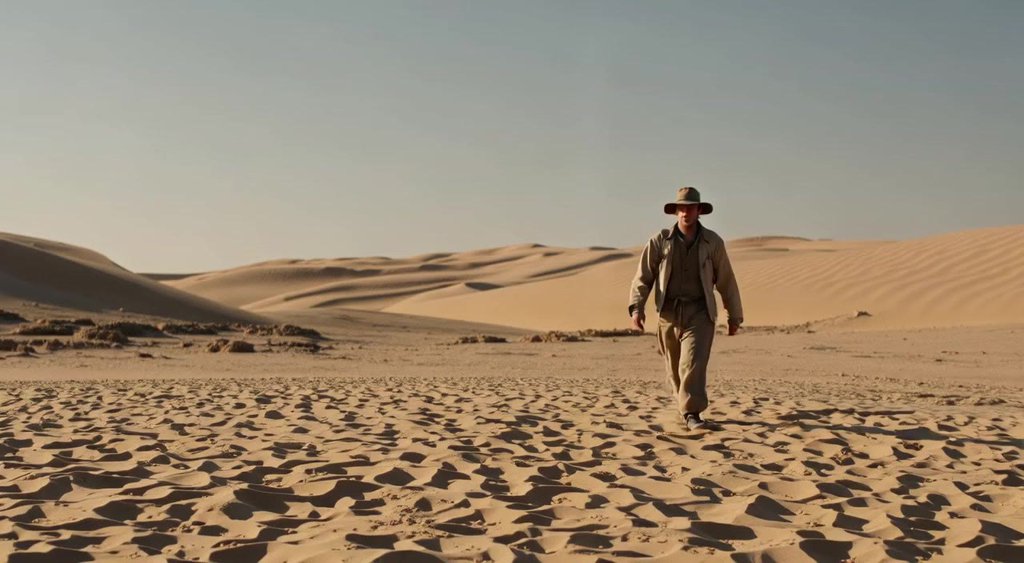} & \framecellw{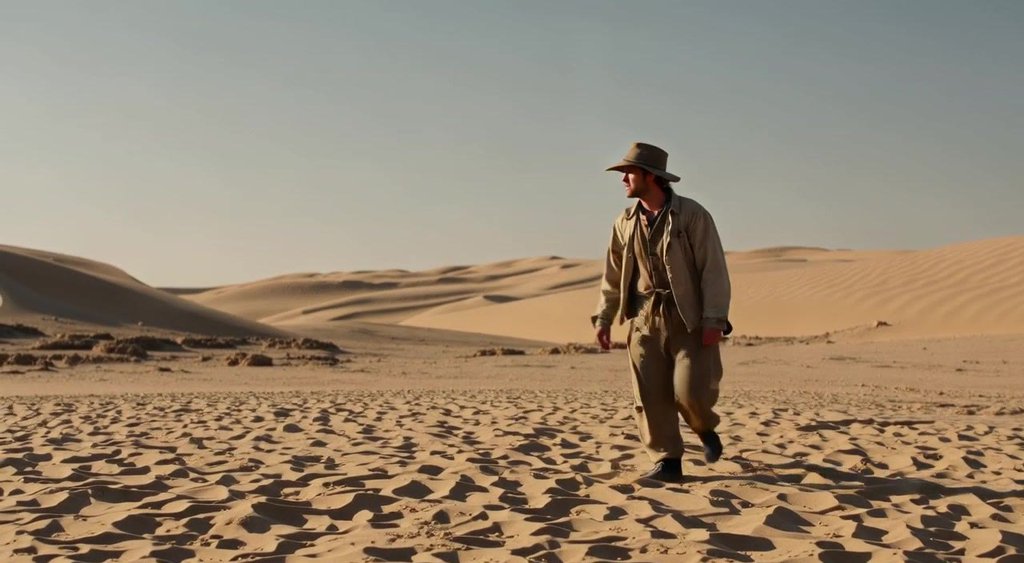} & \framecellw{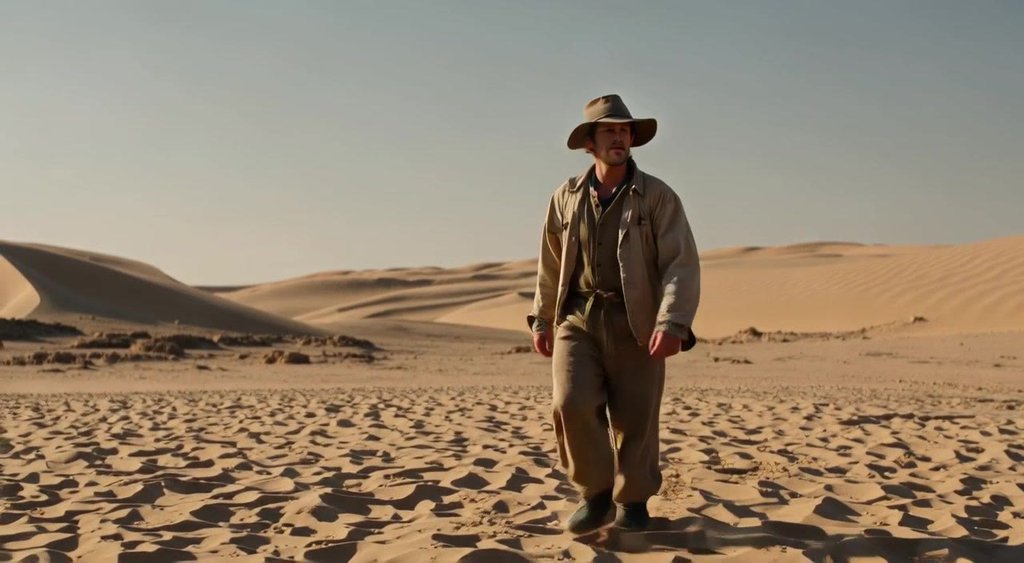} & \framecellw{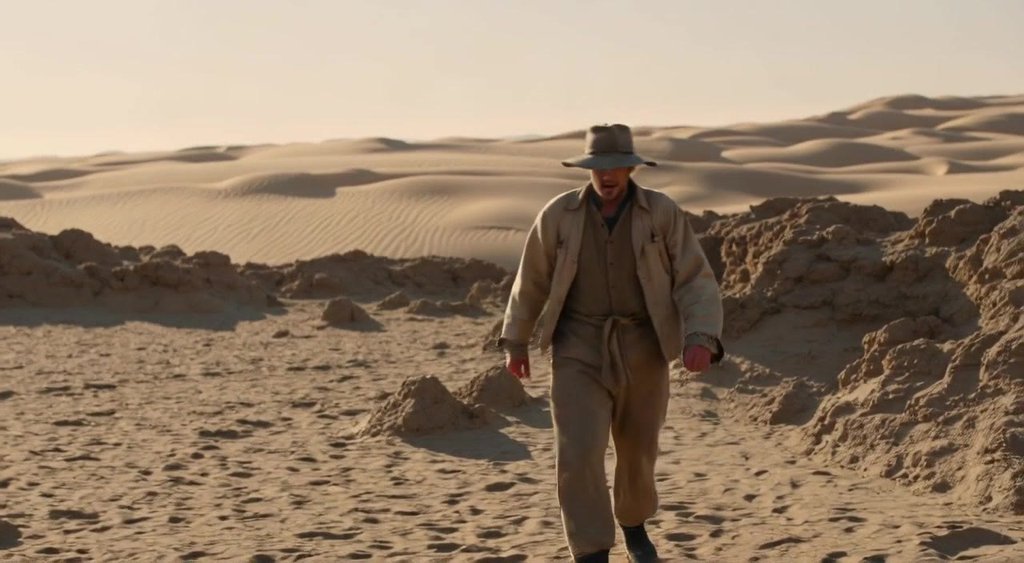} & \framecellw{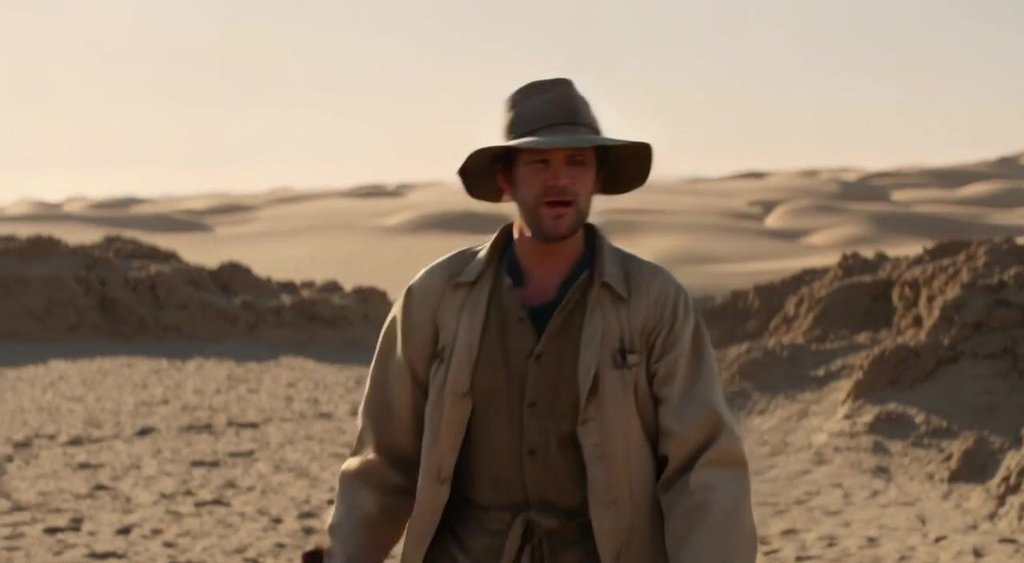} \\ \addlinespace[2pt]
    \textbf{PDD}
      & \framecellw{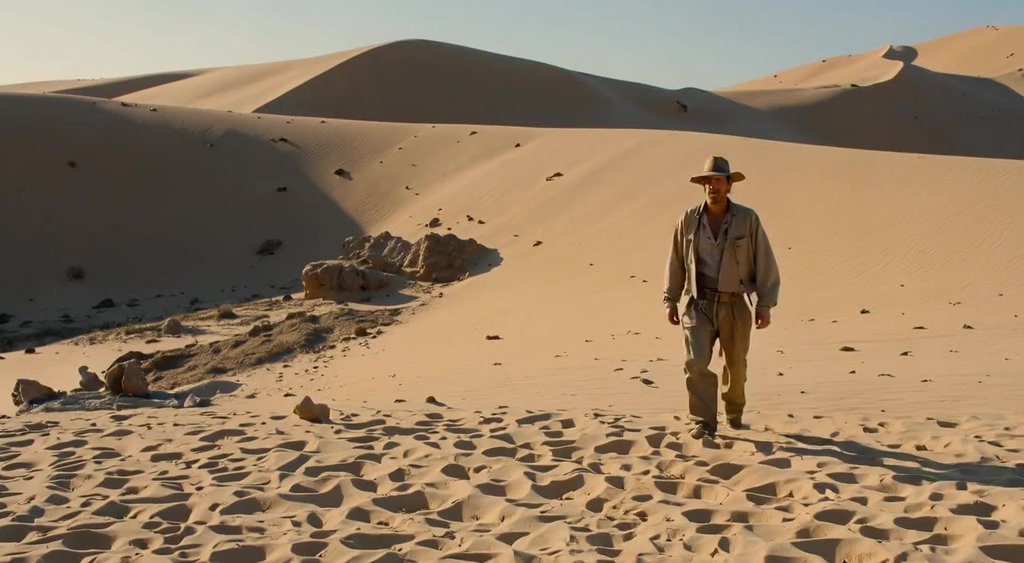} & \framecellw{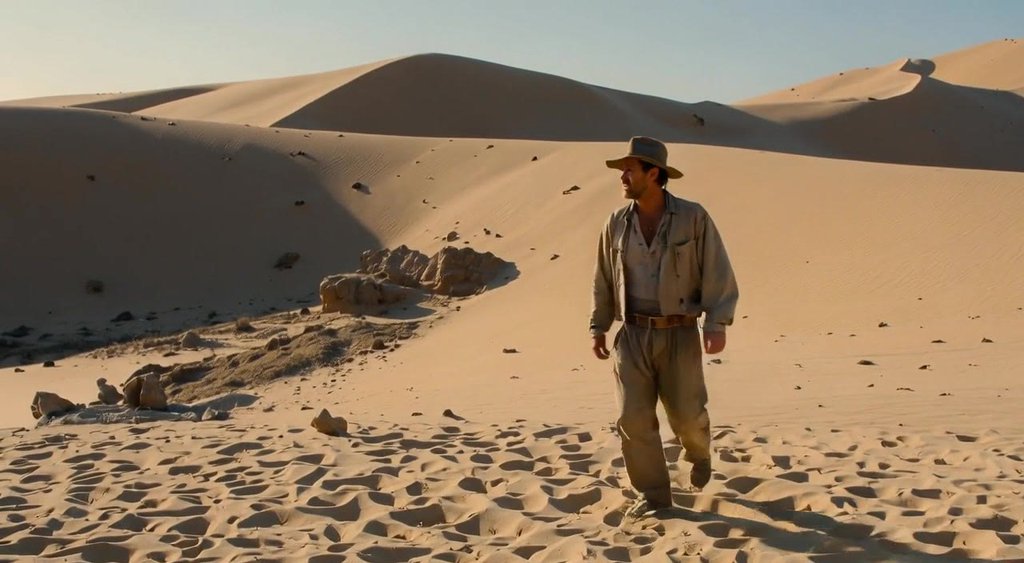} & \framecellw{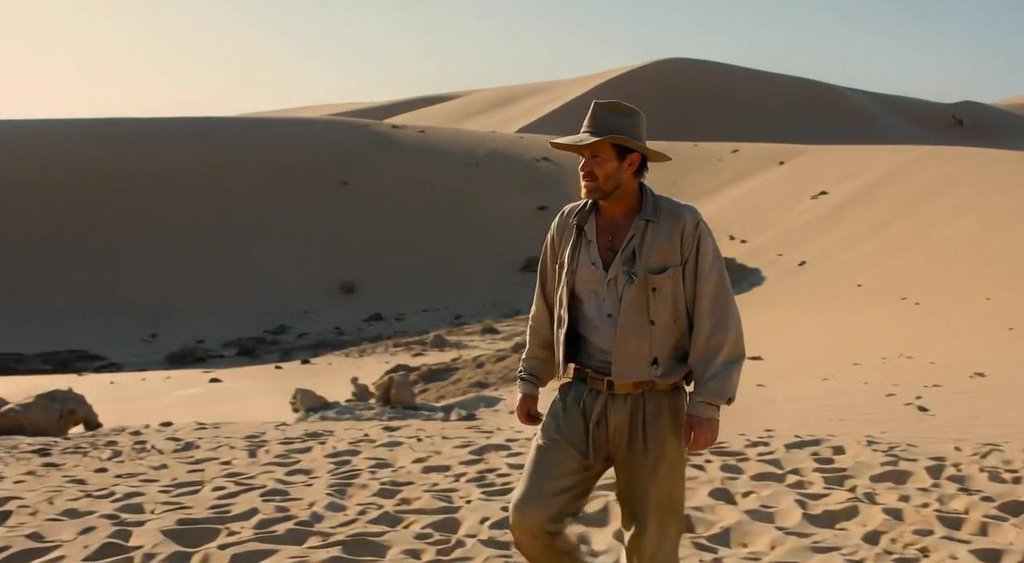} & \framecellw{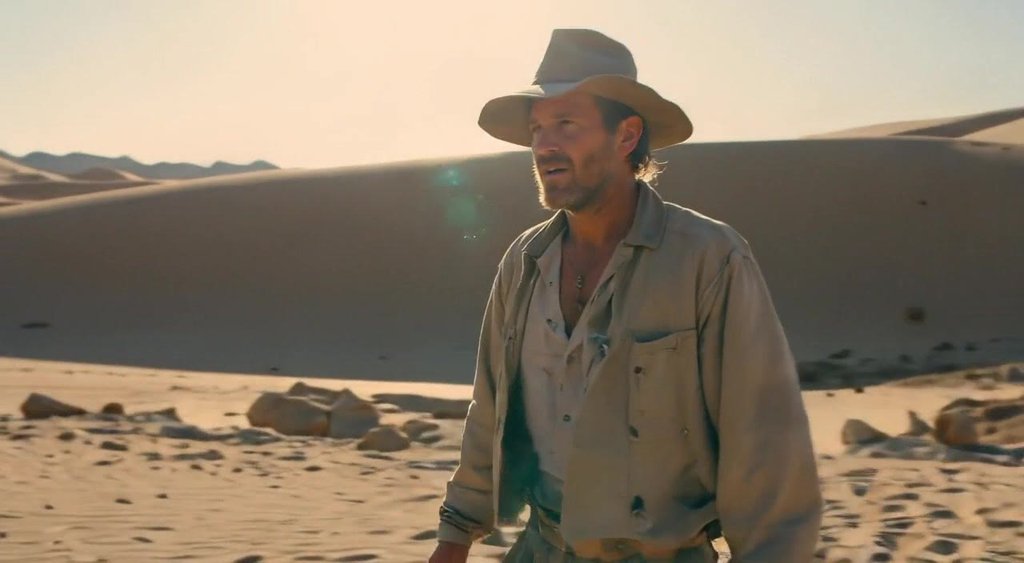} & \framecellw{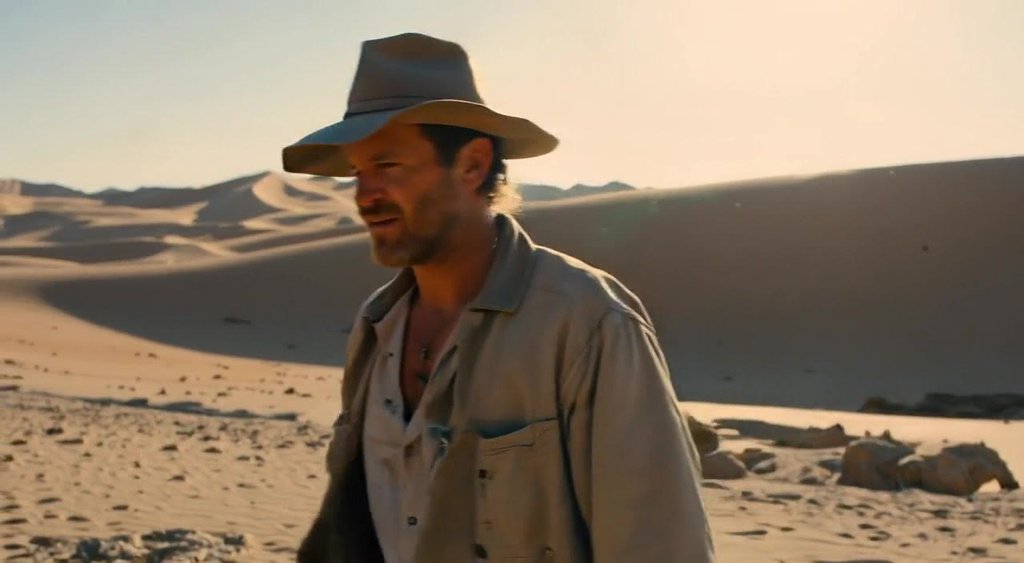} \\ \addlinespace[2pt]
    Distilled 
      & \framecellw{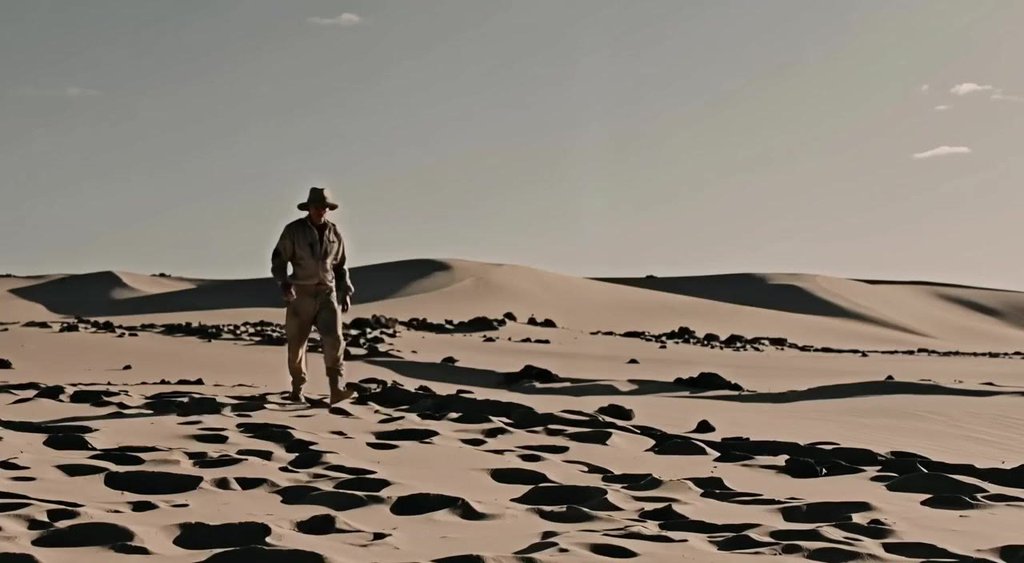} & \framecellw{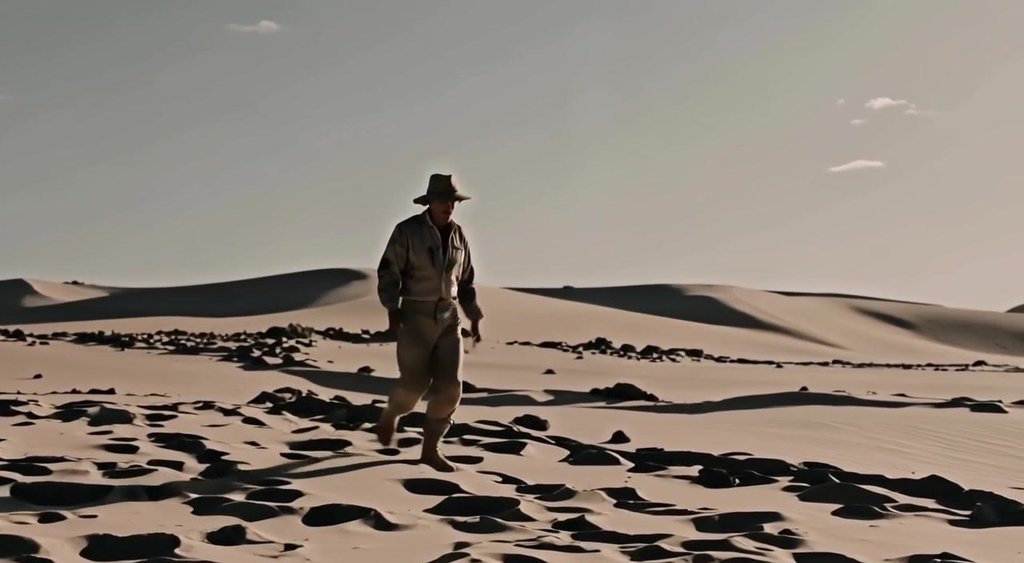} & \framecellw{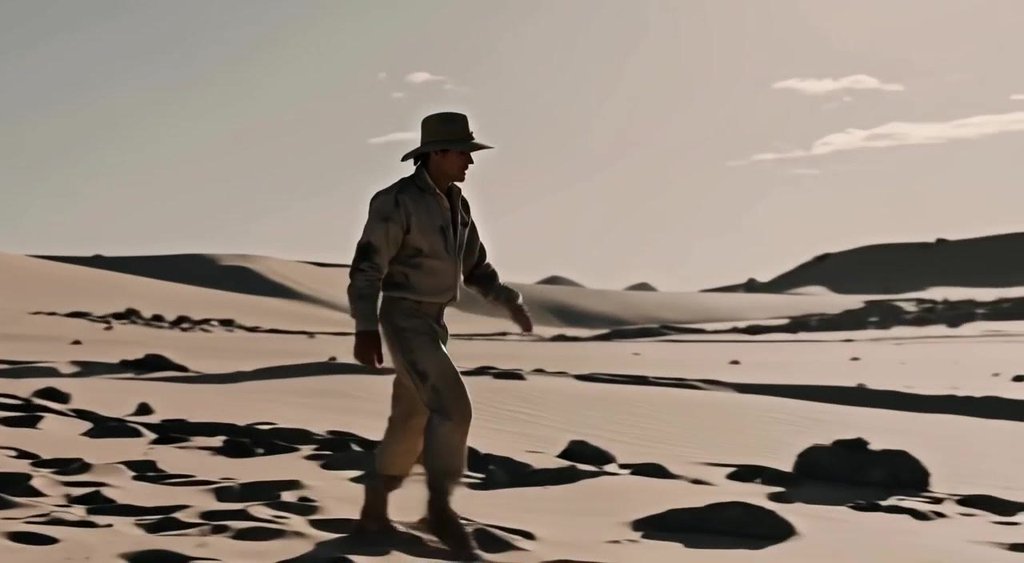} & \framecellw{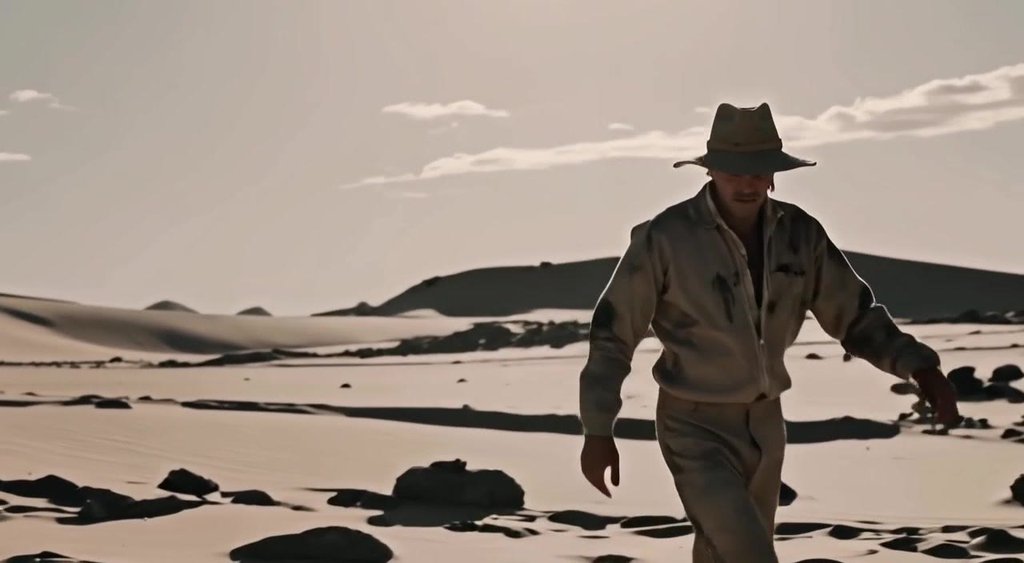} & \framecellw{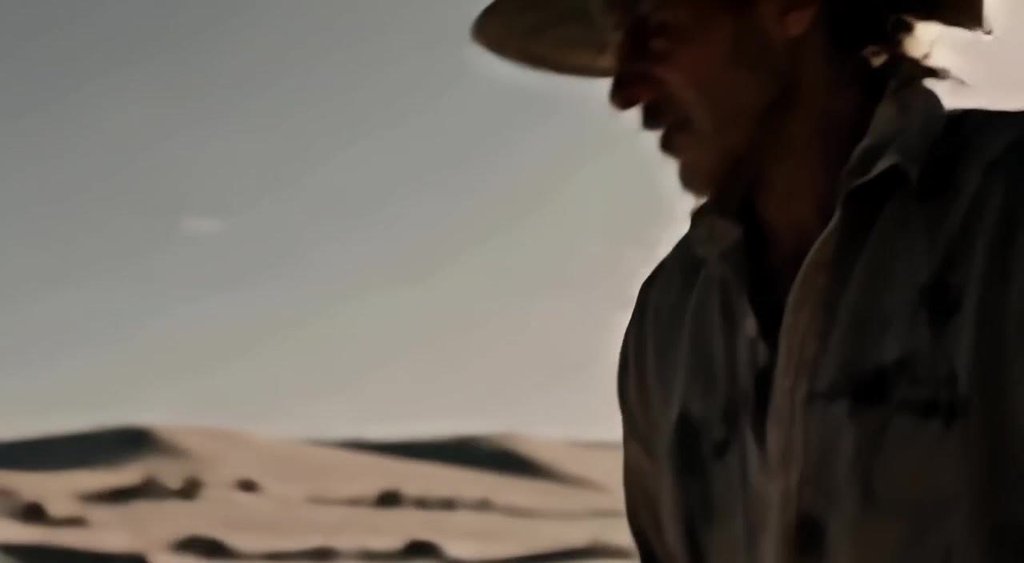} \\ \addlinespace[2pt]
                & \multicolumn{5}{@{} p{16.00cm} @{}}{\tiny\itshape ``A solitary walker moves across a vast, arid desert landscape under a scorching sun, the dry, rhythmic crunch of boots pressing into shifting sand echoing in the open, expansive space. The walker is wearing a tattered, beige jacket and loose-fitting pants that rustle with every step, a wide-brimmed hat shading their face from the intense sunlight. They walk with a determined yet weary posture, occasionally glancing around at the expansive sand dunes and rocky outcrops as a low, mournful whistle of wind sweeps across the barren terrain. In the background, distant sand dunes roll endlessly towards the horizon, creating a sense of isolation and vastness. The scene transitions between medium shots and close-ups as the walker trudges forward, their heavy, labored breathing audible in the stillness, emphasizing the harsh, desolate environment.''} \\
  \end{tabular}%
  }
  \vspace{1em}
  
  \resizebox{0.9635\textwidth}{!}{%
  \begin{tabular}{@{} r @{\hspace{6pt}} c@{\hspace{2pt}}c@{\hspace{2pt}}c @{\hspace{4pt}} c@{\hspace{2pt}}c@{\hspace{2pt}}c @{\hspace{6pt}}}
                & \multicolumn{3}{c}{Noise Seed 1}
                & \multicolumn{3}{c}{Noise Seed 2} \\
                
                \cmidrule(l{2pt}r{6pt}){2-4} \cmidrule(l{2pt}r{6pt}){5-7}
                
                & $t{=}0.5$s & $t{=}2.5$s & $t{=}4.5$s & $t{=}0.5$s & $t{=}2.5$s & $t{=}4.5$s \\
    \textbf{PDD}
      & \framecell{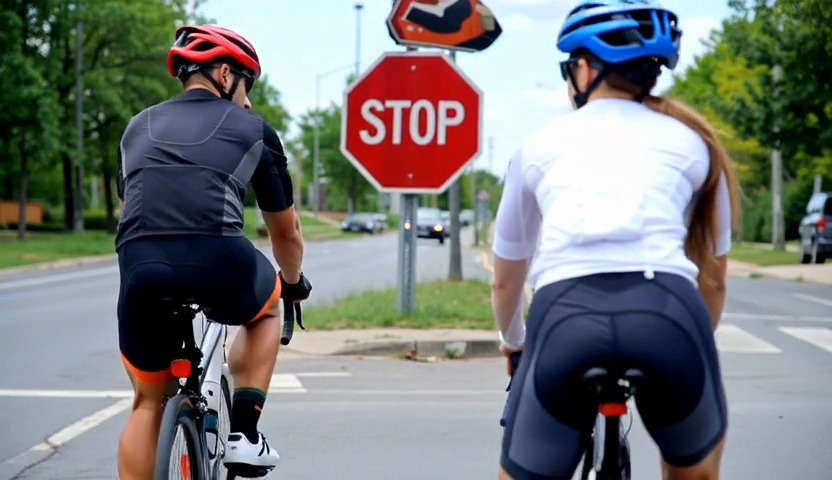}
      & \framecell{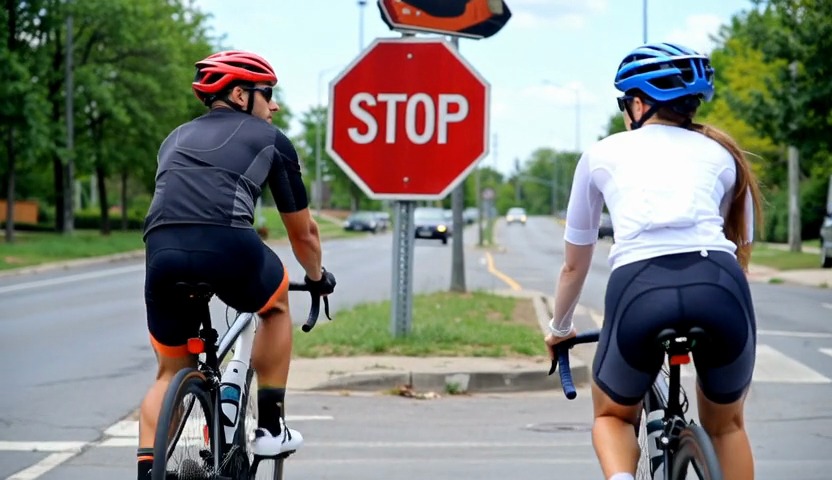}
      & \framecell{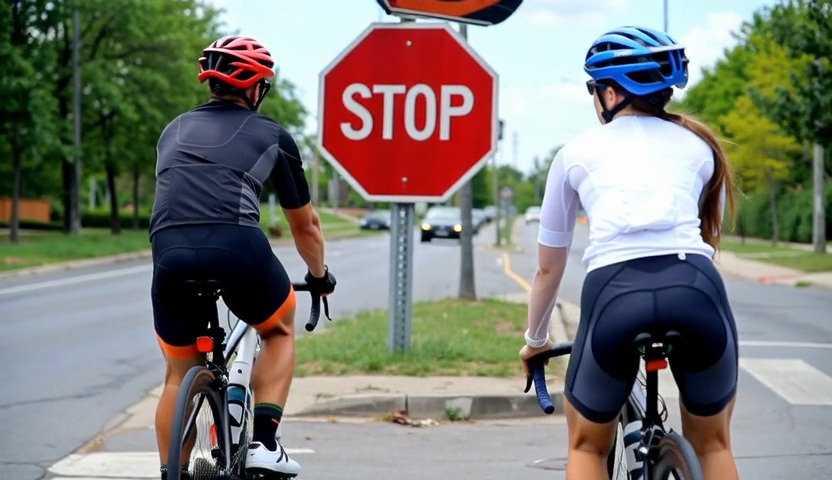}
      & \framecell{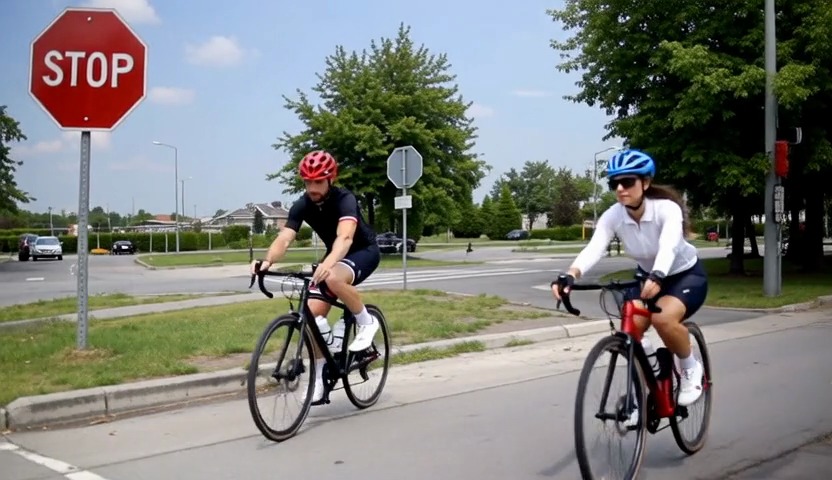}
      & \framecell{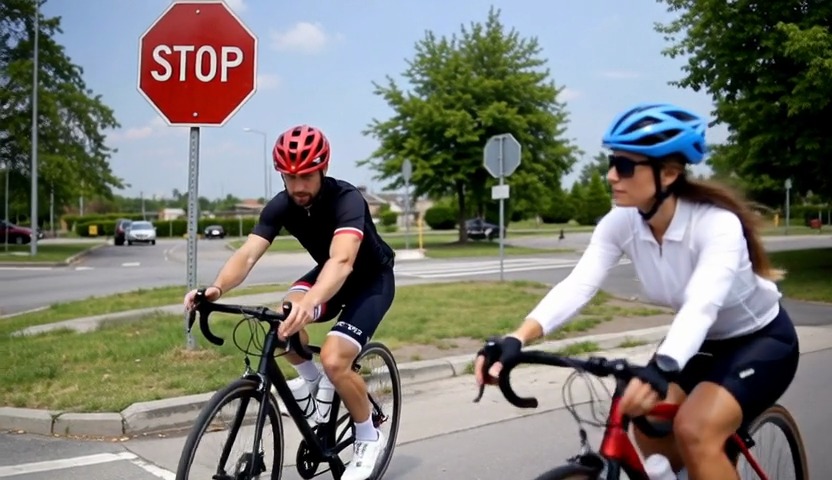}
      & \framecell{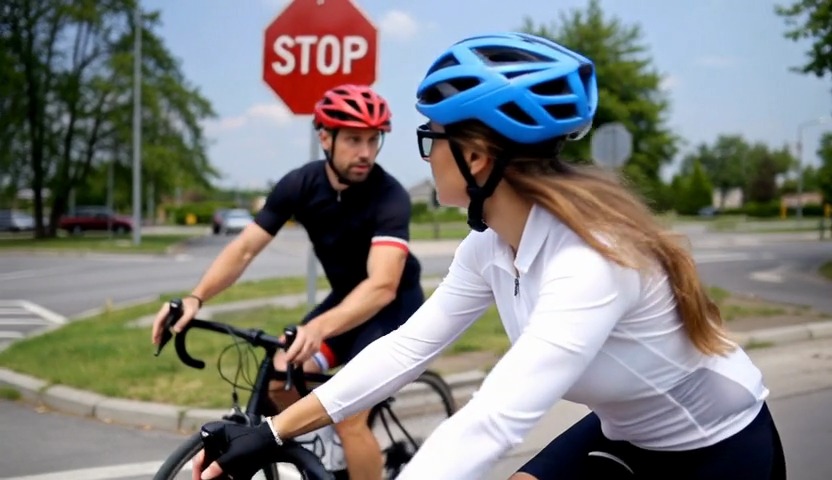} 
      \\ \addlinespace[2pt]
    DMD2
      & \framecell{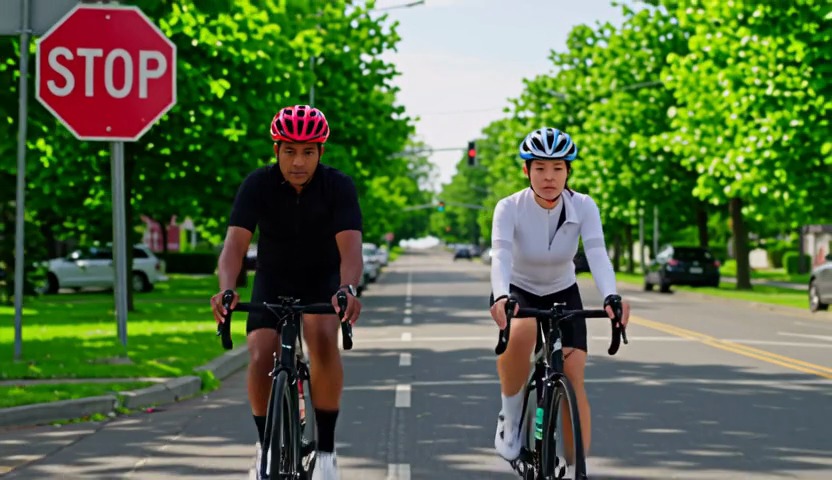}
      & \framecell{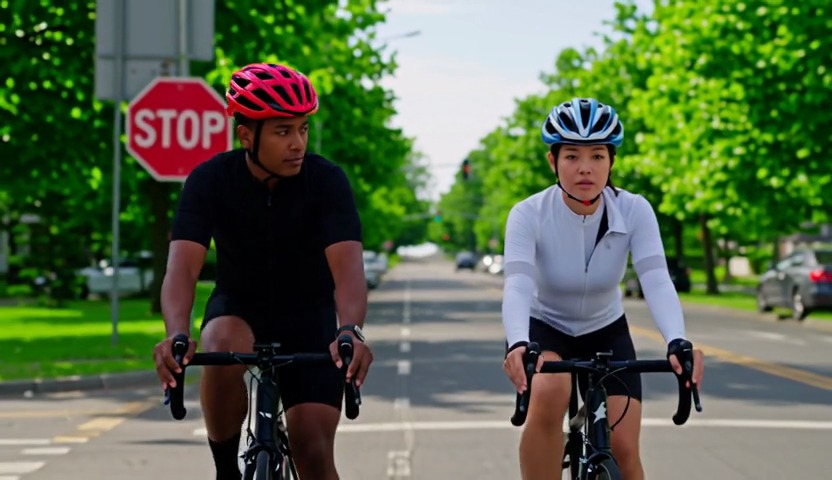}
      & \framecell{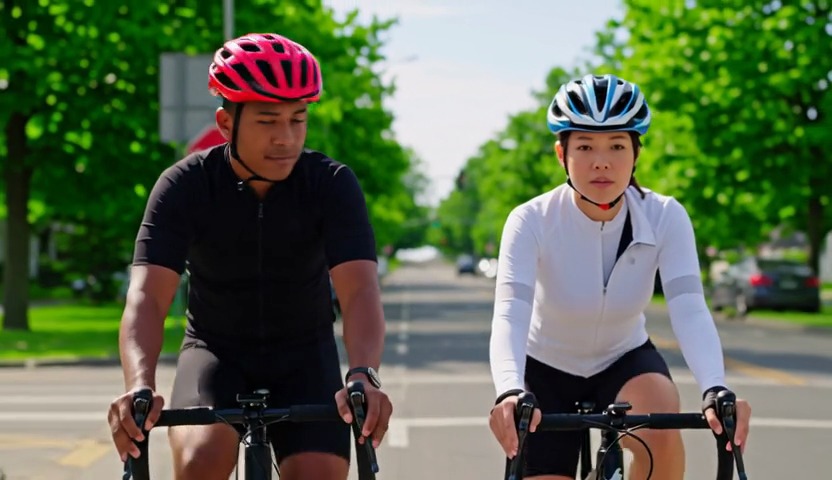}
      & \framecell{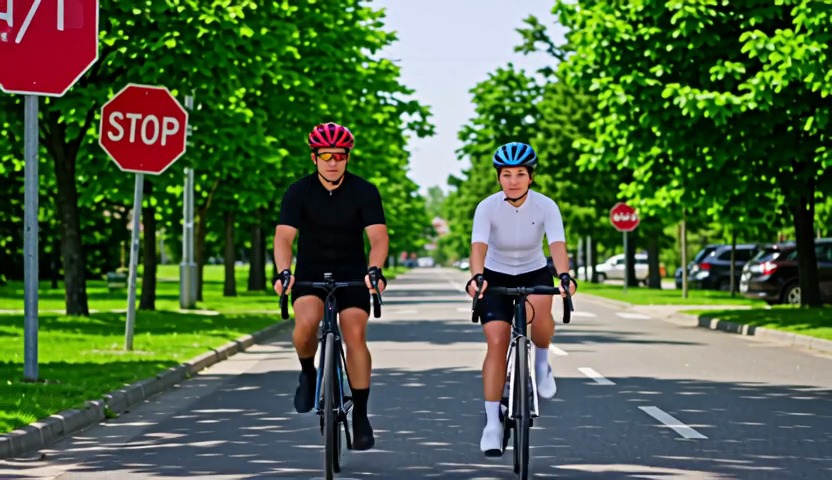}
      & \framecell{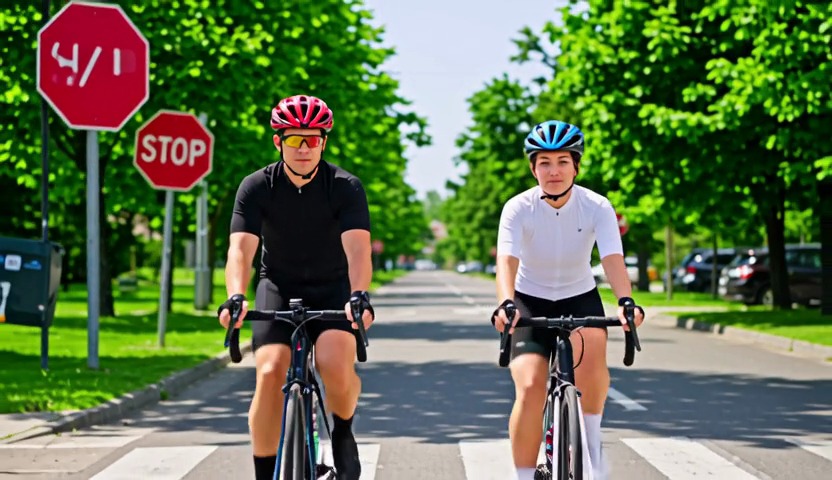}
      & \framecell{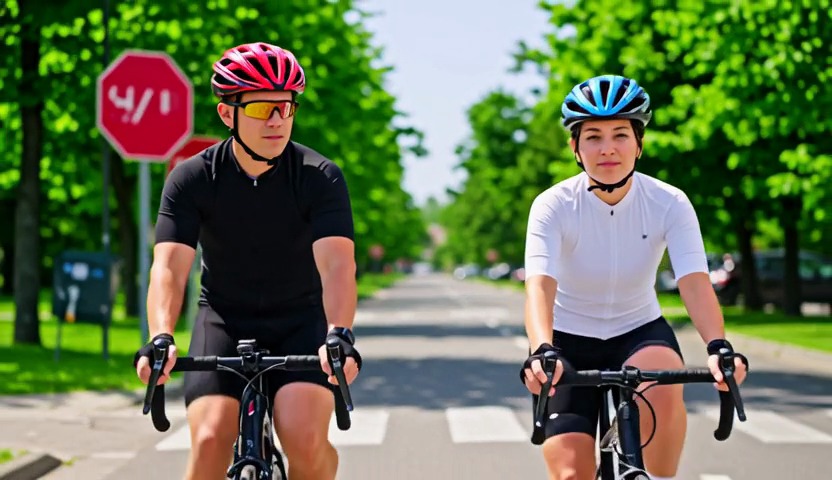} 
      \\ \addlinespace[2pt]
    \hspace{11pt}AnyFlow
      & \framecell{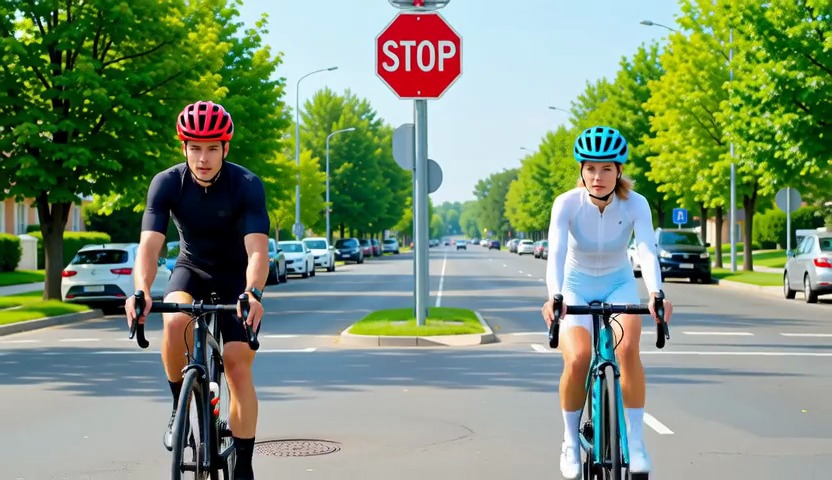}
      & \framecell{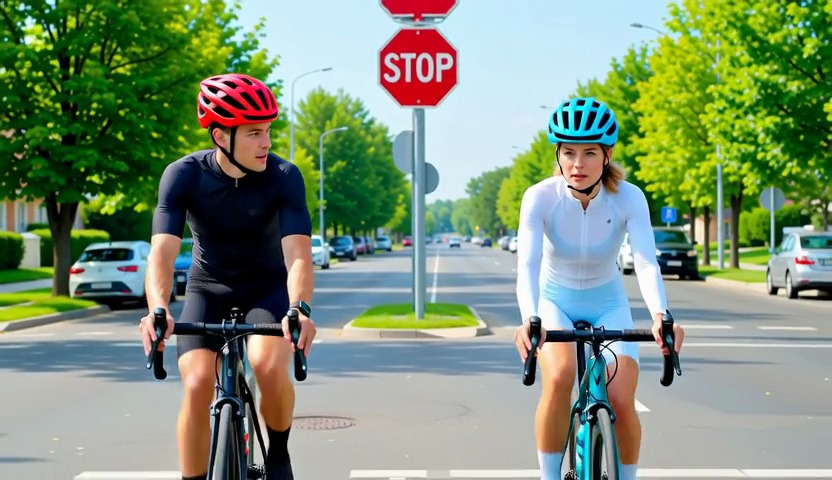}
      & \framecell{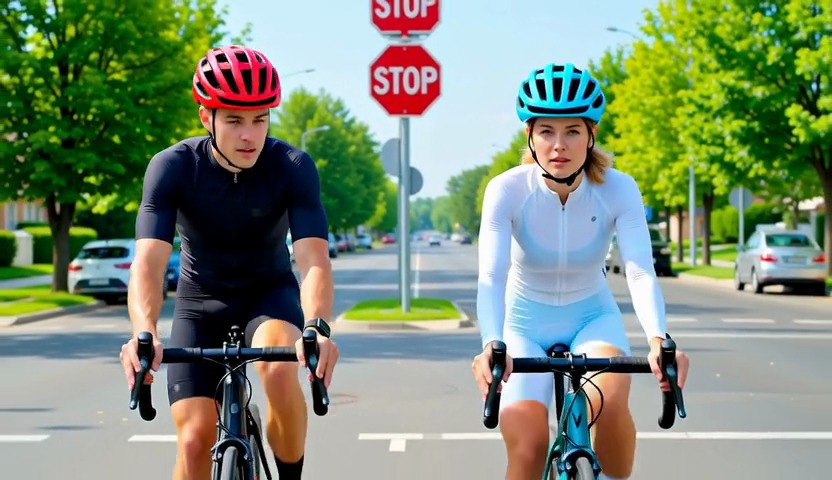}
      & \framecell{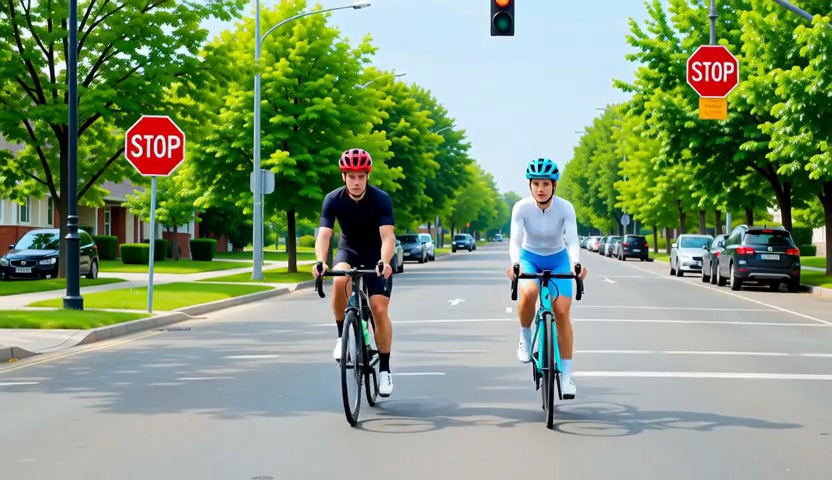}
      & \framecell{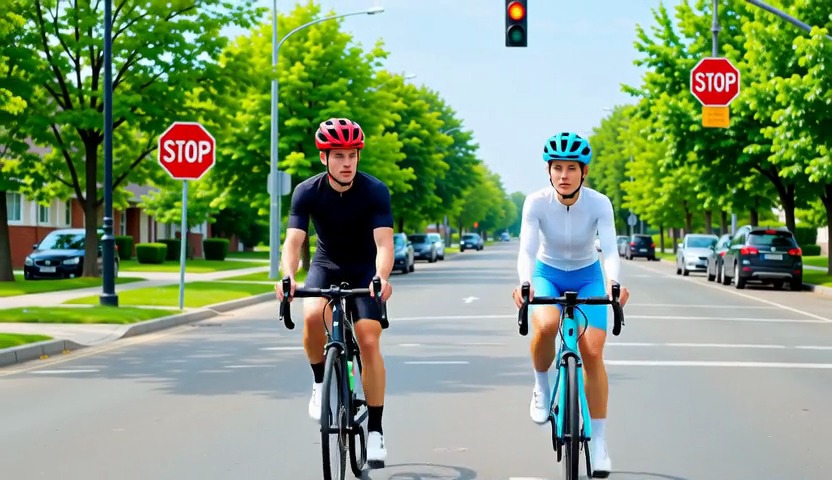}
      & \framecell{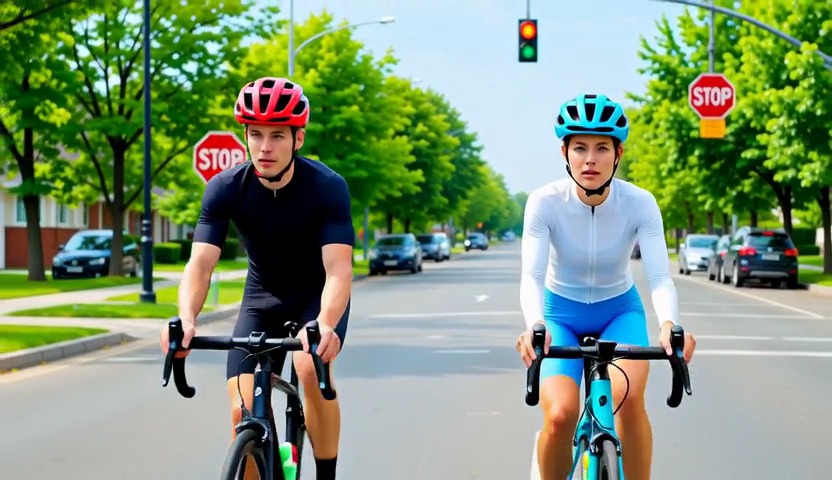} 
      \\ \addlinespace[2pt]
      & \multicolumn{6}{@{} p{16.6cm} @{}}{\tiny\itshape ``Two cyclists, one male and one female, are approaching a stop sign at the intersection from opposite directions. The male cyclist is wearing a red helmet and black cycling gear, while the female cyclist is wearing a blue helmet and white cycling clothes. They are both pedaling towards the stop sign, gradually slowing down as they near the intersection. The male cyclist is glancing left, checking for traffic, while the female cyclist is focused straight ahead. The background shows a quiet suburban street with green trees and parked cars. The video captures their approach in a tracking shot, gradually zooming in on each cyclist as they slow to a stop, maintaining eye-level perspectives.''} \\
  \end{tabular}
  }\vspace{-0.3em}
  \captionof{figure}{
  \textbf{(top)} 10s 720p videos with audio from LTX-2.3 teacher ($4\times 30$ NFE), Parallel Decoding Distillation (PDD) (8 NFE), and the official distilled model (8 NFE). PDD can follow the teacher more closely. \textbf{(bottom)} PDD vs.\@ DMD2 and AnyFlow on the Wan2.1 Text-to-Video 14B model. All methods use 4 NFE. Notably, PDD demonstrates high video quality while preserving better generation diversity across distinct initial noises compared to the baselines. %
  \label{fig:teaser}
  }
\end{strip}

\section{Introduction}
\vspace{-5pt}
Large-scale diffusion and flow models~\citep{ho2020ddpm,song2021scorebased,lipman2023flowmatching, liu2022straightfast,albergo2025stochasticinterpolants} have achieved remarkable capabilities of media generation, including text-to-image~\citep{wu2025qwenimage,flux-2-2025}, text-to-video~\cite{kong2025hunyuanvideo,wan2025wan2p1, nvidia2026cosmos3omnimodalworld}, and multi-modal generation~\citep{hacohen2026ltx2}. Yet, the cost of generation remains high, as their inherently iterative sampling algorithms often require hundreds of network evaluations. The resulting computational cost and latency are one of the main bottlenecks for many applications, such as content editing, real-time video generation, and interactive world modeling. Thus, developing distillation methods for few-step media generation models has become an active field of research~\citep{sauer2024ladd,yin2024dmd2,xu2025fdistill, lu2025scm,lee2025decoupledmf,chen2025piflow,park2026eflow,zheng2025rcm,tong2025freeflow,nie2026tmd,gu2026anyflow,xiao2026spd}.

The approaches for distilling diffusion and flow models are broadly categorized into two families: i) \emph{trajectory-based} methods~\citep{salimans2022progressivedistillation,song2023consistencymodels,frans2025shortcutmodels,sabour2025ayf,geng2025meanflow,chen2025piflow} in which a student model distills the many-step sequential sampling process of a pre-trained teacher into a few-step process; and ii) \emph{distribution-based} methods~\citep{sauer2023add,wang2023vsd,yin2024dmd,xu2025fdistill}, which relax the constraint of following the teacher trajectories and instead align only the marginal distributions of the student and teacher processes. Trajectory-based methods have shown promising results for fast image generation. However, when applied to video models, they are typically bottlenecked by degraded video quality and costly training algorithms. As a result, the current dominant methods for distillation of video models are distribution-based~\citep{fan2026phaseddmd,nie2026tmd}. While recent works regularize the training dynamics by additionally incorporating trajectory-based distillation losses~\cite{zheng2025rcm,gu2026anyflow}, they still suffer from alternating training objectives, high memory requirements, and mode collapse, leading to reduced diversity and often static videos. 

In this paper, we introduce \emph{parallel decoding distillation} (PDD), a trajectory-based distillation method for fast inference of diffusion and flow models. Instead of merging multiple denoising steps into a single larger step~\citep{salimans2022progressivedistillation,frans2025shortcutmodels,lu2025scm,zhou2026tvm}, 
we learn a parallel decoder that predicts multiple denoising steps in a single network evaluation. Specifically, PDD discretizes the time domain of the flow into a fixed sequence of $N$ intervals, which are grouped into blocks of size $L$. During training, the parallel decoder learns to predict the mean velocities in all intervals within a block using a single forward pass, as illustrated in Figure~\ref{fig:pd_sketch}. Target mean velocities are approximated using a Runge-Kutta solver (e.g., Euler or Midpoint) applied to the pre-trained teacher model. At generation, by predicting the velocities across $L$ intervals, we obtain a sample with $N/L$ steps. By varying the  block size during training, PDD supports sampling with different number of function evaluations (NFEs) during inference.
\begin{figure}
    \centering
    \includegraphics[
    width=1.0\linewidth,
    trim={0pt 0pt 20pt 0cm}, %
    clip
    ]{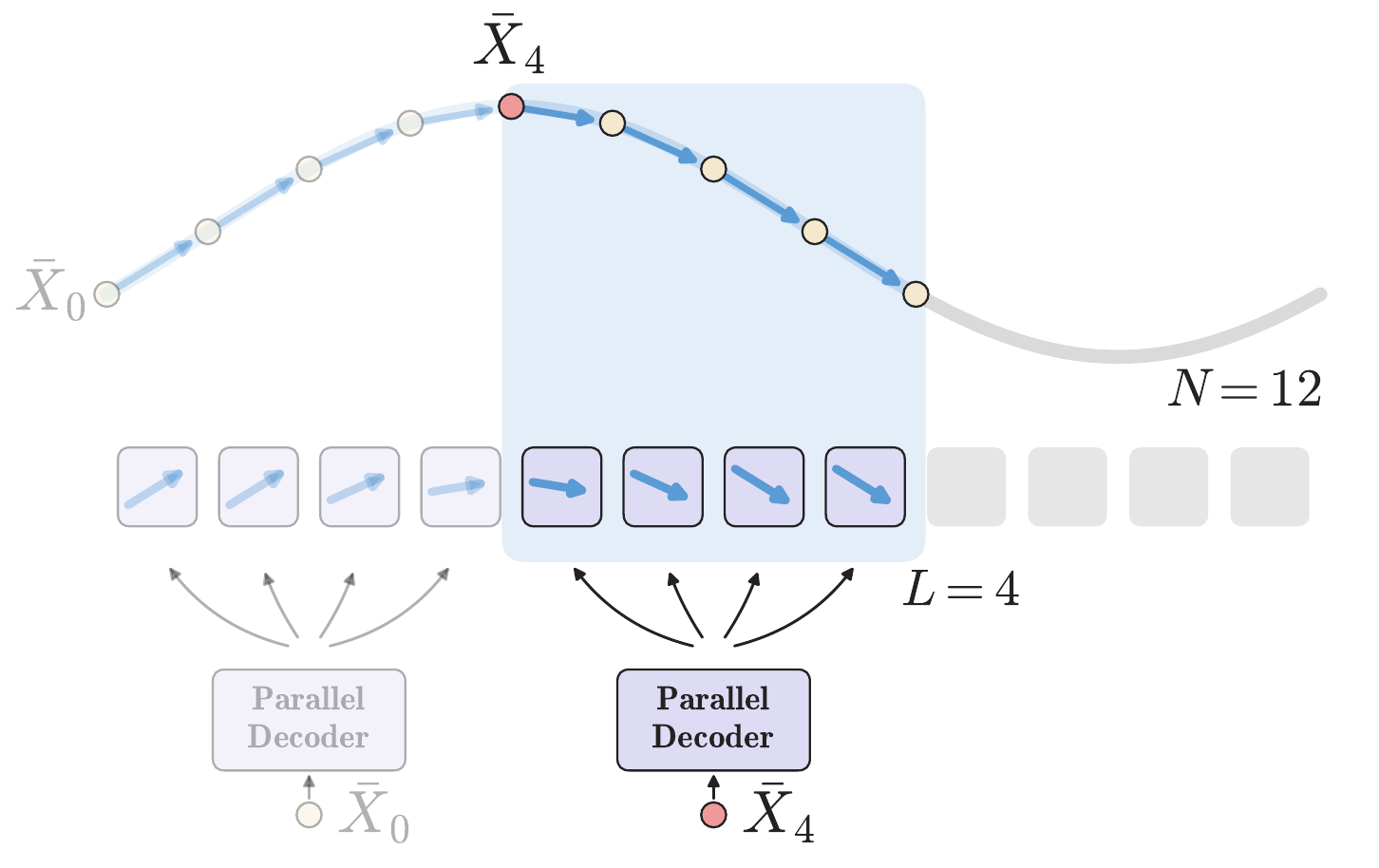}
    \vspace{-25pt}
    \caption{The sampling trajectory is discretized into $N$ intervals, which are grouped into blocks of size $L$. The parallel decoder predicts the mean velocities for all intervals within a block using a single evaluation.
    }
    \label{fig:pd_sketch}
    \vspace{-15pt}
\end{figure}
PDD is conceptually related to Pi-Flow~\citep{chen2025piflow}, but provides a simplified training algorithm that removes the need for an additional policy head and supports a variable NFEs at generation time. Additionally, our formulation clarifies the connection to flow map distillation methods, particularly the Lagrangian formulation~\citep{zhou2026tvm,boffi2025flowmaps}. By avoiding Jacobian-vector products (JVPs) and finite differences, we obtain minimal-cost training algorithm.

We showcase the efficacy of our method on text-to-video tasks with Wan2.1~\citep{wan2025wan2p1} 1.3B and 14B models, achieving SOTA video quality generation with $4$ NFE on VBench~\citep{huang2023vbench} while preserving better video diversity compared to distribution-based baselines. Additionally, we test PDD on text-to-image tasks with the Qwen-Image 20B model, achieving SOTA scores with 4 to 8 NFE on OneIG~\citep{chang2025oneig}, GenEval~\citep{ghosh2023geneval}, and DPG-Bench~\citep{hu2024ella} benchmarks.

All in all our main contributions are:
\begin{enumerate}[itemsep=4pt,]
    \vspace{-10pt}
    \item Formulate Parallel Decoding Distillation, a scalable, trajectory-based distillation method for fast inference of flow matching and diffusion models.

    \item Develop a single, regression-based training objective that avoids the need for JVPs, finite differences, or multi-stage training procedures and produces high-quality and diverse samples without VSD or GAN losses.
    
    \item Propose a simple architecture and training algorithm that are supported by any pre-trained model and allows generation with varying NFE without additional time conditioning. 
    
    \item Validate our method on ImageNet-256, Qwen-Image, Wan2.1 1.3B/14B, and LTX-2.3, achieving SOTA performance with improved diversity.
\end{enumerate}

\section{Generative Flow Models}
\paragraph{Notation} The state space is denoted by $\gX$, where $\gX=\R^{c\times h\times w}$ for images and $\gX=\R^{f\times c\times h\times w}$ for videos. Random variables taking values in $\gX$ are denoted by the uppercase letter $X$, and states in $\gX$ are denoted by the lowercase letter $x$. Fixed integers are denoted by the uppercase letters $N,L\in\sN$, while running indices or integer-valued random variables are denoted by the lowercase letters $n,k$.

\paragraph{Flow Matching and Diffusion Models} The currently dominant approaches for training generative flow models are flow matching~\cite{lipman2023flowmatching,liu2022straightfast, albergo2025stochasticinterpolants} and diffusion models~\citep{sohldickstein2015diffusionmodels, ho2020ddpm, song2021scorebased}. As we are interested only in deterministic processes, for simplicity, we treat both as flows~(\ref{e:flow_process}). 

A flow process $\parr{X_t}_{0\le t\le 1}$ taking values in $\gX$ is defined by a \emph{velocity field} $v:\gX\times[0,1]\too\gX$ and a \emph{source distribution} $p_0:\gX\too\R_{\ge0}$, which serves as a boundary condition setting the marginal of the process at time $t=0$:
\begin{equation}\label{e:flow_process}
    \frac{d}{dt}X_t = v_t\parr{X_t},\quad X_0\sim p_0.
\end{equation}
The marginal of the process, called the \emph{probability path}, is a time-dependent density $p:\gX\times[0,1]\too\R_{\ge0}$ such that $X_t\sim p_t$ for all $t\in[0,1]$.

Assume a dataset of i.i.d.\@ samples in $\gX$ from a \emph{target distribution} $p_{1}$ and some easy-to-sample \emph{source distribution} $p_0$. Flow matching provides a framework for learning a model $v_t$ such that the flow $\parr{X_t}_{0\le t\le 1}$ defined by \eqref{e:flow_process} maps samples from the source $X_0\sim p_0$ to samples from the target $X_1\sim p_1$.

Importantly, during training, the marginal $p_t$ of a flow matching model can be sampled efficiently using the \emph{interpolant process}. While our method is agnostic to the chosen interpolant process, all pre-trained models used in this work were trained using the linear scheduler,
\begin{equation}\label{e:x_cond}
    X_t = (1-t)X_0 + tX_1,
\end{equation}
where $X_0\sim p_0$, $X_1\sim p_1$, and $t\in [0,1]$.
\paragraph{Sampling with flows} Obtaining a sample $X_1\sim p_{1}$ from a trained flow model $v_t$ is done by solving the ODE in \eqref{e:flow_process}. A general numerical approach discretizes the time interval $[0,1]$ into a sequence of $N$ smaller intervals, $0=t_0<t_1<\cdots<t_N=1$. Then, the exact solution on each interval is
\begin{equation}\label{e:sol_exact}
    X_{n+1} = X_{n} + \parr{t_{n+1}-t_{n}} u_{n}\parr{X_{n}},
\end{equation}
where we use the simplified notation $X_n:=X_{t_n}$, and $u_n$ denotes the \emph{mean velocity} of the $n$-th interval $[t_n,t_{n+1}]$, defined as
\begin{equation}\label{e:mean_vel}
   u_{n}\parr{X_{n}}  = \frac{1}{t_{n+1}-t_n}\int_{t_n}^{t_{n+1}}v_t\parr{X_t}\,dt.
\end{equation}
The solution is obtained by sequentially approximating the integral in \eqref{e:mean_vel}. The simplest numerical method is the Euler solver, which approximates the velocity $v_t$ as constant over the interval $t\in[t_n,t_{n+1}]$, yielding the mean velocity
\begin{equation}\label{e:sol_euler}
    u_{n}\parr{X_{n}} \approx v_{t_n}\parr{X_{n}}.
\end{equation}
Runge-Kutta methods are a family of higher-order solvers that use additional evaluations of the velocity $v_t$ in the interval $[t_n,t_{n+1}]$ to achieve a higher-order approximation. We use the Midpoint method,
\begin{equation}\label{e:sol_midpoint}
    u_{n}\parr{X_{n}} \approx v_{t_{\text{mid}}}\parr{X_{\text{mid}}},
\end{equation}
where $X_{\text{mid}}$ and $t_{\text{mid}}$ are the midpoint state and time, respectively,
\begin{equation*}
    X_{\text{mid}} = X_n + \frac{t_{n+1}-t_n}{2}v_{t_n}\parr{X_{n}},\quad t_{\text{mid}} = \frac{t_{n+1}+t_n}{2}.
\end{equation*}
\begin{figure*}
    \centering
    \includegraphics[width=0.48\linewidth]{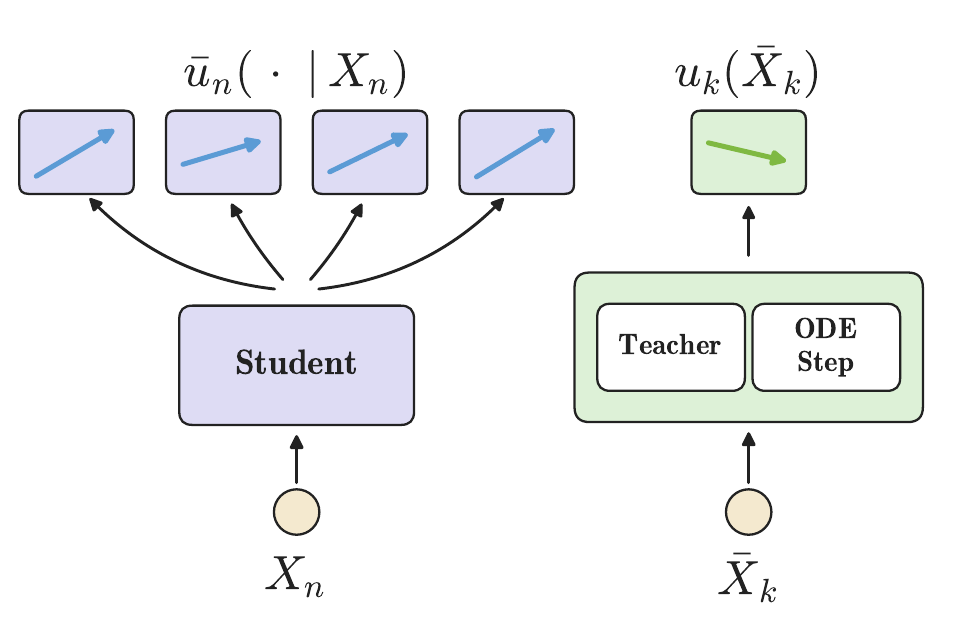}
    \includegraphics[width=0.48\linewidth]{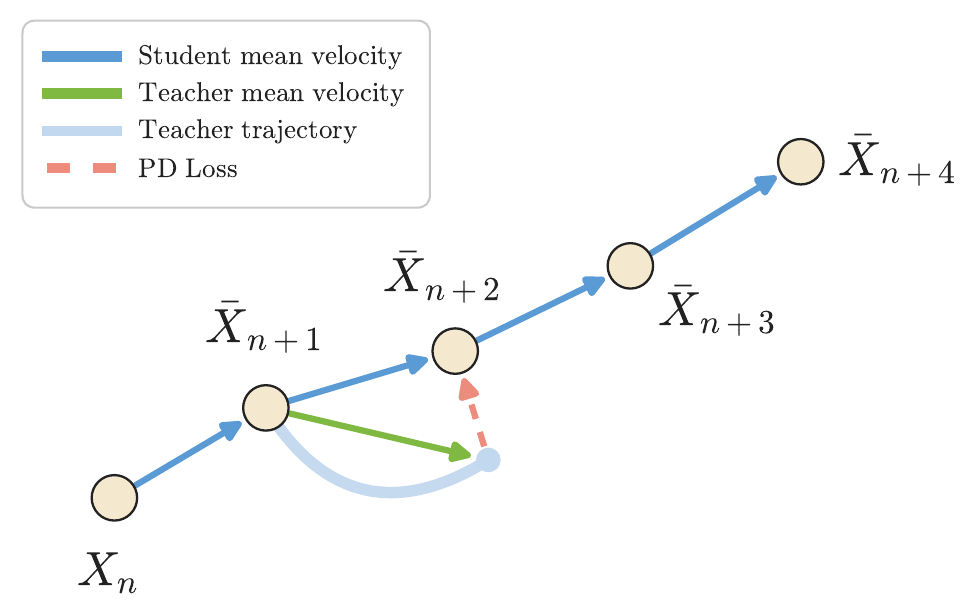}
    \vspace{-5pt}
    \caption{\textbf{(left)} The PDD student approximates the mean velocity across multiple consecutive intervals in a single evaluation. The pre-trained flow model (teacher) provides the mean velocity of a single interval using an ODE solver step. \textbf{(right)} Illustrate estimation of the PD loss given an initial state $X_n\sim p_{t_n}$ and a block size $L=4$; i) student predicts the mean velocities $\bar{u}^{\theta}_n\parr{\cdot|X_n}\in\gX^L$; ii) Following the student velocities yields the states $\bar{X}_{k}$ in the block $k\in\set{n,\ldots,n+L-1}$; iii) one of the states is randomly selected and the student's output velocity is matched to the teacher's mean velocity in the corresponding interval using the PD loss~(\ref{e:pd_loss}).}
    \label{fig:pd_loss}
    \vspace{-10pt}
\end{figure*}
\section{Parallel Decoding Distillation}
\label{s:pdd}
To accelerate sampling from flow models, we propose to learn a parallel decoding model that predicts multiple integration steps~(\ref{e:sol_exact}) in a single network evaluation, rather than approximating them one step at a time. 
Figure~\ref{fig:pd_sketch} illustrates our approach.

\paragraph{Parallel decoder.}
Assume a pre-trained flow model $v_t$ with a flow process $\parr{X_t}_{0\le t\le 1}$ defined by \eqref{e:flow_process} and marginals $p_t$. 
Fix a time discretization of length $N\in\sN$,
\begin{equation}\label{e:time_disc}
    0=t_0 < t_1 < \ldots < t_N=1.
\end{equation}
Then, a \emph{block} of size $L$ starting at step $n$ is the set of indices $\set{n,\ldots,n+L-1}$.
\begin{myframe}
    For a state $X_{n} \sim p_{t_n}$ at time step $t_n$, a \emph{parallel decoder} $\bar{u}^\theta_{n}\parr{\cdot\mid X_{n}}\in\gX^L$ with block size $L\in\sN$ is trained to predict the mean velocities of all intervals in the next block using a single network evaluation,
\begin{equation}\label{e:pd_def}
    \bar{u}^\theta_{n}\parr{k \mid X_{n}} \approx u_{k}\parr{X_{k}},\quad k=n,\ldots,n+L-1.
\end{equation}
\end{myframe}
Importantly, our parallel decoder~(\ref{e:pd_def}) is well-defined, since the discretized flow process in this block, i.e., $X_{k}$, $k\in\set{n,\ldots,n+L-1}$, is fully specified by the exact solution~(\ref{e:sol_exact}) and the initial state $X_{n} \sim p_{t_n}$.
\paragraph{The parallelized process} During training, we employ an on-policy optimization algorithm that requires the intra-block process given by the parallel decoding model. 
\begin{myframe}
    For a state $X_{n} \sim p_{t_n}$ at time step $t_n$, the \emph{parallelized process} is defined by
    \begin{equation}\label{e:pd_process}
        \bar{X}_{k+1} = \bar{X}_{k} + (t_{k+1}-t_k)\bar{u}^\theta_{n}\parr{k \mid X_{n}},
    \end{equation}
    for the block $k\in\set{n,\ldots,n+L-1}$, with the initial condition $\bar{X}_{n} = X_{n}$.
\end{myframe}
The parallelized process is obtained by substituting the parallel decoder into the update rule of the exact solution~(\ref{e:sol_exact}). Notably, $\bar{u}^\theta_{n}\parr{\cdot \mid X_{n}}\in\gX^L$ depends only on the initial state $X_n$. Thus, the parallelized process $\parr{\bar{X}_k}_{n\le k\le n+L}$ is simulated using a single evaluation of the parallel decoder.
\vspace{-5pt}
\paragraph{Sampling} While classical algorithms advance a single interval at each ODE step using the recursive rule~(\ref{e:sol_exact}), with the parallel decoder we can advance $L$ intervals simultaneously. For a current state $X_{n}\sim p_{t_n}$, we approximate the exact solution~(\ref{e:sol_exact}) in the block $\set{n,\ldots, n+L-1}$ using the parallelized process~(\ref{e:pd_process}). Then, solving the recursion over the intra-block index $k$, yields the \emph{block-step rule}
\begin{equation}\label{e:block_step}
    \bar{X}_{n+L} = X_n + \sum_{k=n}^{n+L-1}(t_{k+1}-t_k)\bar{u}^\theta_{n}\parr{k | X_{n}}.
\end{equation}
By repeating the recursive block-step rule~(\ref{e:block_step}) $N/L$ times, in each step approximating $\bar{X}_n\approx X_n$, we obtain a clean sample from the model $\bar{X}_N$. We provide a PyTorch pseudocode in Algorithm~\ref{alg:parallel_decoding_sampling}, where dimension 0 indexes the parallel predictions $k=0,\ldots,N-1$.

\begin{algorithm}[H]
\caption{Parallel decoding sampling}
\label{alg:parallel_decoding_sampling}
\begin{lstlisting}[language=ModernPython]
# student - parallel decoder model
# t - time discretization
# L - block size
# shape - data shape

# init noise
x_n = randn(*shape)
# step sizes
h = diff(t, dim=0)
for n in range(0, len(t)-1, step=L):
    # parallel predictions
    u = student(x_n, t[n])
    # slice current block
    u_n, h_n = u[n:n+L], h[n:n+L]
    # block step
    x_n = x_n + einsum('k,k...', h_n, u_n)
    
return x_n
\end{lstlisting}
\end{algorithm}
\begin{figure*}
    \centering
    \begin{subfigure}[b]{0.33\linewidth}
        \centering
        \includegraphics[width=\linewidth]{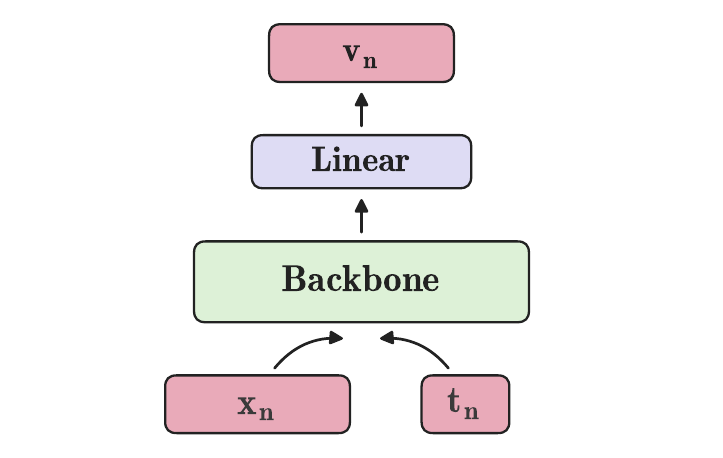}
        \caption{\centering Teacher}
        \label{subfig:teacher}
    \end{subfigure}%
    \begin{subfigure}[b]{0.33\linewidth}
        \centering
        \includegraphics[width=\linewidth]{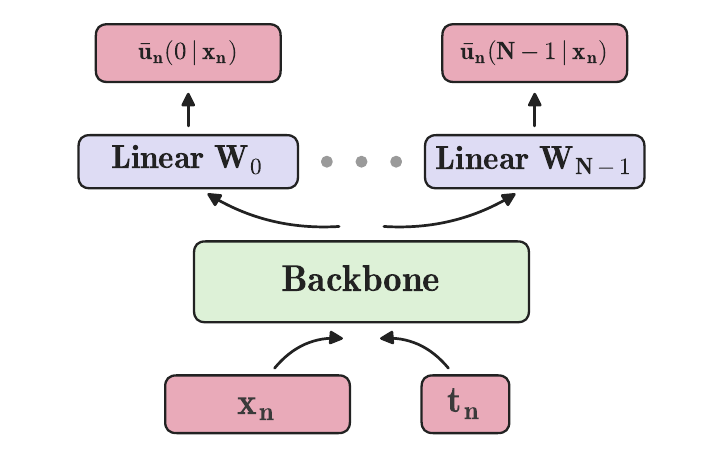}
        \caption{\centering Student train}
    \end{subfigure}
    \begin{subfigure}[b]{0.33\linewidth}
        \centering
        \includegraphics[width=\linewidth]{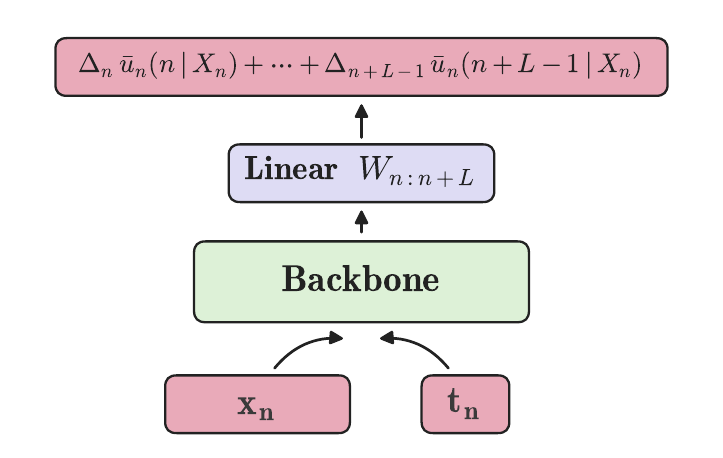}
        \caption{\centering Student generation}
    \end{subfigure}
    \caption{Architecture of the parallel decoder (b) vs.~the pre-trained flow model (a). Notably, the parallel decoder utilizes the same backbone, but with $N$ times the final linear layer. (c) At generation, instead of applying a linear layer per intra-block step~(\ref{e:pd_process}), we can fuse the layers into a single linear layer that outputs the block's average velocity, which advances the state across the entire block~(\ref{e:fused_step}).}
    \label{fig:architecture}
\end{figure*}
\vspace{-25pt}
\paragraph{Training} The parallel decoder is trained by regressing onto a Runge-Kutta approximation of the mean velocity~(\ref{e:pd_def}) of the pre-trained teacher flow model. In practice, we use a single Euler or Midpoint step. To obtain a tractable loss, we employ on-policy training, estimating the teacher's mean velocity on student outputs $\bar{X}_k$ as defined in~(\ref{e:pd_process}).
\begin{myframe}
For a time discretization $t_0<\ldots<t_N$ and a block size $L$, our training objective is
\begin{equation}\label{e:pd_loss}
    \gL_{\text{PD}}\parr{\theta} = \E\brac{\norm{\bar{u}^\theta_{n}\parr{k | X_{n}} - u_k\parr{\text{sg}\parr{\bar{X}_k}}}^2},
\end{equation}
where block 
starting indices $n\in\set{0,L, \ldots, N-L}$ and intra-block indices $k\in\set{n,\ldots,n+L-1}$ are sampled uniformly, $X_n\sim p_{t_n}$ is sampled using the interpolant process~(\ref{e:x_cond}), $\bar{X}_k$ is the parallelized process~(\ref{e:pd_process}) in the block, $\text{sg}(\cdot)$ denotes the stop-gradient operator, and the teacher mean velocity $u_k$ is approximated with a Runge-Kutta step.
\end{myframe}

Importantly, both $\bar{u}^\theta_{n}\parr{k | X_{n}}$ and $\bar{X}_k$ are obtained with a single evaluation of the parallel decoder $\bar{u}^\theta_n$. Approximating the mean velocity $u_k$ with a single Euler~(\ref{e:sol_euler}) or Midpoint~(\ref{e:sol_midpoint}) step requires 1-2 evaluations of the teacher model $v$ (resp.). This makes the parallel decoding (PD) loss~(\ref{e:pd_loss}) tractable even at large scales. Figure~\ref{fig:pd_loss} illustrates evaluation of the PD loss and we provide PyTorch pseudocode in Algorithm~\ref{alg:parallel_decoding_training}. For simplicity, we ignore the batch dimension, however all operations can be batched.

Proposition~\ref{prop:prop_pd_loss} shows that the PD loss is a valid objective for learning the parallel decoder as defined in~(\ref{e:pd_def}). In particular, up to a controllable Runge-Kutta approximation error, the minimizer of the PD loss samples exactly the teacher trajectories, i.e., $\bar{X}_n = X_n$, for $n=0,\dots,N$. The proof is in Appendix~\ref{a:proofs}.
\begin{proposition}\label{prop:prop_pd_loss}
    The minimizer of the parallel decoding loss~(\ref{e:pd_loss}) satisfies the parallel decoder condition~(\ref{e:pd_def}). 
\end{proposition}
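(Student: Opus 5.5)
We argue that the global minimizer makes the loss vanish, and then show by induction over the intra-block index $k$ that a zero loss forces the parallelized process~(\ref{e:pd_process}) to coincide with the exact discretized flow. Throughout we treat the teacher mean velocity $u_k$ as exact, i.e., we ignore the Runge-Kutta error as the statement intends. We also assume the parallel decoder is expressive enough to represent any measurable map $X_n\mapsto \bar{u}_n(\cdot\mid X_n)\in\gX^L$.

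\textbf{Step 1: the loss can be driven to zero.} Define the candidate decoder $\bar{u}^\ast_n(k\mid X_n) := u_k(X_k)$, where $X_k$ is the exact solution~(\ref{e:sol_exact}) started from $X_n$ at time $t_n$. Since the block trajectory $X_n,\dots,X_{n+L}$ is a deterministic function of $X_n$, this candidate is well-defined, as remarked after~(\ref{e:pd_def}). Plugging it into~(\ref{e:pd_process}) gives $\bar{X}_{k+1}=\bar{X}_k+(t_{k+1}-t_k)u_k(X_k)$. By induction from $\bar{X}_n=X_n$ we get $\bar{X}_k=X_k$ for all $k$ in the block. Hence every term in~(\ref{e:pd_loss}) is $\|u_k(X_k)-u_k(X_k)\|^2=0$. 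The loss is nonnegative, so any minimizer $\theta^\ast$ attains zero loss.

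\textbf{Step 2: zero loss implies~(\ref{e:pd_def}).} The indices $n$ and $k$ are sampled uniformly with positive probability on each value. So zero expected loss means that for every block start $n$, every $k\in\set{n,\dots,n+L-1}$, and $p_{t_n}$-almost every $X_n$, we have $\bar{u}^{\theta^\ast}_n(k\mid X_n)=u_k(\bar{X}_k)$. The stop-gradient only affects optimization dynamics and does not change which functions achieve zero loss.

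We then induct on $k$. At $k=n$, $\bar{X}_n=X_n$, so $\bar{u}^{\theta^\ast}_n(n\mid X_n)=u_n(X_n)$. If $\bar{X}_k=X_k$ and $\bar{u}^{\theta^\ast}_n(k\mid X_n)=u_k(X_k)$, then by~(\ref{e:pd_process}) and~(\ref{e:sol_exact}),
\begin{equation*}
\bar{X}_{k+1}=X_k+(t_{k+1}-t_k)u_k(X_k)=X_{k+1}.
\end{equation*}
The zero-loss identity then gives $\bar{u}^{\theta^\ast}_n(k+1\mid X_n)=u_{k+1}(X_{k+1})$. This yields~(\ref{e:pd_def}) with equality for all $k$ in the block. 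Chaining blocks through the block-step rule~(\ref{e:block_step}) gives $\bar{X}_n=X_n$ at all $n=0,\dots,N$, as claimed in the text.

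\textbf{Main obstacle.} The delicate point is the on-policy nature of the target. The regression target $u_k(\bar{X}_k)$ depends on $\theta$ through $\bar{X}_k$, so this is not a standard conditional-expectation argument, and the target is not fixed a priori. The induction in Step 2 resolves this: zero loss is a pointwise condition that self-consistently pins down the outputs one intra-block index at a time.

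For the approximate case, I would replace $u_k$ by the Runge-Kutta step $\hat{u}_k$. The same argument then shows that the minimizer reproduces the Runge-Kutta trajectory exactly. Comparing this trajectory to the exact one is a standard local-truncation-error estimate, $O(h^{p+1})$ per step, combined with Lipschitz stability of the teacher velocity $v_t$ over $L$ steps. This is the "controllable" error mentioned before the statement.
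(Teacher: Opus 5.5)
Your proposal is correct and follows essentially the same route as the paper's proof: at a zero-loss global minimizer one gets $\bar{u}^{\theta^\star}_n(k\mid X_n)=u_k(\bar{X}_k)$ for every $k$ in the block, and induction on $k$ through the parallelized process~(\ref{e:pd_process}) and the exact solution~(\ref{e:sol_exact}) gives $\bar{X}_k=X_k$, hence~(\ref{e:pd_def}). The paper simply assumes realizability (global minimum equal to zero), whereas your Step~1 derives it by exhibiting the exact decoder under an expressivity assumption; the paper also leaves implicit the almost-everywhere qualifiers, the chaining across blocks, and the Runge--Kutta error remark that you add.
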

\vspace{-18pt}
\paragraph{Data-free training} In cases where data is not available, we propose an online on-policy training scheme that avoids the need to sample $X_n\sim p_{t_n}$ with the interpolant process~(\ref{e:x_cond}). Instead, we follow our sampling algorithm~\ref{alg:parallel_decoding_sampling}, while alternating solver and training steps. That is, we sample an initial state $X_0\sim p_0$, then for the next $N/L -1$ iterations, we utilize the output of the parallel decoder $\bar{u}^{\theta}_n\parr{\cdot|X_n}$ both for the optimization step and to advance $\bar{X}_n$ to $\bar{X}_{n+L}$; see the modified PyTorch pseudocode for PD loss in Algorithm~\ref{alg:datafree_parallel_decoding_training} in the appendix. Importantly, we apply the stop-gradient operation when advancing the state, preventing additional memory or compute cost. 

\begin{algorithm}[t]
\caption{Parallel decoding loss}
\label{alg:parallel_decoding_training}
\begin{lstlisting}[language=ModernPython]
# student - parallel decoder model
# teacher - pre-trained flow model
# runge_kutta - approximates teacher's mean velocity
# t - time discretization
# L_min - minimal block size
# L_max - maximal block size
# x - data

# grid size
N = len(t) - 1
# sample block index
n = L_min * randint(0, N//L_min)
# sample the probability path
x_n = (1-t[n]) * randn_like(x) + t[n] * x
# parallel predictions 
u = student(x_n, t[n])
# step sizes
h = diff(t, dim=0)
# sample index in the block
k = randint(n, clip(n+L_max, max=N))
# step from n to k
x_k = x_n + einsum('l,l...', h[n:k], u[n:k])
# teacher mean velocity
u_k = runge_kutta(teacher, x_k, t[k], h[k])
# mse loss
loss = mse_loss(u[k], u_k.detach())

return loss
\end{lstlisting}
\end{algorithm}
\paragraph{Architecture and variable block size} Our requirement is an architecture that predicts $L$ mean-velocities $\bar{u}^{\theta}_n\parr{\cdot|X_n}\in\gX^L$, i.e., one velocity for each time step in the block, instead of the single instantaneous velocity prediction $v_{t_n}\parr{X_{t_n}}\in\gX$ of the pre-trained flow model. As illustrated in Figure~\ref{fig:architecture}, we utilize the same backbone architecture of the pretrained flow model, but with the final linear layer repeated $N$ times, \ie, one for each time step in the grid~(\ref{e:time_disc}). Formally, we assume the teacher architecture is of the form
\begin{equation}
    v_{t}(x) = WH_{t}(x),
\end{equation}
where $H_t$ is the backbone that outputs the final hidden state and $W$ is the final linear layer. Then we learn $N$ linear layers $W^\theta_0,\ldots,W^\theta_{N-1}$ and, for $k\ge n$, the parallel decoder's architecture is given by
\begin{equation}\label{e:arch_pd}
    \bar{u}^\theta_n(k|x_n) = W^{\theta}_kH^\theta_{t_n}(x_n).
\end{equation}
This enables initialization from the final layer of the pretrained flow model. Additionally, the advantage of taking $N$ (grid size) instead of exactly $L$ (block size) linear layers is that it allows us to learn a single model that can predict any block size without the need to introduce a second time coordinate, which is, for instance, required for flow maps~\citep{geng2025meanflow, sabour2025ayf, boffi2025flowmaps}.

In practice we are not interested in all possible block sizes, but in some subset of block sizes. Thus, we define a minimum and maximum block size $L_{\text{min}} \le L_{\text{max}}\in\sN$. Then, during training, we consider multiples of $L_{\text{min}}$ for indices $n$ of initial states and sample $k\in\set{n,\ldots,n+L_{\text{max}}-1}$ inside each block. 

\subsection{Layer Fusion and connection to flow maps}
An important observation emerges when comparing PDD training and generation. For a block size $L$ and starting step $n$, estimating the PD loss~(\ref{e:pd_loss}) requires the intra-block steps of the parallelized process~(\ref{e:pd_process}), and thus uses all distinct student output directions $W^{\theta}_kH^\theta_{t_n}\parr{X_n}$, for $k=n,\ldots,n+L-1$. In contrast, during generation, we use the block step~(\ref{e:block_step}) to skip $L$ intervals. This requires only the weighted-average direction, which, for our architecture~(\ref{e:arch_pd}), yields
\begin{equation}\label{e:fused_step}
    \bar{X}_{n+L} = \bar{X}_n +  (t_{n+L}-t_n) W^\theta_{n:n+L}H^\theta_{t_n}\parr{\bar{X}_n},
\end{equation}
where $W^\theta_{n:n+L}$ is a fused linear layer,
\begin{equation}\label{e:fused_layer}
    W^\theta_{n:n+L} =
    \sum_{k=n}^{n+L-1}
    \Delta_kW^\theta_k,\ \Delta_k=\frac{t_{k+1}-t_k}{t_{n+L}-t_n}.
\end{equation}
Thus, our shared backbone $H^\theta_{t_n}$ learns a representation of the mean-velocity over the interval $[t_n,t_{n+L}]$. However, instead of using JVPs~\citep{sabour2025ayf,geng2025meanflow,boffi2025flowmaps, zhou2026tvm} or finite differences~\citep{luo2026soflow}, we use the learnable linear maps $W_k^\theta$, for $k=n,\ldots,n+L-1$, to decompose the mean-velocity prediction into parallel sub-interval predictions. Then, during training, the gradients through the shared backbone recover, in expectation, the training signal for learning the full-interval mean-velocity. A discussion of the exact connection to flow maps is provided in Appendix~\ref{a:flow_maps}. We validate that the parallel decoder can learn non-trivial trajectories by comparing the curvature of its trajectories with that of the teacher trajectories in Figure~\ref{fig:pdd_curvature} in the appendix.

An additional practical implication is that during inference we can avoid the extra compute of an enlarged final layer and we only need to hold one fused linear layer per block in memory.

\begin{table}[t]
\centering
\caption{
High-level differences between flow-map distillation methods, Pi-Flow, and our PDD.
}
\label{tab:highlevel}
\resizebox{1.0\linewidth}{!}{%
\begin{tabular}{lccc}
\toprule
 & Eulerian/Lagrangian Flow Maps & Pi-Flow & PDD (Ours) \\
\midrule
NFE & Variable & Fixed & Variable \\
JVP/finite-diff. & Required & Free & Free \\
Head at inference & Linear & Gaussian mixture & Fused-linear \\
\bottomrule
\end{tabular}
}
\end{table}
\section{Related works} 

\paragraph{Trajectory-based distillation} The first successful method to distill the trajectories of the flow ODE~(\ref{e:flow_process}) using a student-teacher scheme to achieve few-step generation is Progressive Distillation~\citep{salimans2022progressivedistillation}. They gradually increase the step size of the student using multi-phase training. Consistency models~\citep{song2023consistencymodels,luo2023lcm,song2023ict,lu2025scm,geng2024cmeasy} initialize the student from the teacher and then directly distill a map from any intermediate state along the trajectory to the clean state using a self-consistency condition. More recent works~\citep{tee2024pid,boffi2025emd,sabour2025ayf,tong2025freeflow, hu2026cmt, lee2025decoupledmf, sun2026rcgm, cai2025scfm}, distill the mean velocity between any two states on the trajectory. 

These methods have shown promising results on image generation. However, on large scale video models they fail to achieve high-quality few-step generation. Additionally, they often rely on JVP or finite differences which are expensive to evaluate on large-scale models or yield unstable training dynamics.

Most related to our method is Pi-Flow~\cite{chen2025piflow}. Similar to us, they make the observation that given a pre-trained flow model, the trajectories in the interval $[t_n,t_{n+L}]$, are fully specified by the initial state, and use it to delegate the integration in that interval to a small learnable policy head. In contrast, we utilize it to motivate our parallel prediction paradigm, simplifying training and inference, and changing the focus from expressive parametrizations (such as Gaussian mixtures) to improved supervision. While Pi-Flow distills the continuous-time instantaneous velocity~(\ref{e:flow_process}) $v$, we go beyond by discretizing time and distilling numerical approximations of the mean velocity~(\ref{e:mean_vel}) $u$. Furthermore, using layer fusion~(\ref{e:fused_step}) we avoid any additional cost at generation. Lastly, our training algorithm naturally enables sampling with different NFEs, whereas Pi-Flow is restricted to fixed NFE. The high-level differences are summarized in Table~\ref{tab:highlevel}.

\begin{figure*}[t]
  \centering

  \resizebox{\textwidth}{!}{%

  \begin{tabular}{@{} r @{\hspace{6pt}} c@{\hspace{2pt}}c@{\hspace{2pt}}c @{\hspace{4pt}} c@{\hspace{2pt}}c@{\hspace{2pt}}c @{\hspace{6pt}} c @{}}
                & \multicolumn{3}{c}{\large Noise 1}
                & \multicolumn{3}{c}{\large Noise 2} & \\

                \cmidrule(l{6pt}r{2pt}){2-4} \cmidrule(l{2pt}r{6pt}){5-7}

                & $t{=}0.5$s & $t{=}2.5$s & $t{=}4.5$s & $t{=}0.5$s & $t{=}2.5$s & $t{=}4.5$s & \\
    \textbf{PDD}
      & \framecell{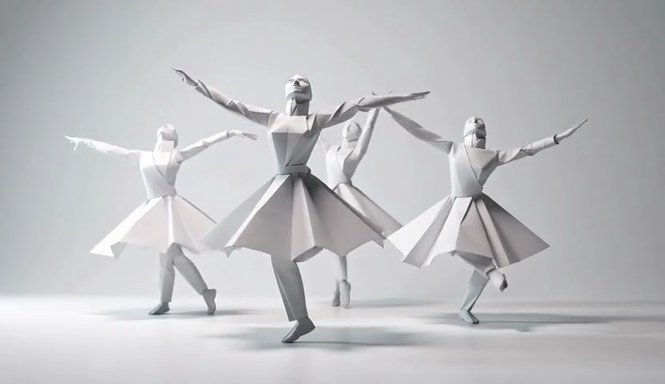}
      & \framecell{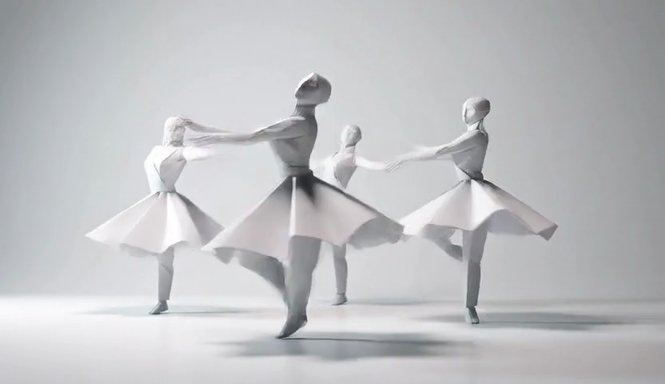}
      & \framecell{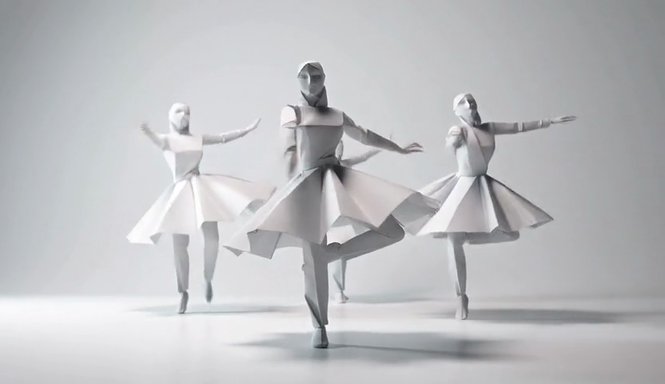}
      & \framecell{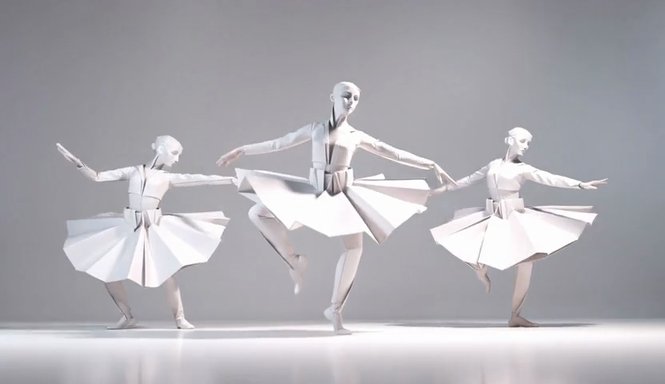}
      & \framecell{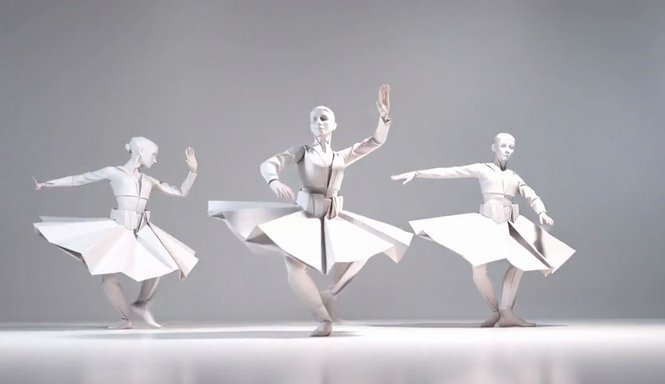}
      & \framecell{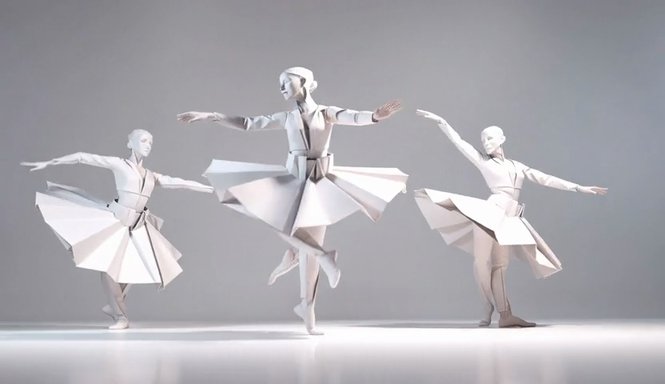} 
      & \phantom{AnyFlow} \\ \addlinespace[2pt]
    DMD2
      & \framecell{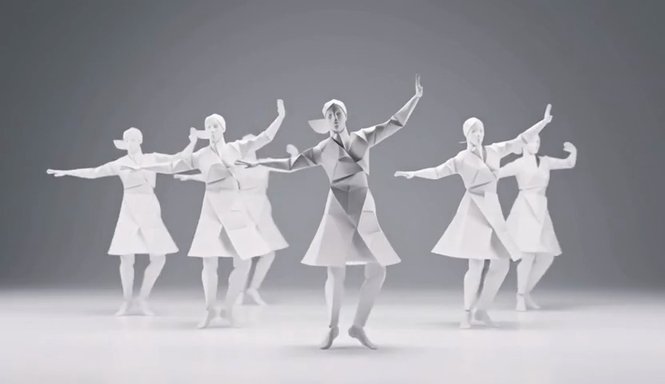}
      & \framecell{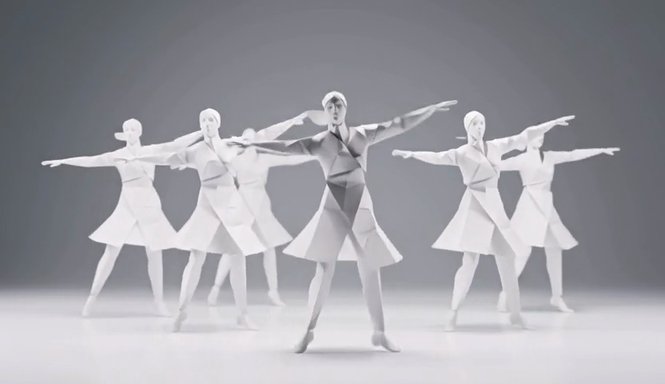}
      & \framecell{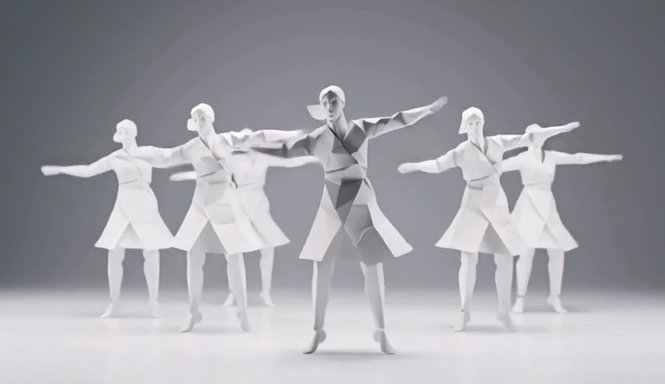}
      & \framecell{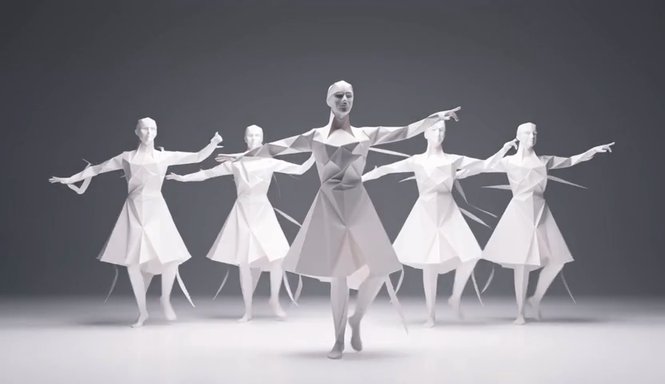}
      & \framecell{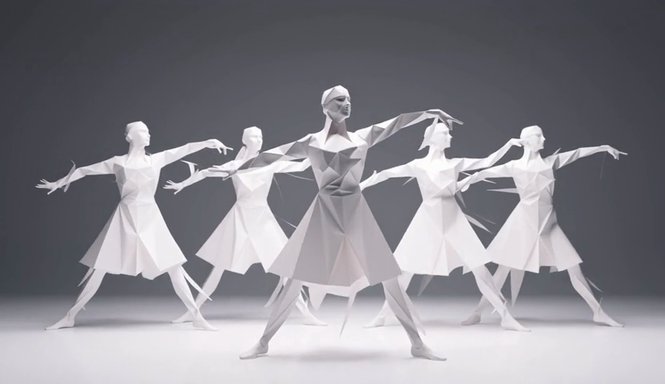}
      & \framecell{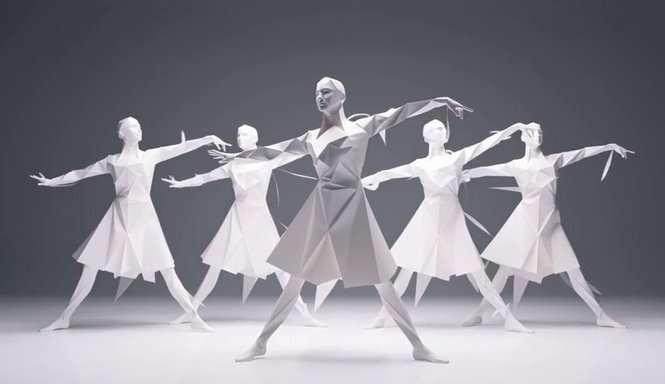} 
      & \phantom{AnyFlow} \\ \addlinespace[2pt]
    AnyFlow
      & \framecell{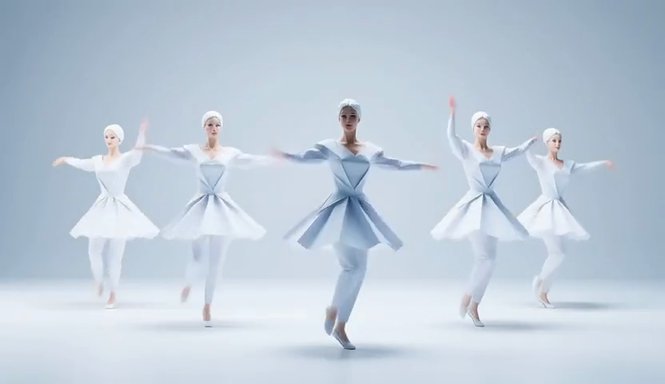}
      & \framecell{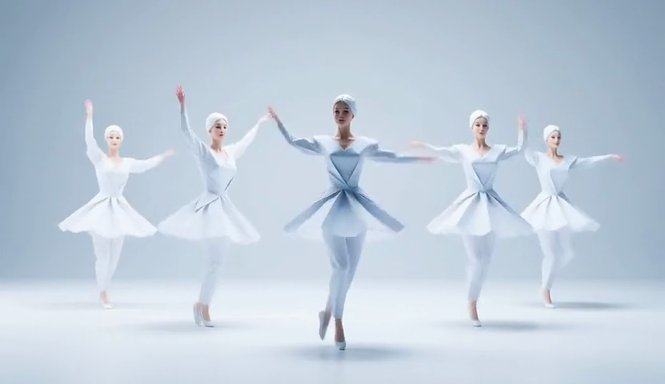}
      & \framecell{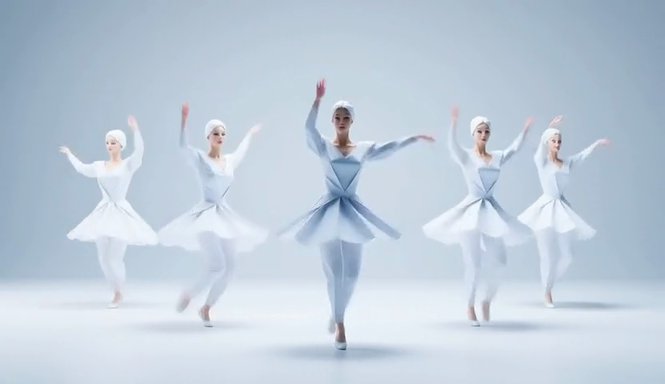}
      & \framecell{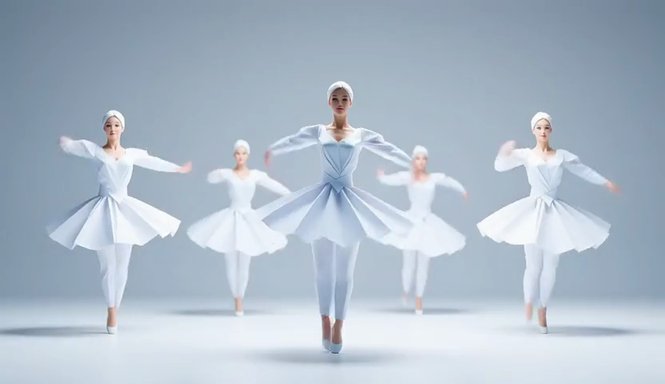}
      & \framecell{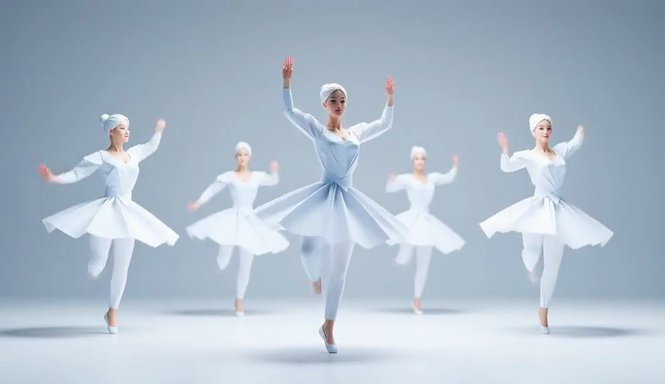}
      & \framecell{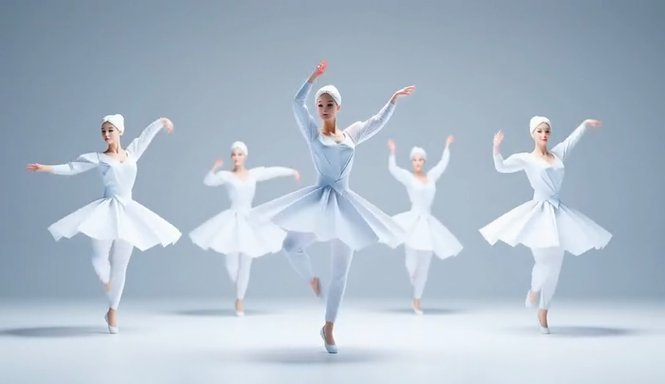} 
      & \phantom{AnyFlow} \\ \addlinespace[2pt]
      & \multicolumn{6}{@{} p{16.6cm} @{}}{\footnotesize\itshape ``3D render of origami dancers made from white paper, performing a modern dance routine against a pristine white background in a studio setting. Each dancer has delicate folds and creases that catch the light, enhancing their ethereal appearance. They are gracefully moving in synchronized motions, expressing fluidity and elegance through their postures. The scene focuses on a medium close-up to capture the intricate details of the dancers' movements and the soft, clean background.''} & \\ \addlinespace[2pt]
    \textbf{PDD}
      & \framecell{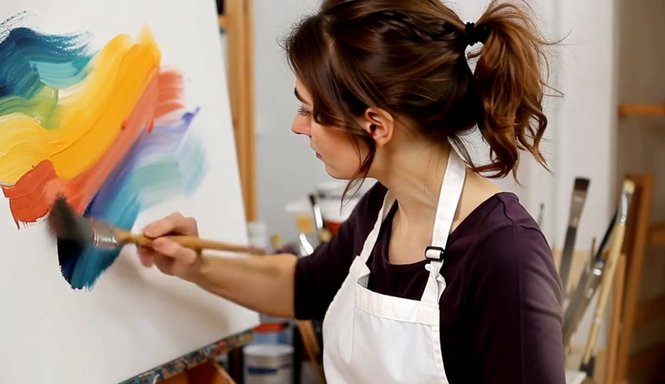}
      & \framecell{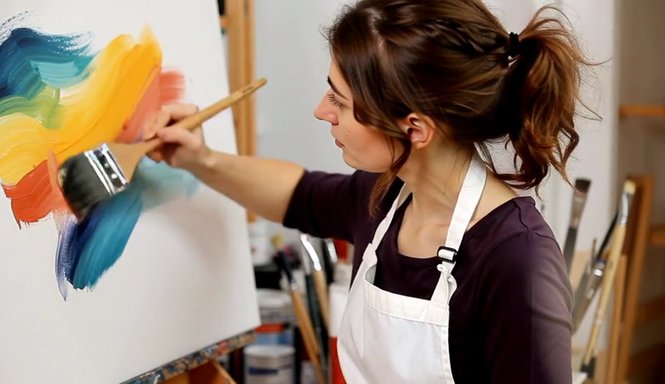}
      & \framecell{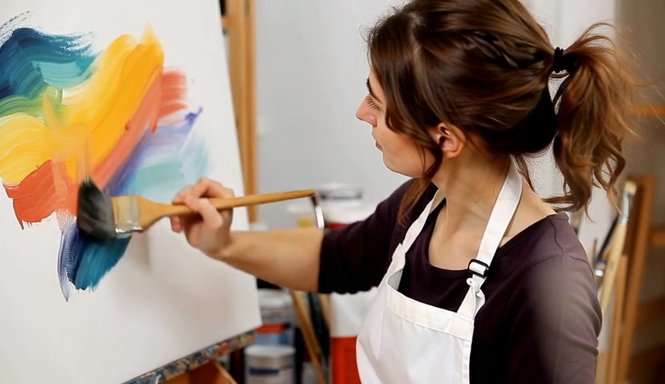}
      & \framecell{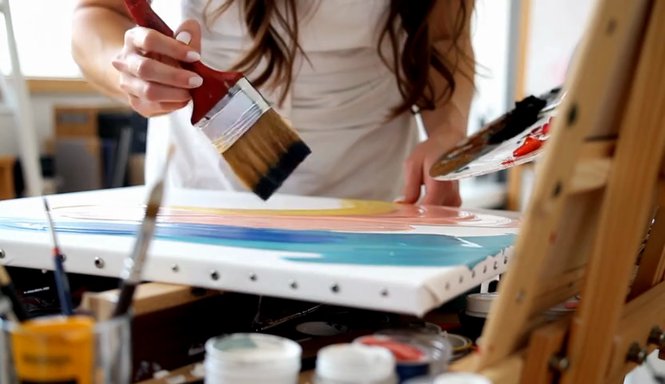}
      & \framecell{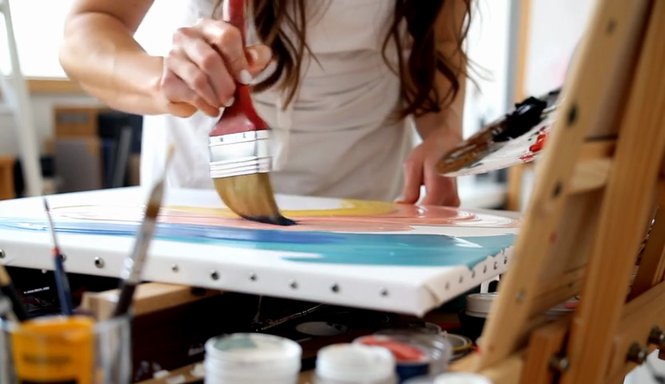}
      & \framecell{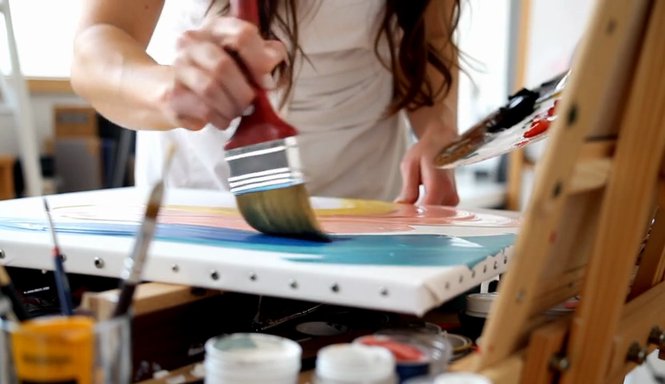} 
      & \phantom{AnyFlow} \\ \addlinespace[2pt]
    DMD2
      & \framecell{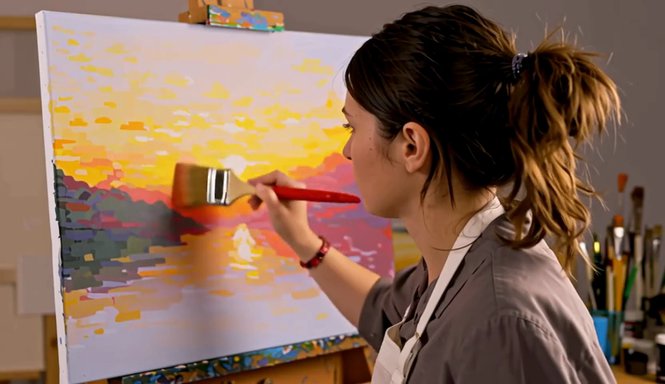}
      & \framecell{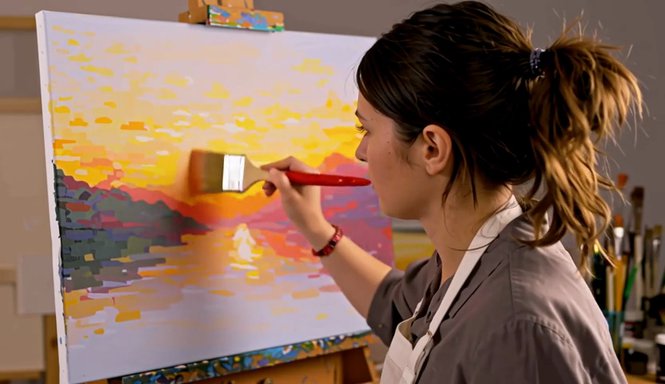}
      & \framecell{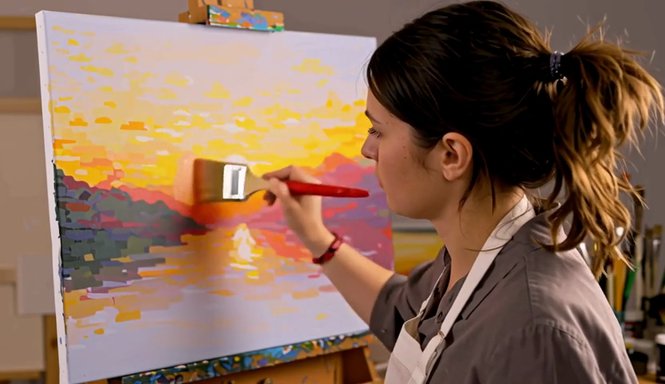}
      & \framecell{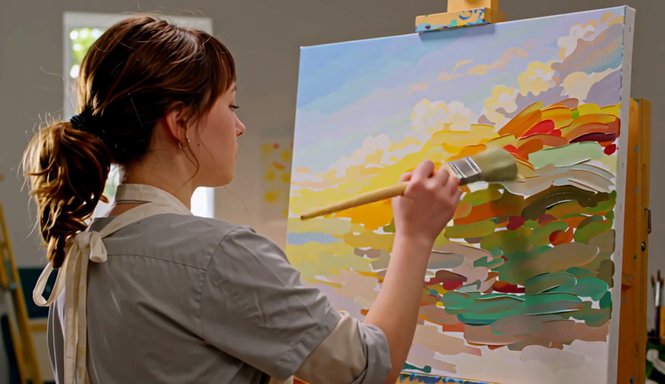}
      & \framecell{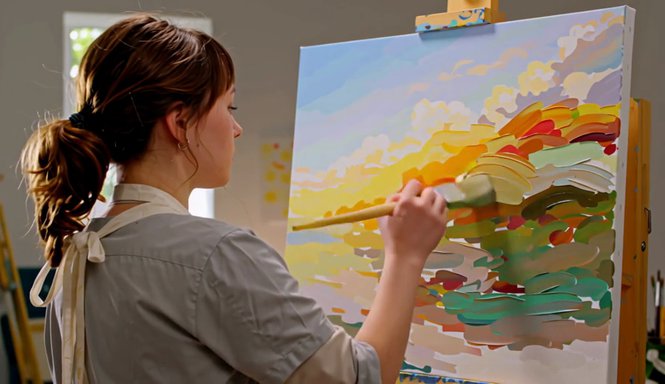}
      & \framecell{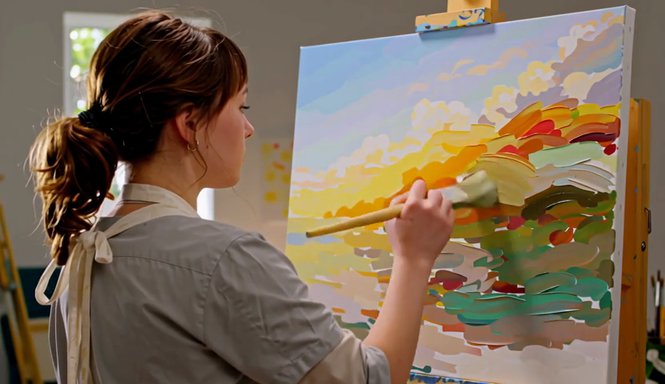} 
      & \phantom{AnyFlow} \\ \addlinespace[2pt]
    AnyFlow
      & \framecell{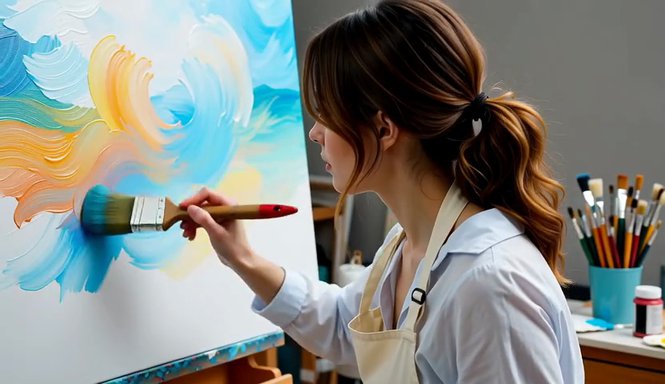}
      & \framecell{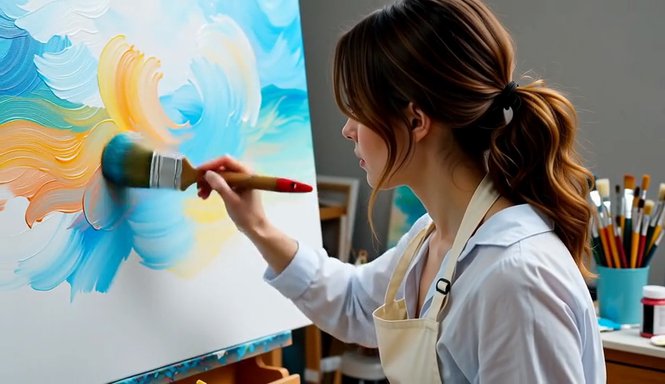}
      & \framecell{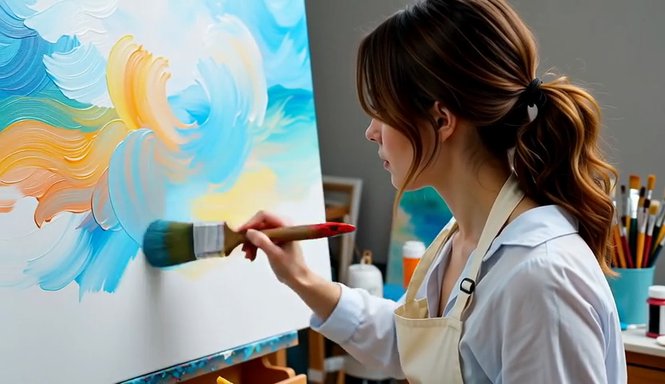}
      & \framecell{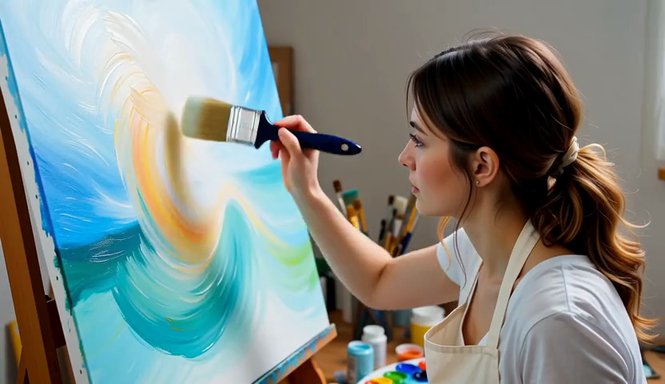}
      & \framecell{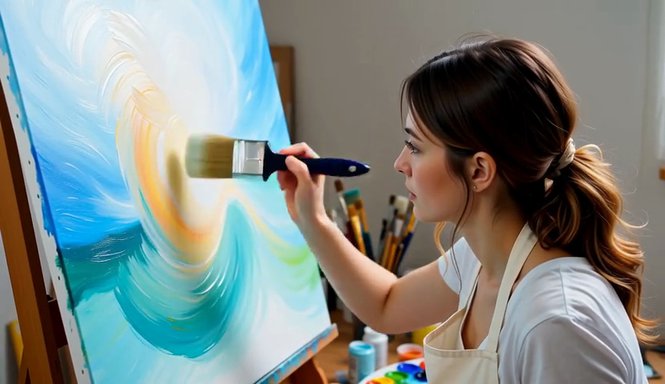}
      & \framecell{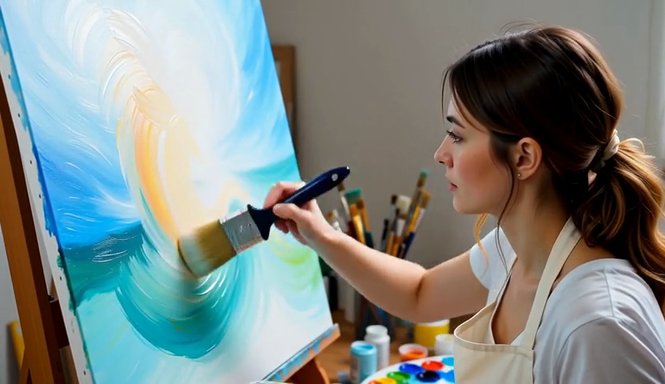} 
      & \phantom{AnyFlow} \\ \addlinespace[2pt]
      & \multicolumn{6}{@{} p{16.6cm} @{}}{\footnotesize\itshape ``Close-up of an artist painting on a canvas using a large round brush. The artist has medium-length brown hair tied back in a loose ponytail and wears a white apron over their clothes. They are focused intently, their hand moving smoothly across the canvas. The brushstrokes are visible, blending colors together in fluid motions. The background shows part of the artist's workspace, with other brushes and paint tubes nearby. The scene emphasizes the intricate detail and the flowing motion of the brush as it interacts with the paint and canvas.''} & \\
  \end{tabular}%
  }

  \vspace{-3pt}
  \caption{PDD - Midpoint (ours) vs. DMD2, and AnyFlow on the Wan Text-to-Video 14B model. The three methods use 4 NFE. Notably, PDD achieves high-quality video while preserving better diversity.}
  \label{fig:origami_dancers}
  \vspace{-10pt}
\end{figure*}

\vspace{-10pt}
\paragraph{Training from scratch} There is a growing effort~\citep{frans2025shortcutmodels, geng2025meanflow, boffi2025flowmaps,wang2025transitionmodels,zhou2025inductivemomentmatching,zhou2026tvm} to develop end-to-end training methods for few-step generation models. These methods directly learn the flow map and avoid separate pre-training and distillation phases by combining the flow matching objective~\citep{lipman2023flowmatching,liu2022straightfast} with a trajectory-based distillation objective. Instead of using a separate student-teacher scheme, they use the current state of the model as the teacher. This line of work has spawned several follow-up methods that aim to improve convergence~\citep{zhang2025alphaflow,geng2025imf,guo2025splitmeanflow,lu2026pixelmf,nguyen2025improvedshortcut} or replace JVPs with finite-difference approximations~\citep{luo2026soflow, park2026eflow}. 

While PDD could potentially be extended to a self-distillation framework, such methods require a completely different pipeline, including larger datasets and significantly more compute. %

\paragraph{Distribution-based distillation} Instead of letting a student distill the trajectories of a teacher model, a more flexible but sufficient condition for a few-steps generation model is to align the marginals $p_t$ of the two. ADD~\citep{sauer2023add}, LADD~\citep{sauer2024ladd}, and APT~\citep{lin2025apt1} are using GAN losses to fine-tune a pre-trained diffusion model into a few step generation model. DMD~\citep{yin2024dmd,yin2024dmd2} incorporates the VSD loss~\cite{wang2023vsd} for distillation. $f$-distill~\citep{xu2025fdistill} generalizes the VSD loss to $f$-divergences, and SiD and extensions~\citep{zhou2024sid, zhou2024sida,luo2024onestepdiffusiondistillationscore} use variants of Fisher divergences. Distribution-based methods established themselves as dominant approaches for distillation of large-scale video models~\citep{nie2026tmd, zheng2025rcm, gu2026anyflow, fan2026phaseddmd, lin2025apt1}. However, they suffer from mode collapse leading to a lack of generation diversity and motion. Recent works try to mitigate that by regularizing the loss with trajectory-based objectives~\citep{cheng2026twinflow, zheng2025rcm,gu2026anyflow}.

Moreover, distribution-based methods are known to be sensitive to hyper-parameters, require additional trainable parameters, and their performance can vary significantly across training iterations due to their alternating training objectives. In contrast, we find that the memory-efficient simple PD loss is much more robust to hyper-parameter choices, and presents consistent generation across training iteration.

\section{Experiments}
We empirically validate PDD on three tasks: 
i) class-conditional image generation on ImageNet-256~\citep{ILSVRC15} using the SiT-XL+REPA model~\citep{yu2025repa}; 
ii) text-to-image generation using Qwen-Image~\citep{wu2025qwenimage}; and 
iii) text-to-video generation using the 1.3B and 14B variants of Wan2.1~\citep{wan2025wan2p1} as well as LTX-2.3~\citep{hacohen2026ltx2}. 
The datasets used for PDD training are described in Appendix~\ref{a:experiments}.

\vspace{-10pt}
\paragraph{Training setup.}
PDD has three main design choices: 
i) the time discretization~(\ref{e:time_disc}), defined by the grid size $N$ and the time reparameterization; 
ii) the Runge-Kutta method used to approximate the mean velocity~(\ref{e:mean_vel}); and 
iii) the minimum and maximum block sizes, $L_{\min}$ and $L_{\max}$, used in Algorithms~\ref{alg:parallel_decoding_training} and~\ref{alg:datafree_parallel_decoding_training}, which determine the set of available NFEs at inference time. 
For each task, we train two PDD models, each with a different grid size, Runge-Kutta method, and block-size range.
Additional training details are provided in Appendix~\ref{a:experiments}.

For class-conditional image generation, we set $N=128$ for the Euler method and $N=64$ for the midpoint method. 
For both models, we use the uniform time discretization~(\ref{e:time_disc}), $t_n=\frac{n}{N}$, and choose block sizes $L_{\min}$ and $L_{\max}$ such that the available NFEs at inference time are $1,2,4,8$.

For both text-to-image and text-to-video generation, we set $N=256$ for the Euler method and $N=128$ for the Midpoint method. 
All models use the \emph{shift} transformation~\citep{esser2024sd3,shaul2024bns} for the time discretization:
\begin{equation}
\label{eq:shift}
    t_n = \text{shift}_s\parr{\frac{n}{N}},\quad 
    \text{shift}_s(t) = \frac{\frac{1}{s}t}{1 + \parr{\frac{1}{s}-1}t}.
\end{equation}
Due to the lack of high-quality image and video datasets, we apply data-free training as described in Algorithm~\ref{alg:datafree_parallel_decoding_training}. 
The minimum and maximum block sizes, $L_{\min}$ and $L_{\max}$, are chosen such that the available NFEs at inference time are $2,4,8$ for Qwen-Image and Wan2.1, and $4,8$ for LTX-2.3.

Across all tasks, since the midpoint method requires two teacher evaluations, we accumulate the Euler loss over two intervals within each block for a fair comparison. Additionally, details about classifier-free guidance are in Appendix~\ref{a:experiments}.

While PDD exhibits stable convergence across hyperparameter, we find that the two most important training hyperparameters are the time reparameterization and the batch size, with larger batch sizes yielding better performance. 
Additionally, as shown in Tables~\ref{tab:qwenimage_benchmarks}, \ref{tab:qwenimage_human_pref}, \ref{tab:vbench}, the midpoint approximation consistently improves performance compared to the Euler approximation.
\begin{figure}
    \centering
    \includegraphics[width=1.0\linewidth]{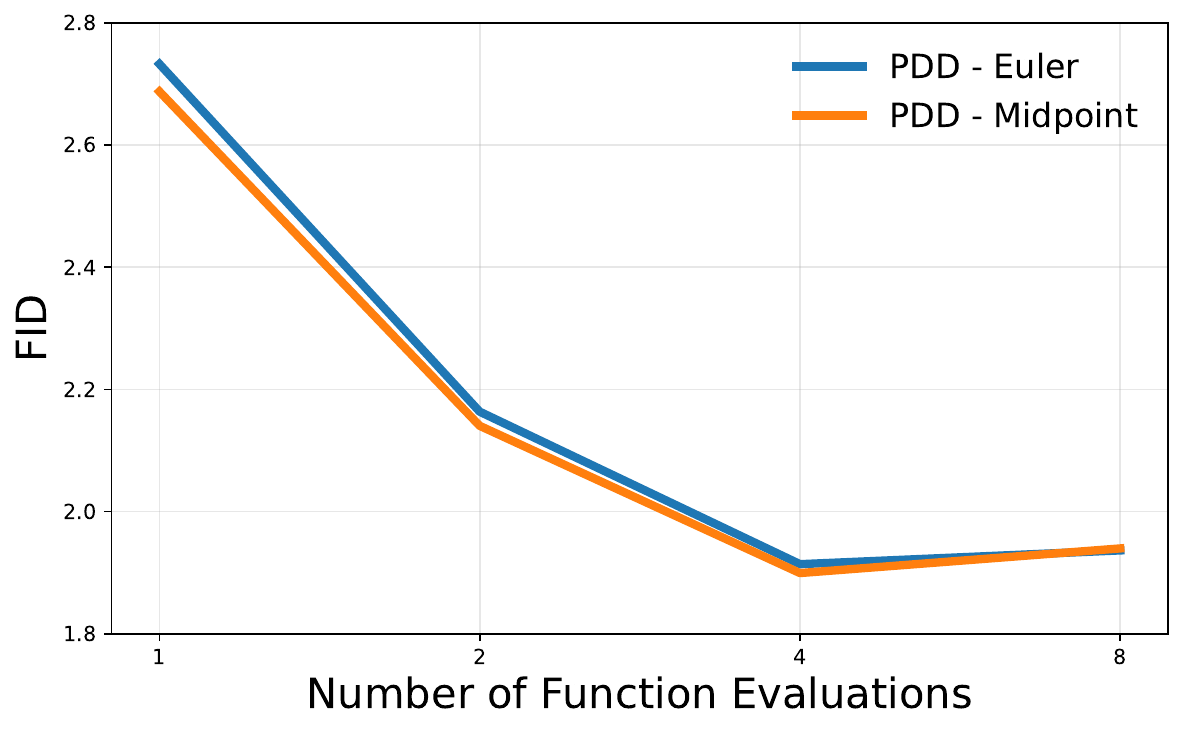}
    \caption{FID vs.~NFE of PDD with Euler and Midpoint methods for approximating the mean velocity~(\ref{e:mean_vel}) on ImageNet-256 using SiT-XL+REPA model as teacher with guidance scale 2.9.}
    \label{fig:imagenet_euler_vs_midpoint}
    \vspace{-10pt}
\end{figure}
\vspace{-8pt}
\paragraph{Class-conditional ImageNet} 
As shown in Figure~\ref{fig:imagenet_euler_vs_midpoint}, the FID of our PDD models generally improves as the NFE increases, validating that PDD successfully shares weights across different NFEs. Although the $8$-NFE setting exhibits an increase in FID, we find that this degradation can be mitigated by using a lower guidance scale, at the cost of higher FID at lower NFEs; see Figure~\ref{fig:imagenet_scale_ablation} in the appendix. 
\setlength{\intextsep}{4pt}
\setlength{\columnsep}{8pt}
\begin{wraptable}{r}{0.52\columnwidth}
\vspace{-0.2em}
\centering
\footnotesize
\caption{FID with NFE$=1$ on ImageNet-256 using SiT-XL+REPA as teacher. Guidance scale $w=2.9$.}
\vspace{-0.6em}
\label{tab:imagenet}
\resizebox{\linewidth}{!}{%
\begin{tabular}{lc}
\toprule
Method  & FID  \\
\midrule
Pi-Flow~\citep{chen2025piflow}  &  2.85   \\
FreeFlow~\citep{tong2025freeflow}  & \cellcolor{best}1.45 \\
PDD - Euler  & 2.73   \\
PDD - Midpoint & 2.69 \\
\bottomrule
\end{tabular}
}
\end{wraptable}
Table~\ref{tab:imagenet} compares PDD with the related Pi-Flow method~\citep{chen2025piflow} and the SOTA FreeFlow method~\citep{tong2025freeflow}. Our PDD method achieves a very competitive FID in the single-step setting while (i) having a simpler objective (without Gaussian mixture as Pi-Flow or an additional network as FreeFlow) and (ii) also supporting inference with multiple NFE budgets.
\vspace{-8pt}
\paragraph{Text-to-image Qwen-Image}
\begin{table}[t]
\centering
\caption{
Qwen-Image on the OneIG-EN, DPG-Bench, and GenEval. $^*$ denotes our re-evaluation of the official checkpoint. This table reports the overall metrics. The full dimensions of the three benchmarks are in tables \ref{tab:oneig-en}, \ref{tab:dpg_bench}, and \ref{tab:geneval} in Appendix~\ref{aa:qwenimage}.
}
\vspace{-5pt}
\label{tab:qwenimage_benchmarks}
\resizebox{0.5\textwidth}{!}{%
\begin{tabular}{lcccc}
\toprule
Method & NFE & OneIG-EN$\uparrow$ & DPG-Bench$\uparrow$ & GenEval$\uparrow$  \\
\midrule
\color{black!55}Euler$^*$ &  \color{black!55}50$\times2$ & \color{black!55}0.537 & \color{black!55}88.30 & \color{black!55} 0.86  \\ 
\midrule
 TwinFlow$^*$~\citep{cheng2026twinflow} & \multirow{3}{*}{2} & 0.493 & 86.67 & 0.82  \\
 PDD - Euler (Ours) &  & 0.508 & 88.04  & \cellcolor{best}0.86  \\
 PDD - Midpoint (Ours) &  & \cellcolor{best}0.516 & \cellcolor{best}88.10 & \cellcolor{best}0.86 \\
\midrule
DMD2$^*$ (Lightning-step4-v2)~\citep{lightx2v} & \multirow{5}{*}{4} & 0.524 & 88.25 & 0.85 \\
TwinFlow$^*$~\citep{cheng2026twinflow} &  & 0.502 & 86.18 & 0.82 \\
Pi-Flow$^*$~\citep{chen2025piflow} &  & 0.533 & 88.11 & 0.85 \\
PDD - Euler (Ours) &  & 0.535 & 88.45  & \cellcolor{best}0.86 \\
PDD - Midpoint (Ours) &  & \cellcolor{best}0.538 & \cellcolor{best}88.66  & \cellcolor{best}0.86 \\
\midrule
DMD2$^*$ (Lightning-step8-v2)~\citep{lightx2v} & \multirow{4}{*}{8} & 0.526 & 88.20 & 0.84 \\
Pi-Flow$^*$~\citep{chen2025piflow} &  & 0.536 & 87.90  & 0.84 \\
PDD - Euler (Ours) &  & 0.538 & \cellcolor{best}88.51  & \cellcolor{best}0.86 \\
PDD - Midpoint (Ours) &  & \cellcolor{best}0.541 & 88.46  & 0.85 \\
\bottomrule
\end{tabular}
}
\vspace{-5pt}
\end{table}
\begin{table}[t]
\centering
\caption{
Qwen-Image on HPSv2, PickScore, and OneIG diversity. $^*$ denotes our re-evaluation of the official checkpoint.
}
\vspace{-5pt}
\label{tab:qwenimage_human_pref}
\resizebox{0.5\textwidth}{!}{%
\begin{tabular}{lcccc}
\toprule
Method & NFE & HPSv2$\uparrow$ & PickScore$\uparrow$ & OneIG diversity$\uparrow$  \\
\midrule
\color{black!55} Euler$^*$ &  \color{black!55}50$\times2$ & \color{black!55}30.83 & \color{black!55}22.78 & \color{black!55} 0.200  \\ 
\midrule
TwinFlow$^*$~\citep{cheng2026twinflow} & \multirow{3}{*}{2} & 29.86 & 22.26 & 0.131  \\
PDD - Euler (Ours) &  & 29.59 & 22.47  & \cellcolor{best}0.197  \\
PDD - Midpoint (Ours) &  & \cellcolor{best}30.15 & \cellcolor{best}22.66 & 0.177 \\
\midrule
DMD2$^*$ (Lightning-step4-v2)~\citep{lightx2v} & \multirow{5}{*}{4} & \cellcolor{best}32.34 & \cellcolor{best}22.98 & 0.095 \\
TwinFlow$^*$~\citep{cheng2026twinflow} &  & 30.01 & 22.26 & 0.150 \\
Pi-Flow$^*$~\citep{chen2025piflow} &  & 30.94 & 22.67 & 0.182 \\
PDD - Euler (Ours) &  & 31.05 & 22.72  & \cellcolor{best}0.192 \\
PDD - Midpoint (Ours) &  & 31.33 & 22.86  & 0.174 \\
\midrule
DMD2$^*$ (Lightning-step8-v2)~\citep{lightx2v} & \multirow{4}{*}{8} & \cellcolor{best}32.35 & \cellcolor{best}22.95 & 0.109 \\
Pi-Flow$^*$~\citep{chen2025piflow} &  & 31.09 & 22.55  & 0.186 \\
PDD - Euler (Ours) &  & 31.34 & 22.73  & \cellcolor{best}0.198 \\
PDD - Midpoint (Ours) &  & 31.56 & 22.86  & 0.181 \\
\bottomrule
\end{tabular}
}
\vspace{-12pt}
\end{table}

We evaluate our PDD-distilled Qwen-Image models on three main benchmarks, OneIG~\citep{chang2025oneig}, DPG-Bench~\citep{hu2024ella}, and GenEval~\citep{ghosh2023geneval}, with $\text{NFE}=2,4,8$. For SOTA baselines, we consider QwenLightning-v2~\citep{lightx2v}, which employs DMD2~\citep{yin2024dmd2} distillation, Pi-Flow, and TwinFlow~\cite{cheng2026twinflow}. For all three baselines, we report re-evaluations of the official checkpoints. Table~\ref{tab:qwenimage_benchmarks} shows that PDD achieves the best performance on the overall metrics of the three benchmarks. The full dimensions are provided in tables~\ref{tab:oneig-en}, \ref{tab:dpg_bench}, and \ref{tab:geneval} in the appendix. Additionally, we evaluate the models on the HPSv2~\citep{wu2023hspv2} benchmark as well as measure PickScore~\citep{wu2023hspv2} on the same generated images. Table~\ref{tab:qwenimage_human_pref} shows our PDD is the runner up to DMD2 (Lightning-v2) model in terms of these human preference metrics. However, DMD suffers from mode collapse, resulting in a significant reduction of diversity. This is observed by the OneIG diversity metrics in Table~\ref{tab:qwenimage_human_pref} and on many examples in Figures~\ref{fig:qwenimage_diversity_first} to \ref{fig:qwenimage_diversity_last}. In contrast, our PDD shows competitive results while better preserving diversity and more closely following the teacher generation.

\begin{table*}[t]
\centering
\caption{Wan Text-to-Video 1.3B and 14B models performance and diversity metrics on VBench~\citep{huang2023vbench} with Self-Forcing prompt set. Diversity is measured as mean pairwise V-JEPA 2/VideoMAE V2 feature distance across 5 generated videos per prompt. $^*$ Our re-evaluation of the official checkpoint. $^{**}$ Our re-implementation. $^\dagger$ Reported numbers on proprietary prompts without diversity evaluation, since rCM~\citep{zheng2025rcm} did not release checkpoints.}
\vspace{-5pt}
\label{tab:vbench}
\resizebox{\textwidth}{!}{%
\begin{tabular}{clcccccccc}
\toprule
\multirow{2}{*}{Model} & \multirow{2}{*}{Method} & \multirow{2}{*}{NFE} & \multicolumn{3}{c}{VBench} & \multicolumn{2}{c}{V-JEPA 2} & \multicolumn{2}{c}{VideoMAE V2} \\
\cmidrule(lr){4-6} \cmidrule(lr){7-8} \cmidrule(lr){9-10}
 & & & Overall $\uparrow$ & Quality  $\uparrow$& Semantic $\uparrow$& Cosine $\uparrow$ & L2 $\uparrow$ & Cosine $\uparrow$ & L2 $\uparrow$ \\
\midrule
\multirow{6}{*}{1.3B} 
 & \color{black!55} UniPC$^*$ (Teacher) & \color{black!55} 50$\times$2 & \color{black!55} 83.77 & \color{black!55} 84.90 & \color{black!55} 79.22 & \color{black!55} 0.1254 & \color{black!55} 27.07 & \color{black!55} 0.02681 & \color{black!55} 2.922 \\ 
 & rCM$^\dagger$~\citep{zheng2025rcm} & \multirow{5}{*}{4} & 84.43 & 85.38 & 80.63 & - & - & - & - \\
 & AnyFlow$^*$~\citep{gu2026anyflow} &  & 84.45 & 85.22 & \cellcolor{best}81.34 & 0.0704 & 19.88 & 0.01029 & 1.807 \\
 & DMD$2^{**}$ (FastGen~\citep{fastgen2026}) &  & 84.69 & 86.14 & 78.87 & 0.0833 & 21.83 & 0.01646 & 2.278 \\
 & PDD - Euler (Ours) &  & 84.44 & 85.99 & 78.22 & 0.1018 & 24.54 & 0.01901 & 2.489 \\
 & PDD - Midpoint (Ours) &  & \cellcolor{best}84.94 & \cellcolor{best}86.45 & 78.91 & \cellcolor{best}0.1032 & \cellcolor{best}24.63 & \cellcolor{best}0.02054 & \cellcolor{best}2.548 \\
\midrule
\multirow{9}{*}{14B} 
 & \color{black!55} UniPC$^*$ (Teacher) & \color{black!55} 50$\times$2 & \color{black!55} 83.90 & \color{black!55} 84.56 & \color{black!55} 81.24 & \color{black!55} 0.1263 & \color{black!55} 27.27 & \color{black!55} 0.02497 & \color{black!55} 2.824 \\
 & rCM$^\dagger$~\citep{zheng2025rcm} & \multirow{5}{*}{4} & 84.92 & 85.43 & \cellcolor{best}82.88 & - & - & - & - \\
 & AnyFlow$^*$~\citep{gu2026anyflow} &  & \cellcolor{best}84.95 & 85.70 & 81.92 & 0.0786 & 20.99 & \cellcolor{best}0.01297 & 1.992 \\
 & DMD2$^{**}$ (FastGen~\citep{fastgen2026}) &  & 84.40 & 85.16 & 81.34 & 0.0568 & 17.67 & 0.00945 & 1.710 \\
 & PDD$^\text{short}$ - Midpoint (Ours) &  & 84.92 & \cellcolor{best}85.71 & 81.77 & 0.0791 & 21.27 & 0.01247 & 2.027 \\
 & PDD$^\text{long}$ - Midpoint (Ours) &  & 84.69 & 85.69 & 80.71 & \cellcolor{best}0.0846 & \cellcolor{best}22.13 & 0.01264 & \cellcolor{best}2.058 \\
\cmidrule{2-10}
 & AnyFlow$^*$~\citep{gu2026anyflow} & \multirow{3}{*}{8} & \cellcolor{best}85.08 & 85.78 & \cellcolor{best}82.28 & 0.0765 & 20.67 & 0.01278 & 1.974 \\
 & PDD$^\text{short}$ - Midpoint (Ours) &  & 84.96 & \cellcolor{best}85.83 & 81.44 & 0.0816 & 21.63 & 0.01276 & 2.054 \\
 & PDD$^\text{long}$ - Midpoint (Ours) &  & 84.70 & 85.77 & 80.41 & \cellcolor{best}0.0868 & \cellcolor{best}22.43 & \cellcolor{best}0.01314 & \cellcolor{best}2.097 \\
\bottomrule
\end{tabular}
}
\vspace{-10pt}
\end{table*}

\paragraph{Text-to-video Wan2.1} Our results on the text-to-video task provide strong evidence that PDD is effective at large scale. We evaluate our distilled models on the VBench benchmark~\citep{huang2023vbench} and compare against the SOTA baselines rCM~\citep{zheng2025rcm}, AnyFlow~\citep{gu2026anyflow}, and DMD2 (implemented in FastGen~\citep{fastgen2026}), which all incorporate a VSD loss. Additionally, we measure diversity by encoding the generated videos for VBench using V-JEPA 2~\citep{assran2025vjepa2} and VideoMAE V2~\citep{wang2023videomaev2} models, and report average pair-wise cosine and L2 distance across $5$ samples for each prompt. We find that skipping a layer in the backbone for the unconditional term in the CFG~(\ref{e:vel_guided}) can improve the performance of PDD. For the Wan2.1 1.3B model we skip the 10-th layer and for the 14B model we skip the 12-th layer.

As shown in Tables~\ref{tab:vbench}, on the Wan2.1 1.3B model PDD ranks first in terms of video quality, overall score, as well as diversity. On the Wan2.1 14B model we report on two checkpoints, \emph{short} which stands for 200 training iterations and \emph{long} which stands for 3k iterations. We find that PDD (short) achieves best video quality and is the runner-up to AnyFlow in the overall metrics with both $4$ and $8$ NFE. We find that, in general, PDD-generated videos exhibit a higher degree of motion (see Figure~\ref{fig:origami_dancers} and Figures~\ref{fig:person_jogging} to \ref{fig:robot_dj_cyberpunk_tokyo}) compared to baselines. In particular, while PDD (long) obtains lower VBench scores, we find that motion in the generated videos increases later in training, which is also notable in the dynamic degree score of VBench in Figure~\ref{fig:vbench_dim_14b} in the appendix. Additionally, Table~\ref{tab:vbench} shows that PDD obtains higher diversity scores than the baselines. This is also notable in many examples shown in Figures~\ref{fig:person_jogging} to \ref{fig:robot_dj_cyberpunk_tokyo}, generated from the best VBench checkpoints.
\vspace{-8pt}
\paragraph{Text-to-video/audio LTX-2.3}
To further show the scalability of PDD, we distill the 22B LTX-2.3 model, demonstrating multimodal few-step generation of 10s videos in 720p resolution with audio. As baseline, we consider the official LTX-2.3 distilled 8-step model~\cite{hacohen2026ltx2}. We apply the PD loss to the audio and video latents separately and average the loss across both modalities. Following the standard guidelines, we apply three guidance methods for the teacher, i.e., standard CFG (scale 4.5 for video and 7 for audio), cross-modal guidance (scale 3), and spatiotemporal skip guidance (scale 2 on layer 29). Due to the higher computational costs, we only consider the Euler method (with $N=256$) over a single interval within each block. For the teacher, this leads to $4\times 30$ NFE per generation. The comparisons in Figure~\ref{fig:ltx} and Figures~\ref{fig:judge-winrate} to~\ref{fig:ltx_comparison_last} in the appendix show that PDD can provide high-quality generations at only $8$ NFE after as few as 250 iterations of training, performing on par or better than the official distilled model without any access to training data.
\begin{figure*}[t]
  \centering
  \resizebox{\textwidth}{!}{%
  \begin{tabular}{@{} r @{\hspace{6pt}} c@{\hspace{2pt}}c@{\hspace{2pt}}c@{\hspace{2pt}}c@{\hspace{2pt}}c @{}}
                & $t{=}0$s & $t{=}2.5$s & $t{=}5$s & $t{=}7.5$s & $t{=}10$s \\ \addlinespace[2pt]
    Teacher
      & \framecellw{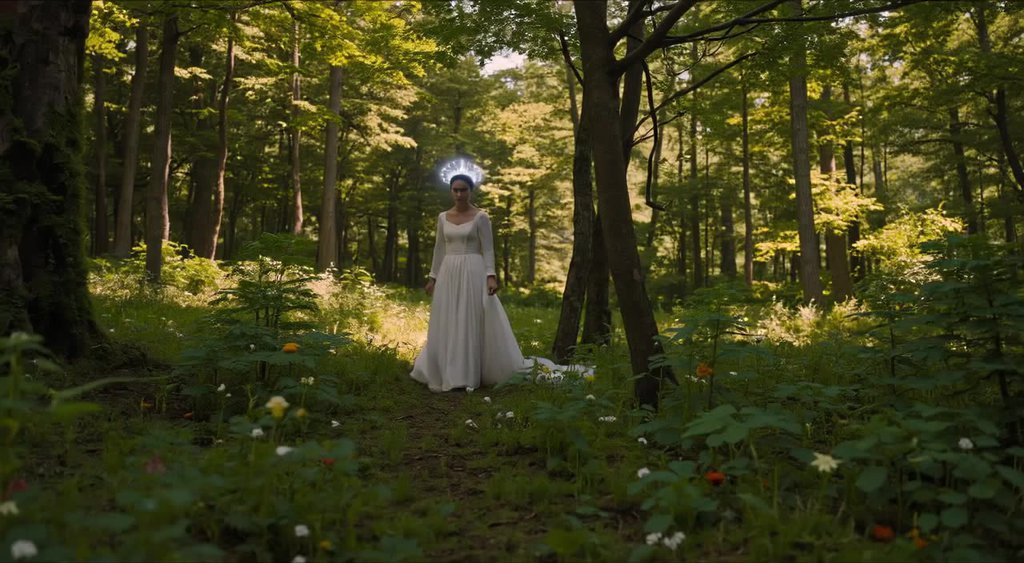} & \framecellw{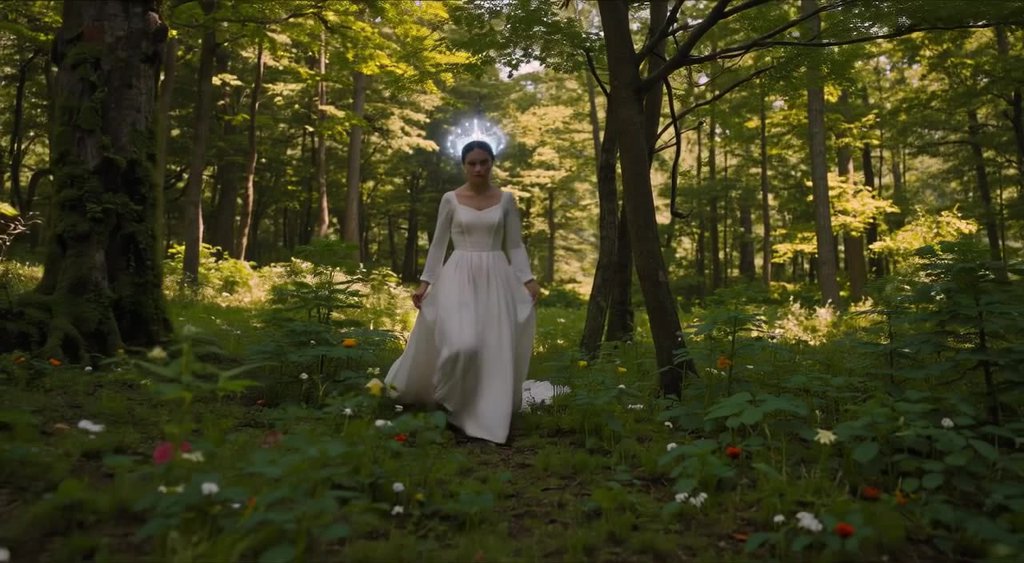} & \framecellw{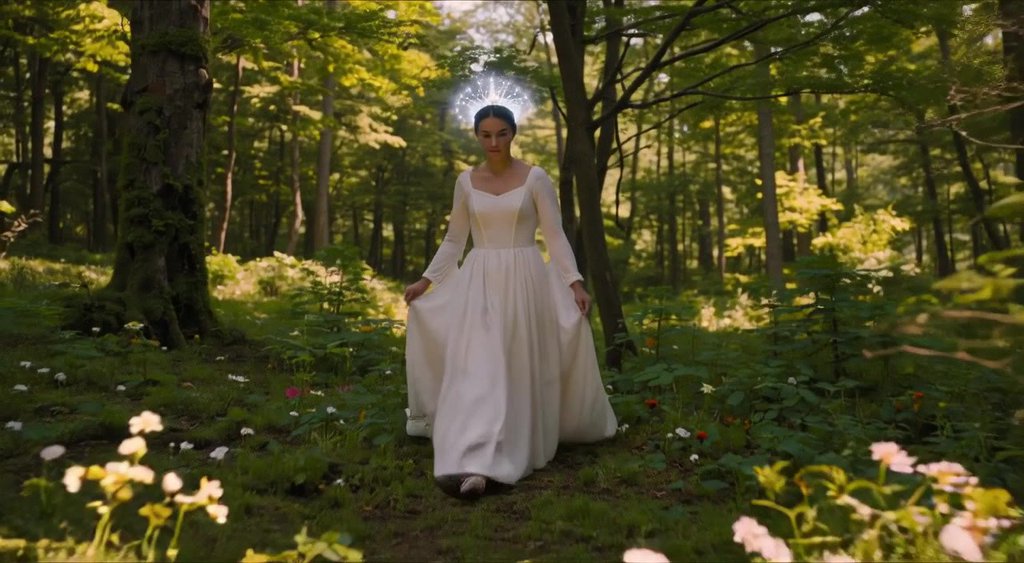} & \framecellw{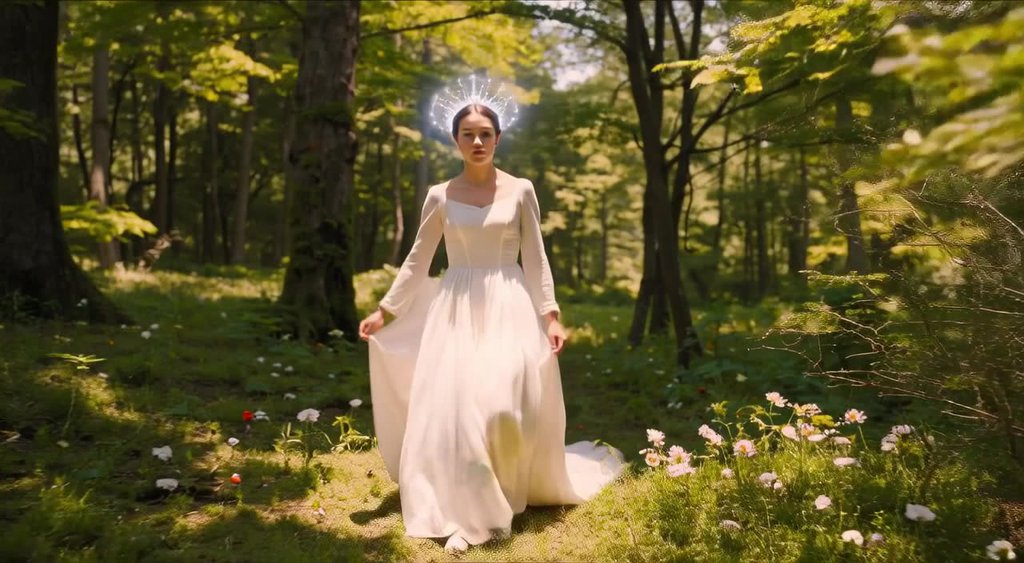} & \framecellw{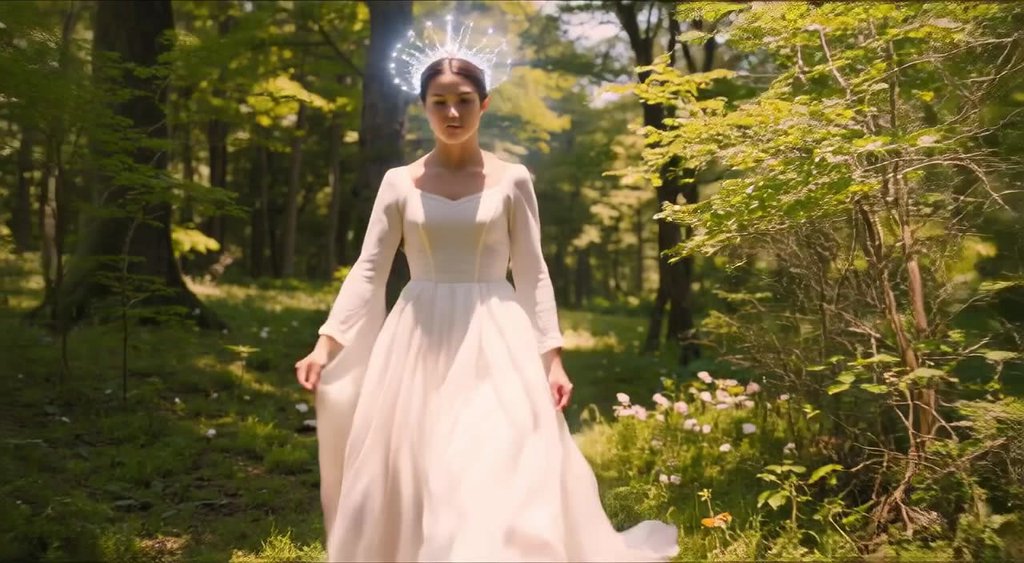} \\ \addlinespace[2pt]
    \textbf{PDD}
      & \framecellw{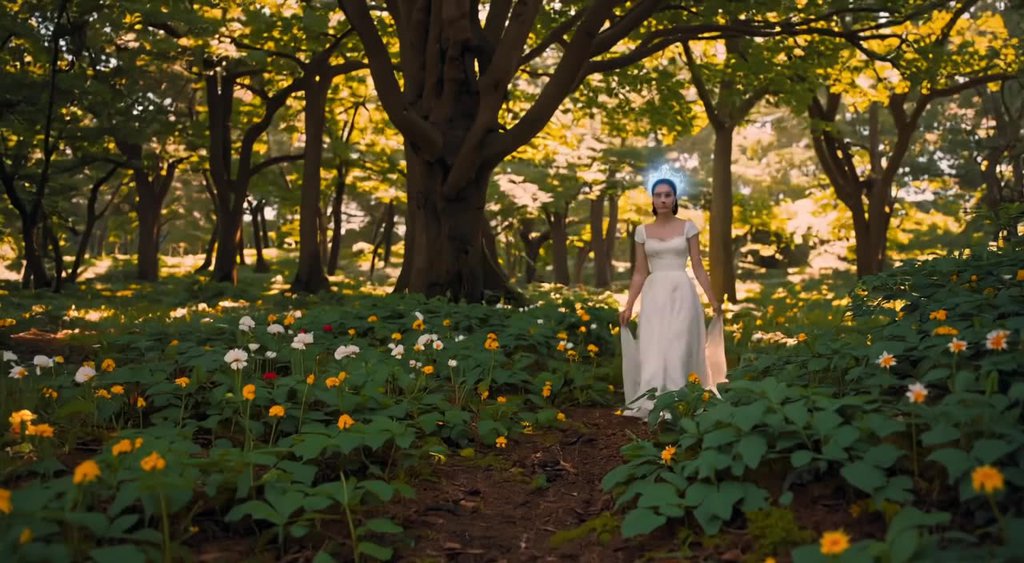} & \framecellw{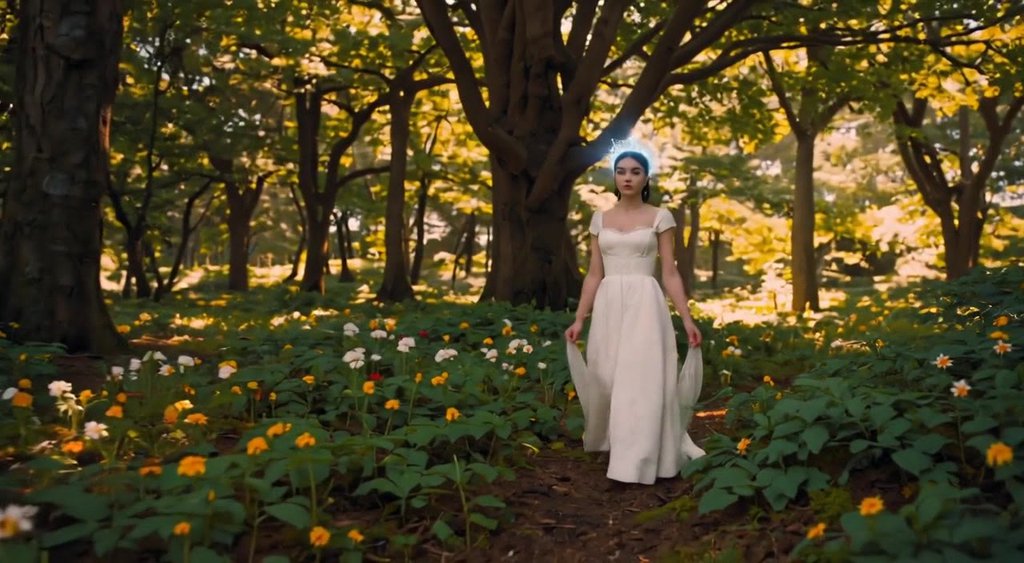} & \framecellw{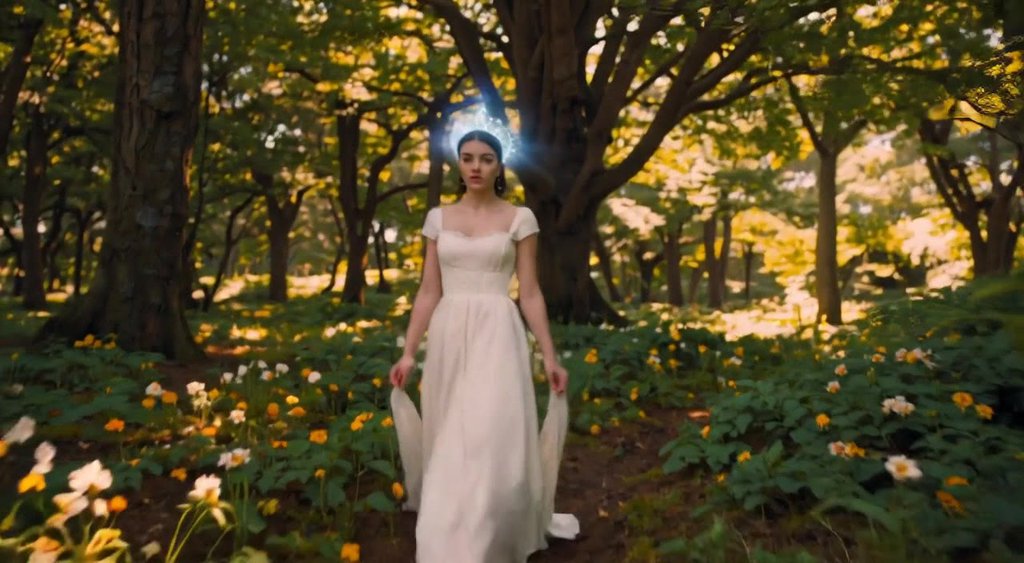} & \framecellw{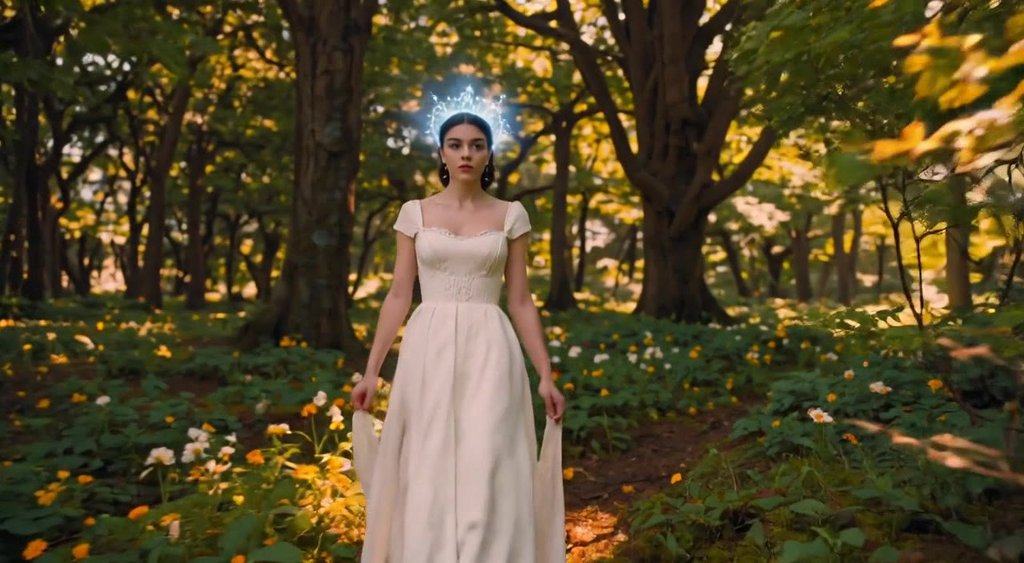} & \framecellw{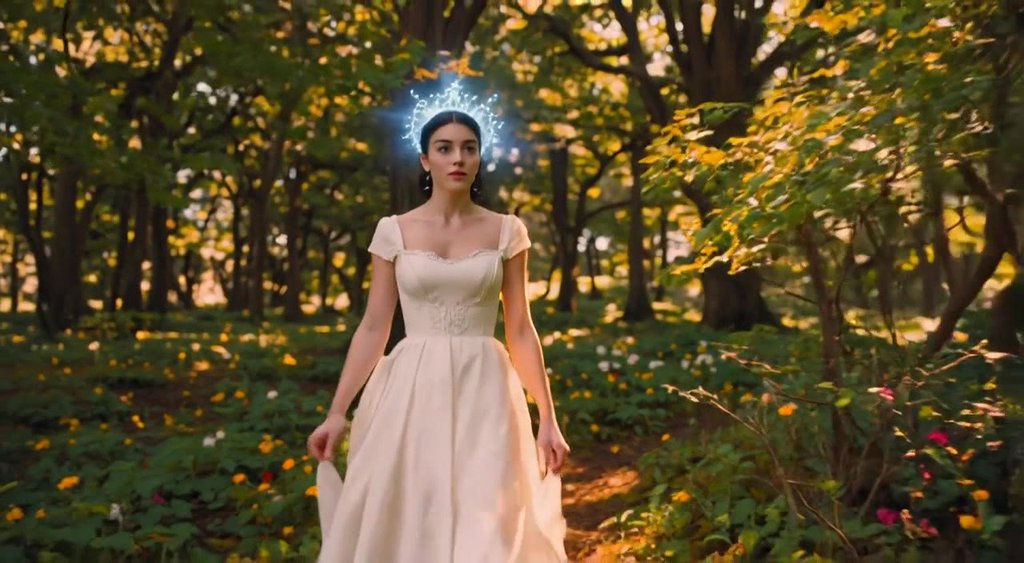} \\ \addlinespace[2pt]
    Distilled
      & \framecellw{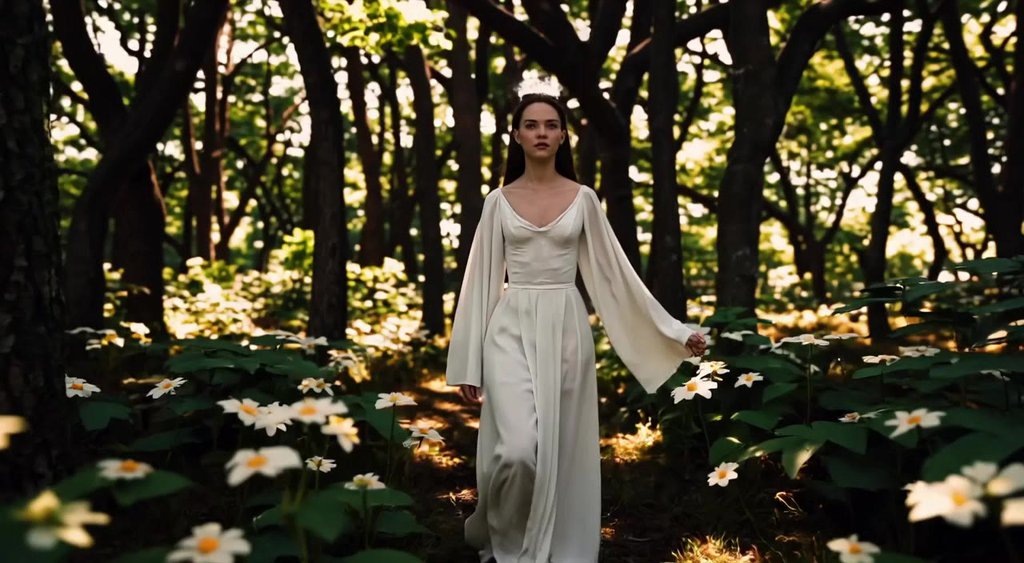} & \framecellw{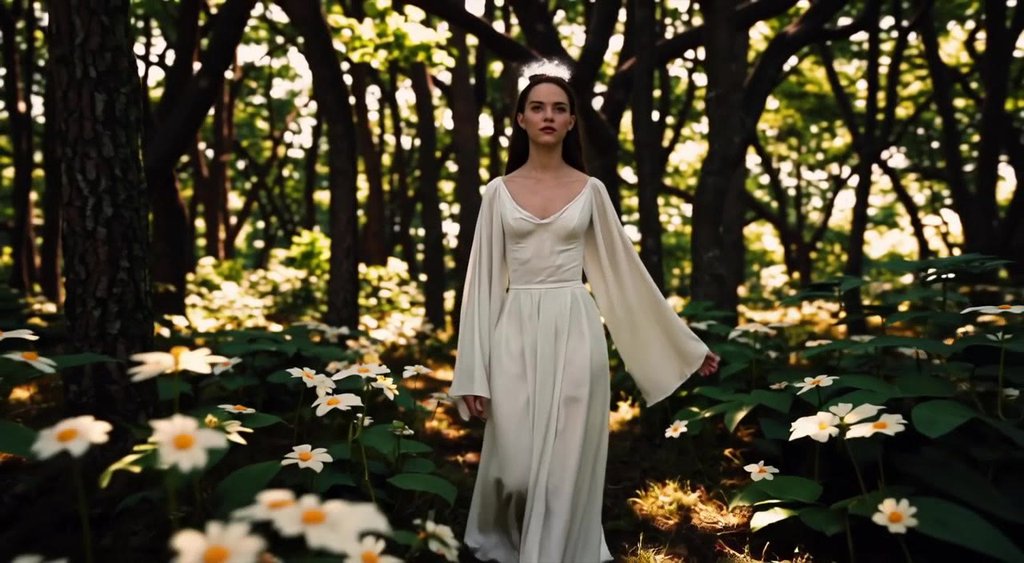} & \framecellw{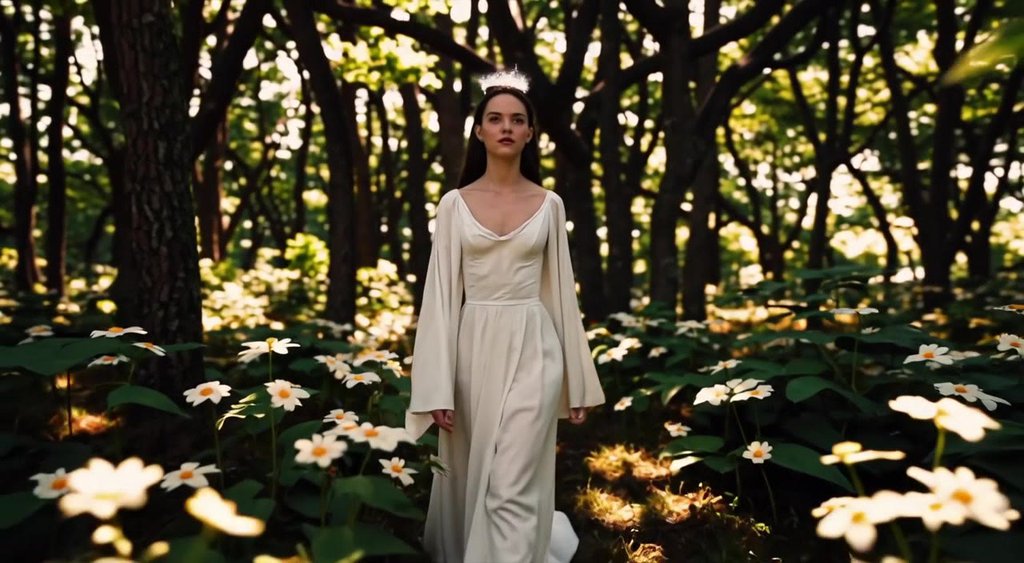} & \framecellw{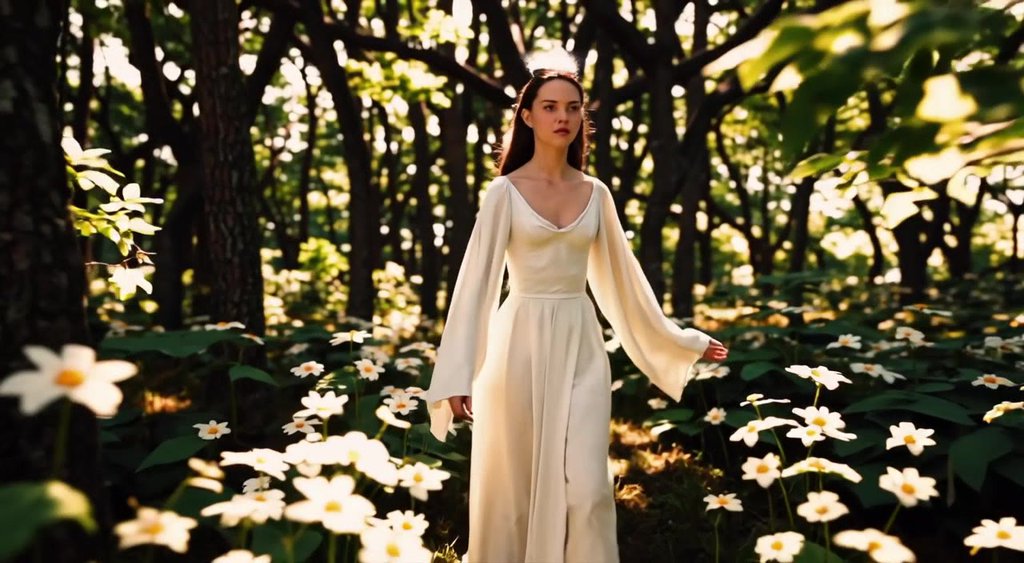} & \framecellw{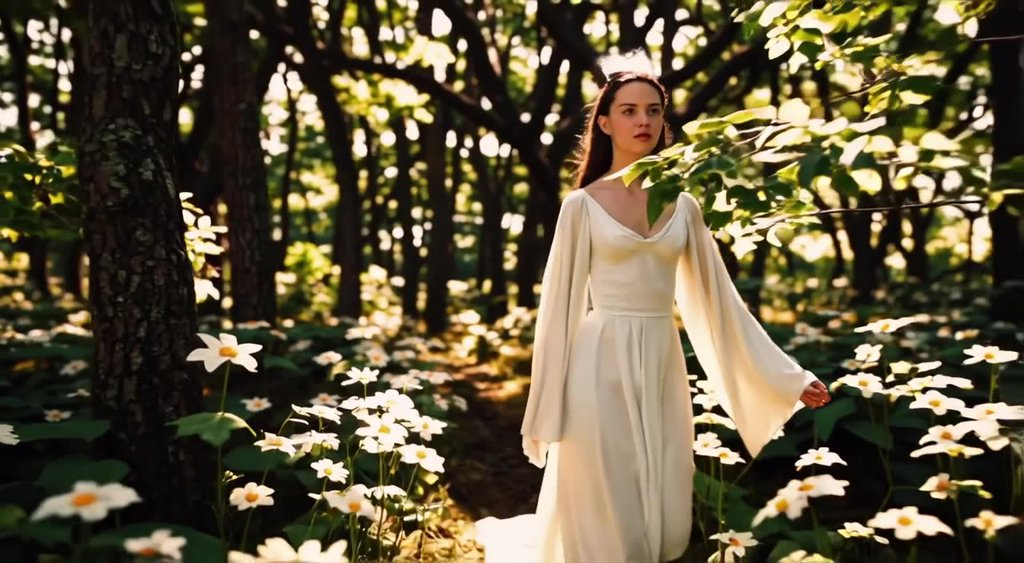} \\ \addlinespace[2pt]
                & \multicolumn{5}{@{} p{16.00cm} @{}}{\tiny\itshape ``In an enchanting forest with vibrant green foliage and dappled sunlight filtering through the trees, a young woman clad in a flowing white gown ventures on a mystical journey, her soft footsteps muffled as they press into the damp moss and wildflowers. She wears a magical crown that radiates an ethereal glow, casting shimmering light around her with a faint, crystalline chiming sound that hums in the air. Her eyes are filled with wonder and determination as she walks confidently, her arms gently swinging at her sides, the fabric of her gown rustling softly against the undergrowth. As she moves forward, the camera follows her in a smooth tracking shot, capturing her every step amidst the gentle, rhythmic chirping of forest birds and the subtle, melodic swell of ambient ethereal strings. The scene is bathed in a warm, magical glow, creating an atmosphere of enchantment and adventure, accompanied by the light, airy whisper of a breeze moving through the leaves. 4K resolution, 16:9 aspect ratio.''} \\
  \end{tabular}%
  }
  \resizebox{\textwidth}{!}{%
  \begin{tabular}{@{} r @{\hspace{6pt}} c@{\hspace{2pt}}c@{\hspace{2pt}}c@{\hspace{2pt}}c@{\hspace{2pt}}c @{}}
                & $t{=}0$s & $t{=}2.5$s & $t{=}5$s & $t{=}7.5$s & $t{=}10$s \\ \addlinespace[2pt]
    Teacher
      & \framecellw{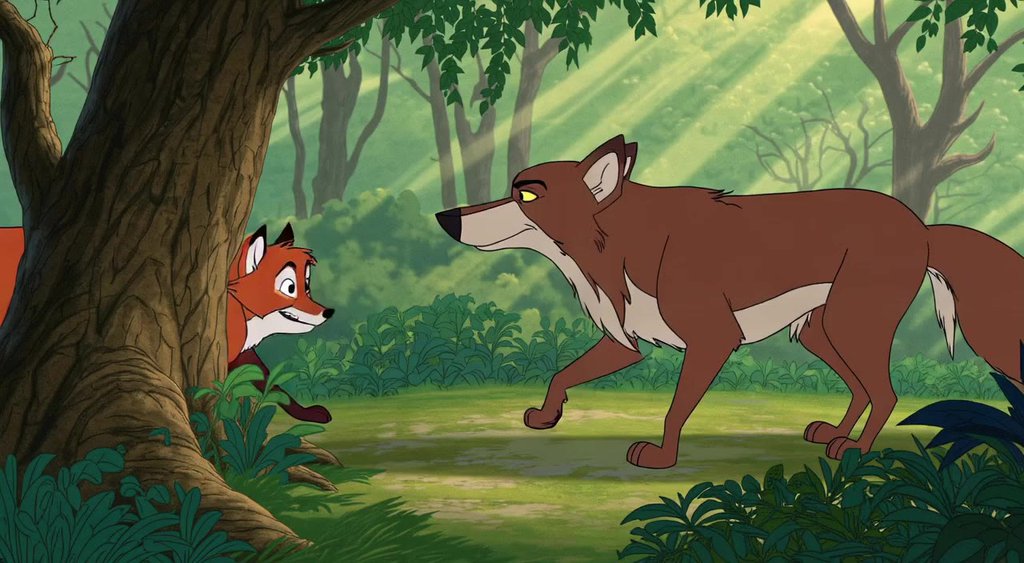} & \framecellw{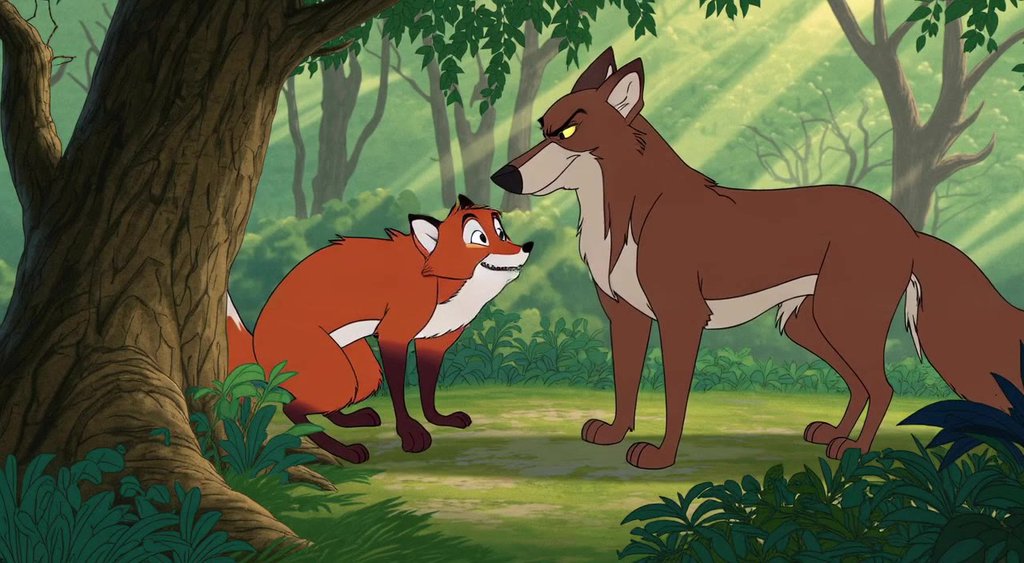} & \framecellw{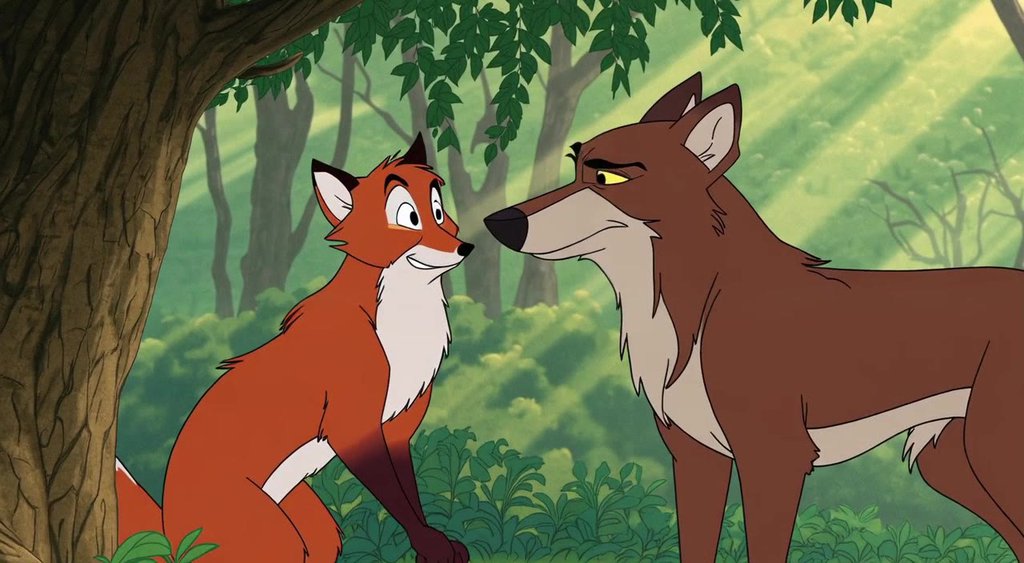} & \framecellw{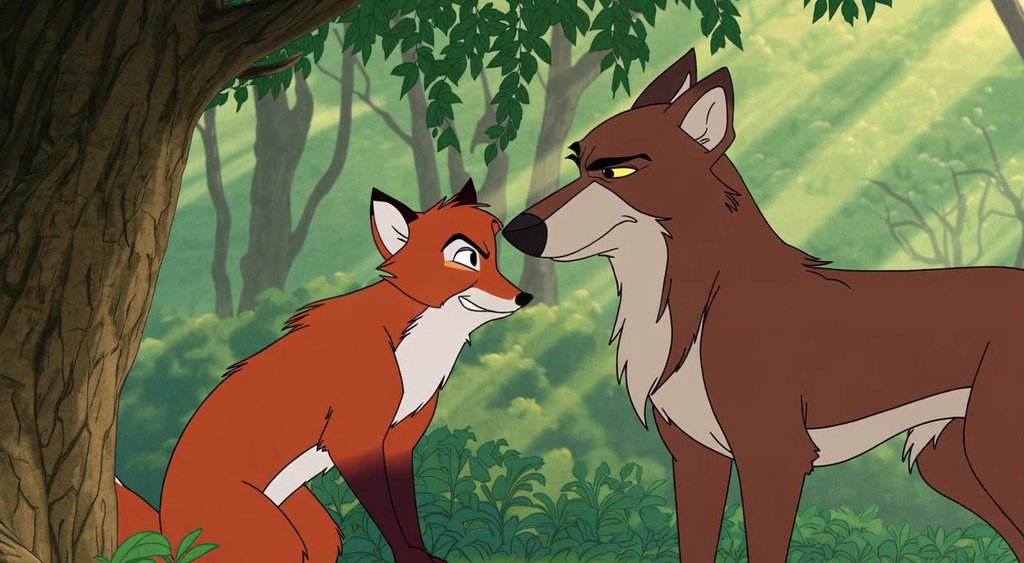} & \framecellw{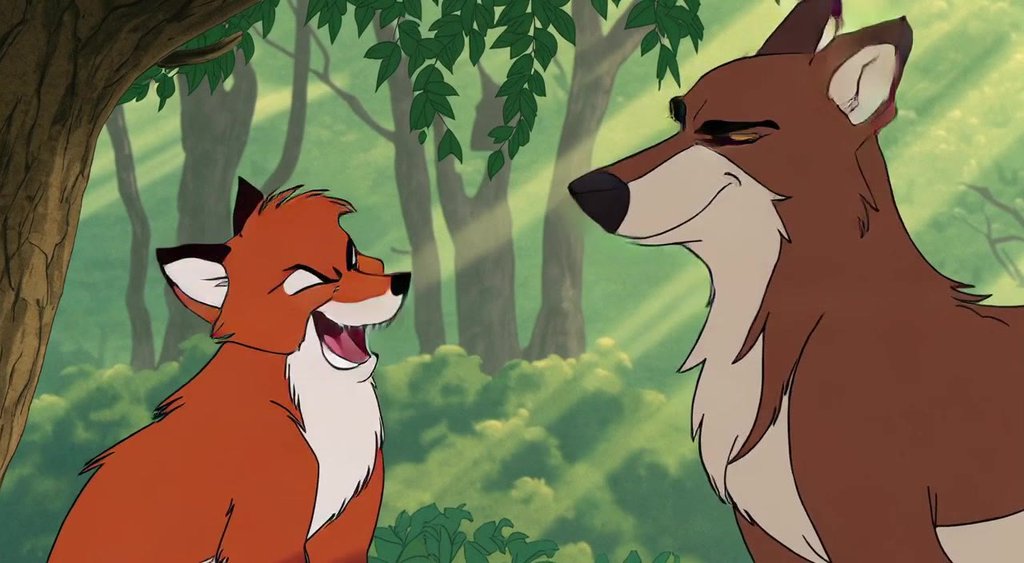} \\ \addlinespace[2pt]
    \textbf{PDD}
      & \framecellw{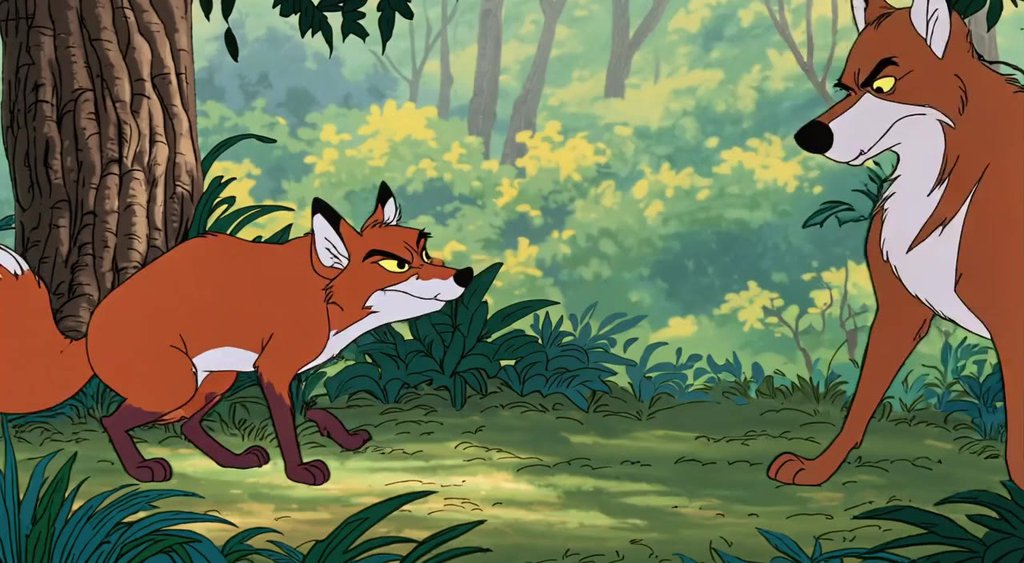} & \framecellw{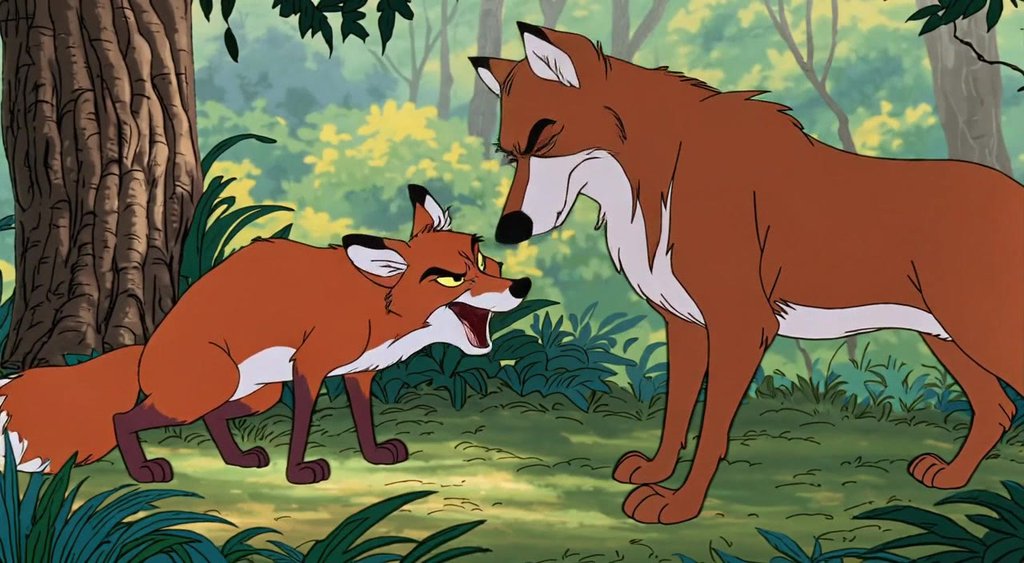} & \framecellw{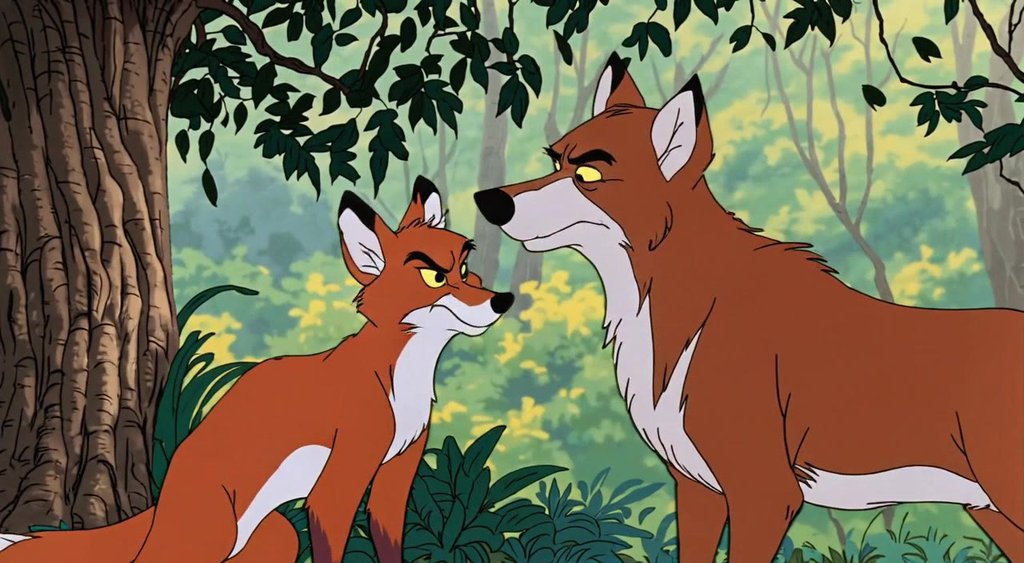} & \framecellw{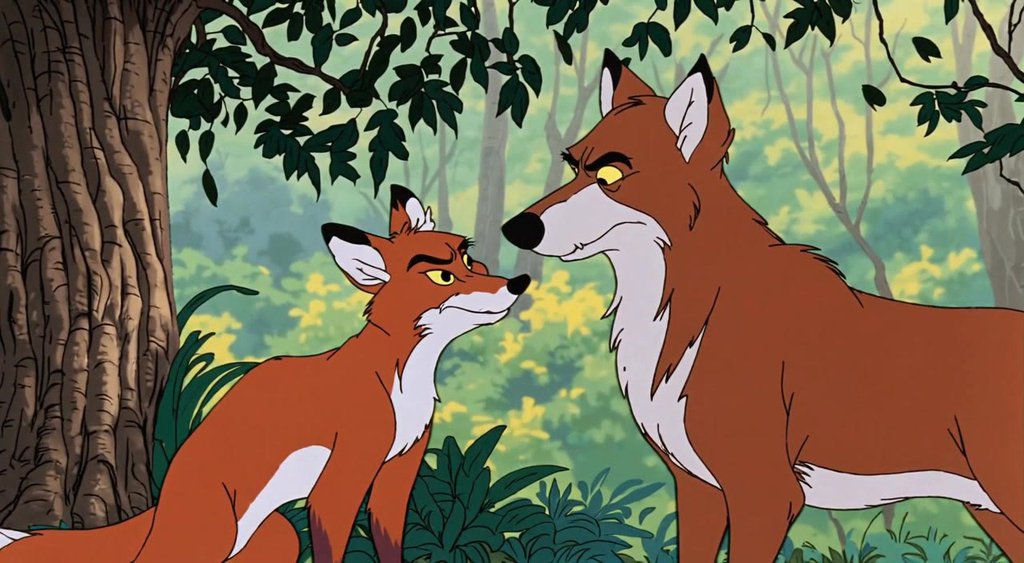} & \framecellw{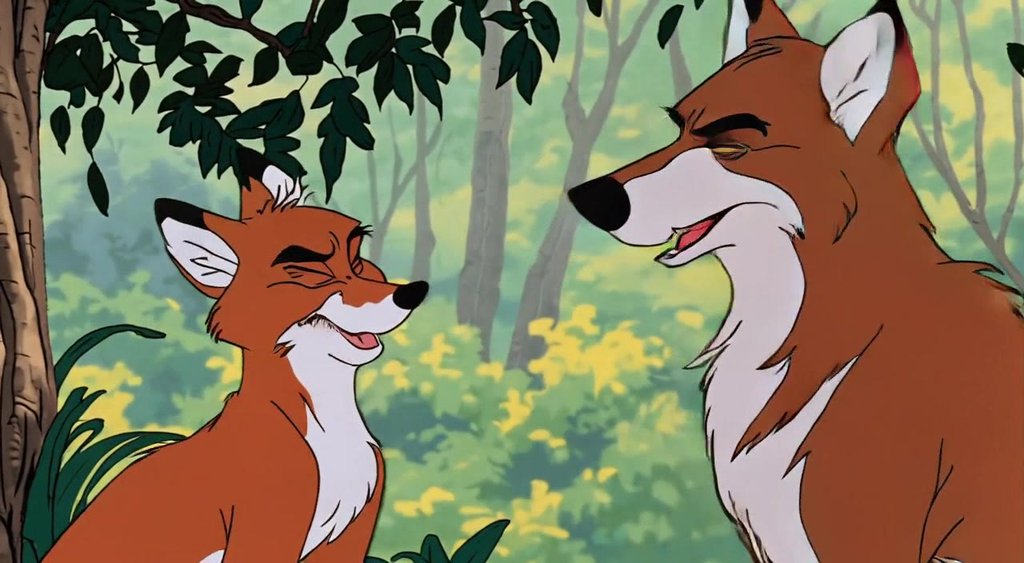} \\ \addlinespace[2pt]
    Distilled
      & \framecellw{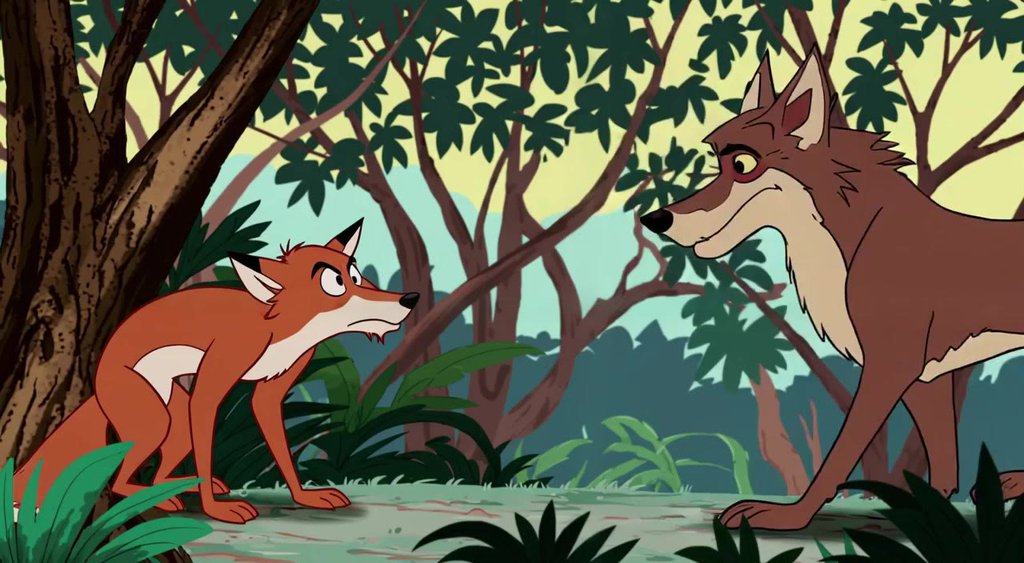} & \framecellw{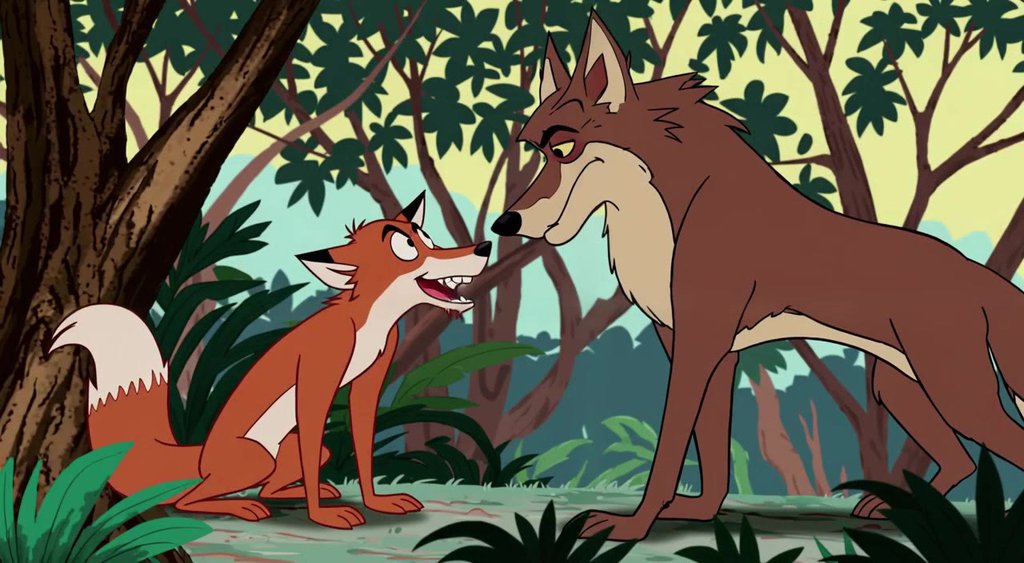} & \framecellw{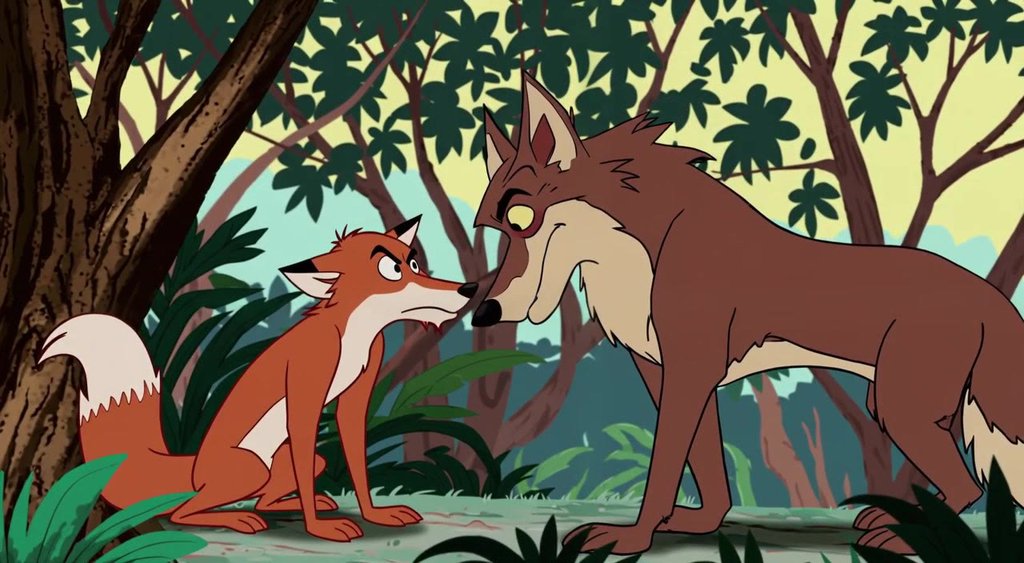} & \framecellw{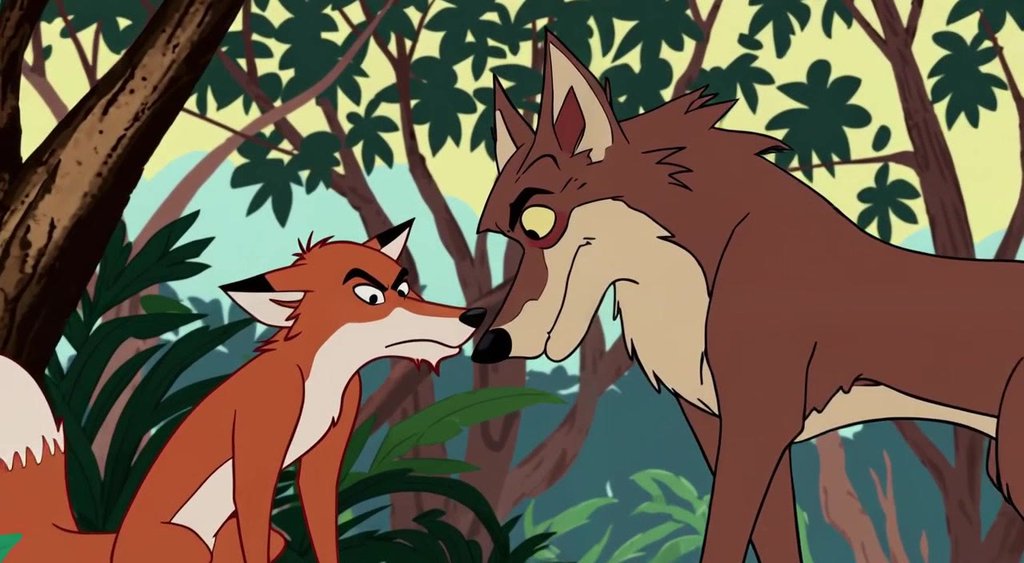} & \framecellw{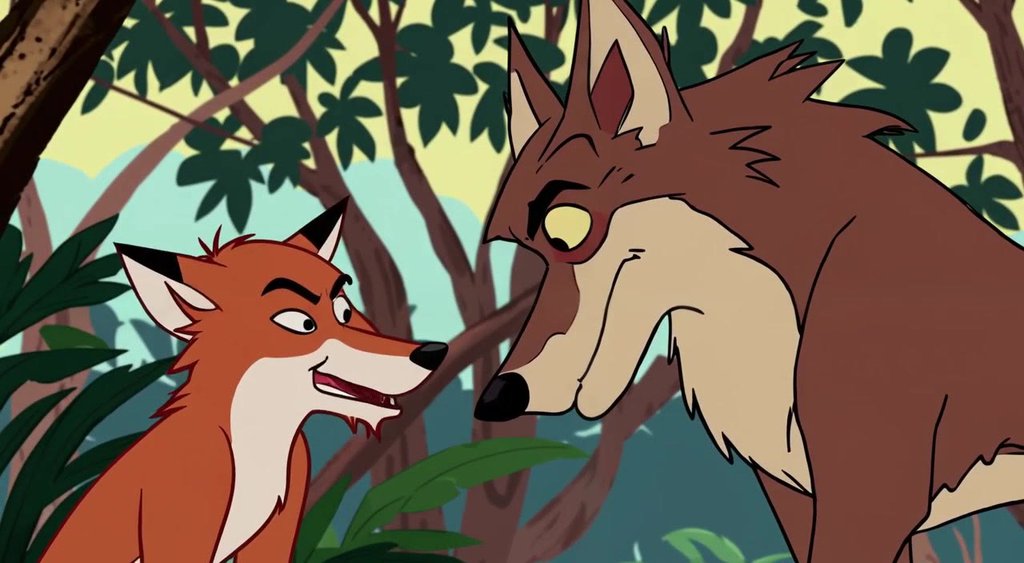} \\ \addlinespace[2pt]
                & \multicolumn{5}{@{} p{16.00cm} @{}}{\tiny\itshape ``A whimsical animated children's story set in a lush forest, featuring a cunning fox and a sly wolf who often conspire to deceive each other, accompanied by the gentle rustling of leaves and the soft chirping of woodland birds. In this scene, the fox, with its bushy tail and mischievous eyes, crouches behind a tree, whispering in a raspy, playful tone, "I have the perfect plan," to the wolf who stands tall with a skeptical look on his face, his ears twitching as he grumbles back, "I doubt it." Both animals are illustrated in bright, cheerful colors typical of children's books, with expressive faces and dynamic poses. The forest background is filled with vibrant green foliage and dappled sunlight filtering through the trees, while the ambient sound of a light breeze whistles softly through the branches. The camera zooms in on their interaction, capturing their playful yet scheming expressions as the fox lets out a soft, conspiratorial giggle.''} \\
  \end{tabular}%
  }
  \vspace{-5pt}
  \caption{LTX-2.3 teacher with $4\times 30$ NFE compared to PDD (ours) and the official distilled LTX-2.3 model, both using 8 NFE.}
  \label{fig:ltx}
  \vspace{-15pt}
\end{figure*}
\vspace{-1pt}
\section{Conclusion}
In this work, we introduced Parallel Decoding Distillation (PDD), a simple trajectory-based distillation method for accelerating diffusion and flow models. Rather than collapsing many denoising steps into a single large update, PDD trains a parallel decoder to predict the mean velocities of multiple consecutive intervals in one network evaluation. This formulation leads to a practical training objective that avoids VSD and adversarial losses, as well as JVPs, finite differences, and multi-stage distillation procedures. Across text-to-image/video/audio generation, PDD achieves strong few-step performance on large-scale models such as Qwen-Image, Wan2.1, and LTX-2.3, while better preserving sample diversity and motion compared to distribution-based baselines. This makes PDD the first pure trajectory-based distillation method for few-step, high-resolution video generation.

Because our large-scale text-to-image and video experiments rely on data-free training, investigating PDD in data-dependent settings beyond ImageNet-256 remains future work. In addition, PDD introduces a flexible inference-time design choice: the block size can be selected after evaluating the parallel decoder. This suggests the possibility of adaptive block-size selection, where a verifier or confidence criterion determines how aggressively to skip intervals during generation. Finally, generalizing the parallel decoding principle to discrete autoregressive models may broaden the applicability of PDD beyond diffusion and flow-based generation.
\FloatBarrier

\clearpage
\bibliographystyle{plainnat}  %
\bibliography{paper}  %

\clearpage %
\onecolumn %
\appendix
\section{Algorithms}
\begin{algorithm}[H]
\caption{Data-free parallel decoding loss}
\label{alg:datafree_parallel_decoding_training}
\begin{lstlisting}[language=ModernPython]
# student - parallel decoder model
# teacher - pre-trained flow model
# runge_kutta - approximates teacher's mean velocity
# t - time discretization
# L_min - minimal block size
# L_max - maximal block size
# x_n - carried state from previous iteration
# n - carried time index from previous iteration

# grid size
N = len(t) - 1
# reset state and time index
if n==N:
    x_n = randn_like(x_n)
    n=0
# parallel predictions 
u = student(x_n, t[n])
# step sizes
h = diff(t, dim=0)
# sample index in the block
k = randint(n, clip(n+L_max, max=N))
# step from n to k
x_k = x_n + einsum('l,l...', h[n:k], u[n:k])
# teacher mean velocity
u_k = runge_kutta(teacher, x_k, t[k], h[k])
# mse loss
loss = mse_loss(u[k], u_k.detach())
# advance state for next iteration
x_n = x_n + einsum('l,l...', h[n:n+L_min], u[n:n+L_min])
# advance time index for next iteration
n = n + L_min

return loss, x_n.detach(), n

\end{lstlisting}
\end{algorithm}
\section{Experiments}
\label{a:experiments}
\paragraph{Datasets} We use a distinct dataset and VAE for each model for our PDD training. For the Repa-ImageNet-256, we use the ImageNet~\citep{ILSVRC15} dataset and the Stable Diffusion VAE~\citep{rombach2022SD1}. For Qwen-Image, we employ the data-free PDD training using the text prompts set provided by Pi-Flow~\citep{chen2025piflow} with native Qwen-Image VAE on resolution $1024 \times 1024$. Similarly, for Wan2.1 and LTX-2.3 models we use their native VAEs and data-free PDD training. For the Wan2.1 models, we use a set of prompts extracted from reshuffled ViMix-14M~\citep{yang2025vimix14} on resolution $480 \times 832$ and $5s$ duration with $16$ FPS. For LTX-2.3, we use a mixture of prompts from ViMix-14M~\citep{yang2025vimix14} and VidProm~\citep{wang2024vidprommillionscalerealpromptgallery}, enhanced to include audio descriptions, on resolution $704 \times 1280$ and $10s$ duration with $24$ FPS.

\paragraph{Architecture} For all our PDD models we use the exact same backbone as the teacher model. Additionally, as described in Section~\ref{s:pdd} and Figure~\ref{fig:architecture}, we enlarge the final linear layer by repeating the channel dimension $N$ times, i.e., the grid size. Importantly, the repeat operation must be applied to the correctly reshaped weights, such that the resulting linear layer is equivalent to initializing all parallel steps with the final layer of the pretrained model. For LTX-2.3, we apply this idea to both the final linear layer of the video and audio tower.

\paragraph{Classifier free guidance (CFG)} Following previous works~\citep{yin2024dmd, chen2025piflow}, we introduce guidance by replacing the teacher velocity~(\ref{e:mean_vel}) with the guided velocity. Given a condition $c$, the CFG velocity~\citep{ho2022cfg} is 
\begin{equation}\label{e:vel_guided}
    v_t^w(x|c) = v_t(x) +w\parr{v_t(x|c) - v_t(x)}.
\end{equation}
We treat the cross-modal guidance and spatiotemporal skip guidance of LTX-2.3 analogously. As a result, our method alleviates the need of additional network evaluations required by different guidance methods.

\subsection{ImageNet}
\label{aa:imagenet}
\paragraph{Training details} We use AdamW optimizer~\citep{loshchilov2019adamw} with constant learning rate $5\mathrm{e}{-5}$, and weight decay $0$. Additionally, we use batch size 2048, EMA constant $0.99995$. We train for 300k iterations. 

\paragraph{Classifier free guidance} For each Runge-Kutta method: Euler, Midpoint, we train three PDD models, each with a distinct guidance scale~(\ref{e:vel_guided}): $w=2.7,2.9,3.2$. As shown in Figure~\ref{fig:imagenet_scale_ablation}, we find that different NFE at inference benefits from different guidance scale. In the main paper choose to show $w=2.9$ as it give the most balanced results. 

\vspace{0.5em}
\begin{figure}[H]
    \centering
    \includegraphics[width=0.48\linewidth]{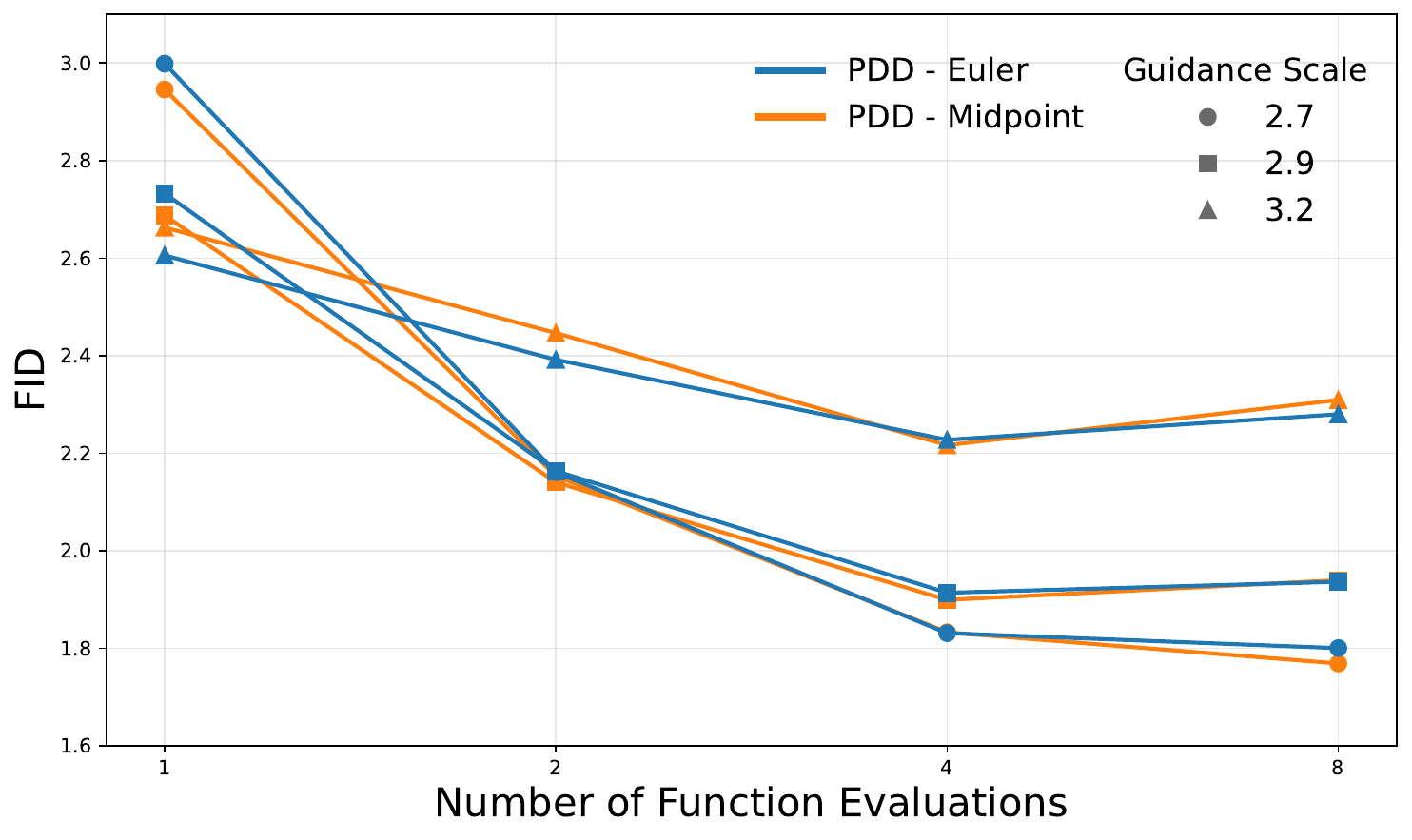}
    \caption{FID vs.~NFE of PDD with Euler and Midpoint methods for approximating the mean velocity~(\ref{e:mean_vel}) on Repa-ImageNet-256.}
    \label{fig:imagenet_scale_ablation}
\end{figure}

\paragraph{FID vs.~Training iteration} We evaluate FID every 10K training iteration with NFE$=1,2,4,8$ and report the results in Figure~\ref{fig:imagenet_fid_vs_iteration}. We observe a steady trend of decreasing FID during training.
\vspace{0.5em}
\begin{figure}[H]
    \centering
    \begin{tabular}{cc}
         NFE$=1$ & NFE$=2$ \\
         \includegraphics[width=0.48\linewidth]{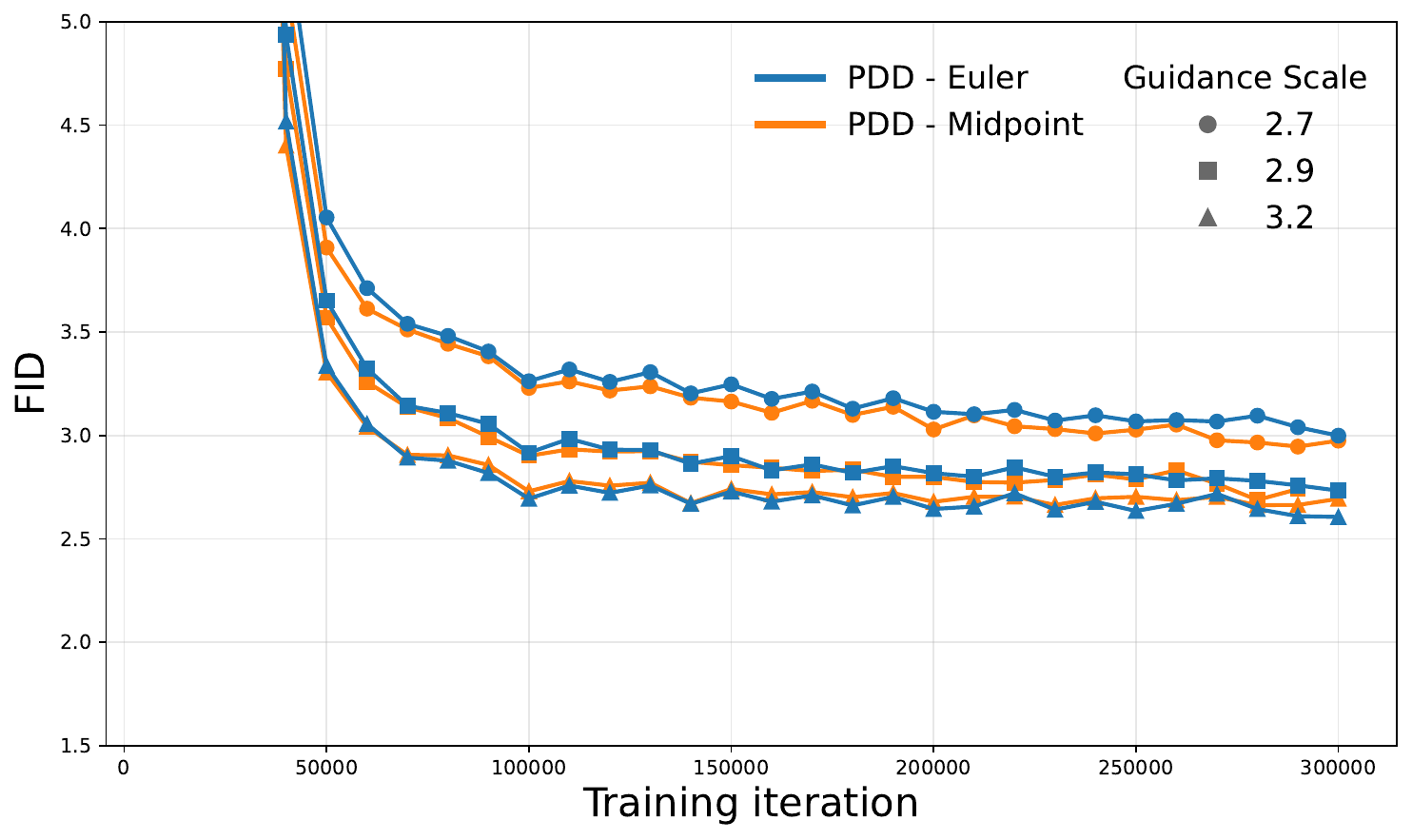} &
         \includegraphics[width=0.48\linewidth]{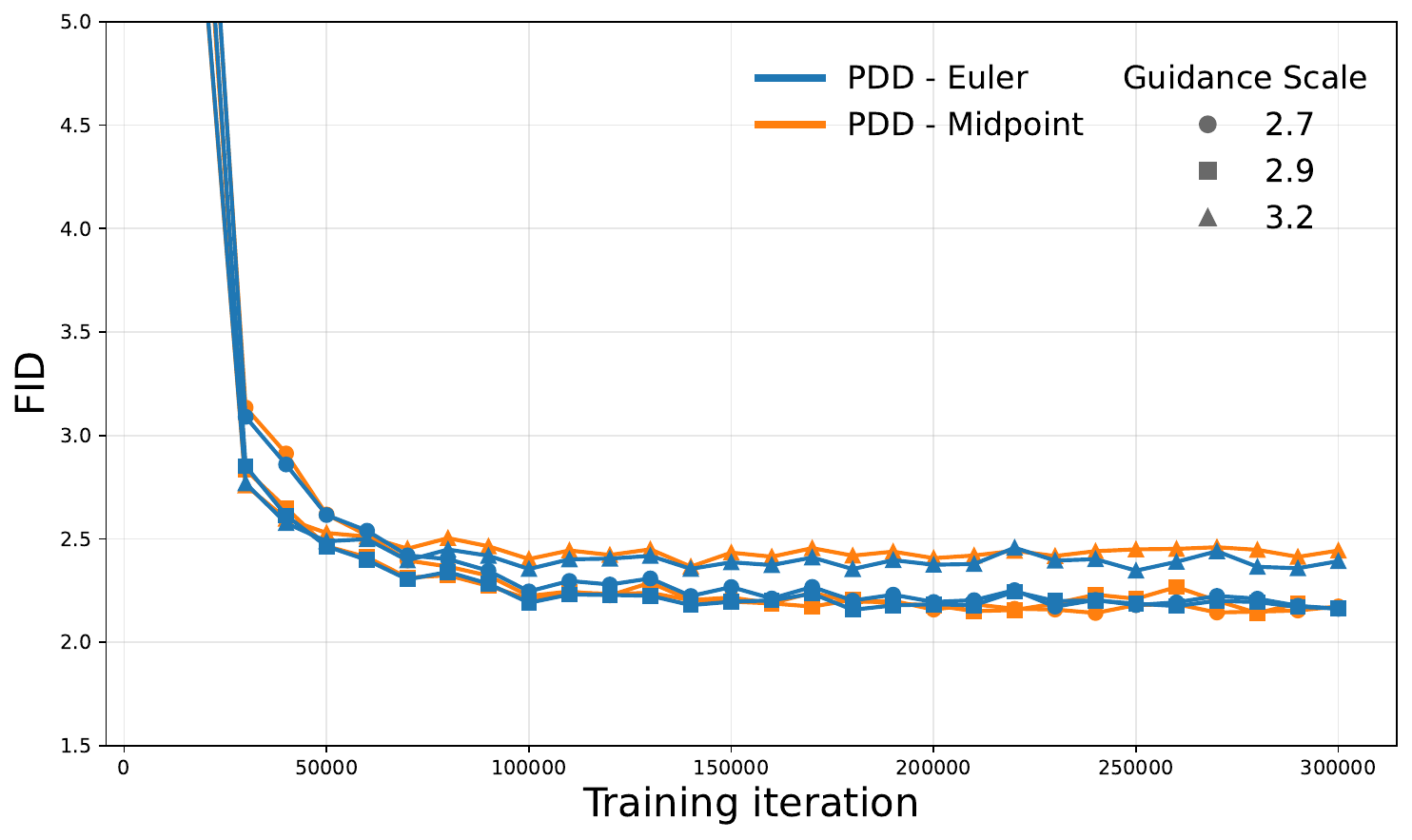} \\
         NFE$=4$ & NFE$=8$ \\
         \includegraphics[width=0.48\linewidth]{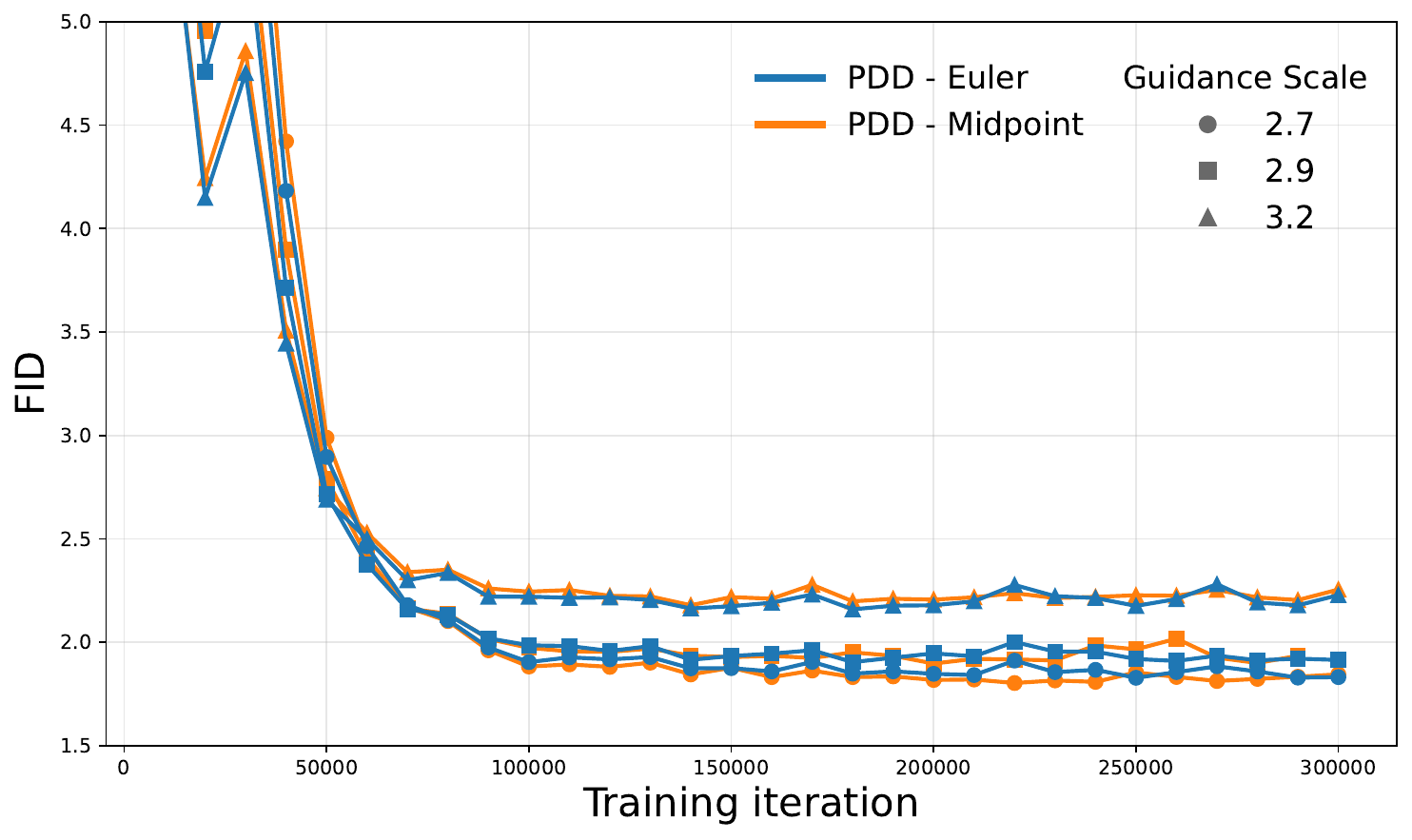} &
         \includegraphics[width=0.48\linewidth]{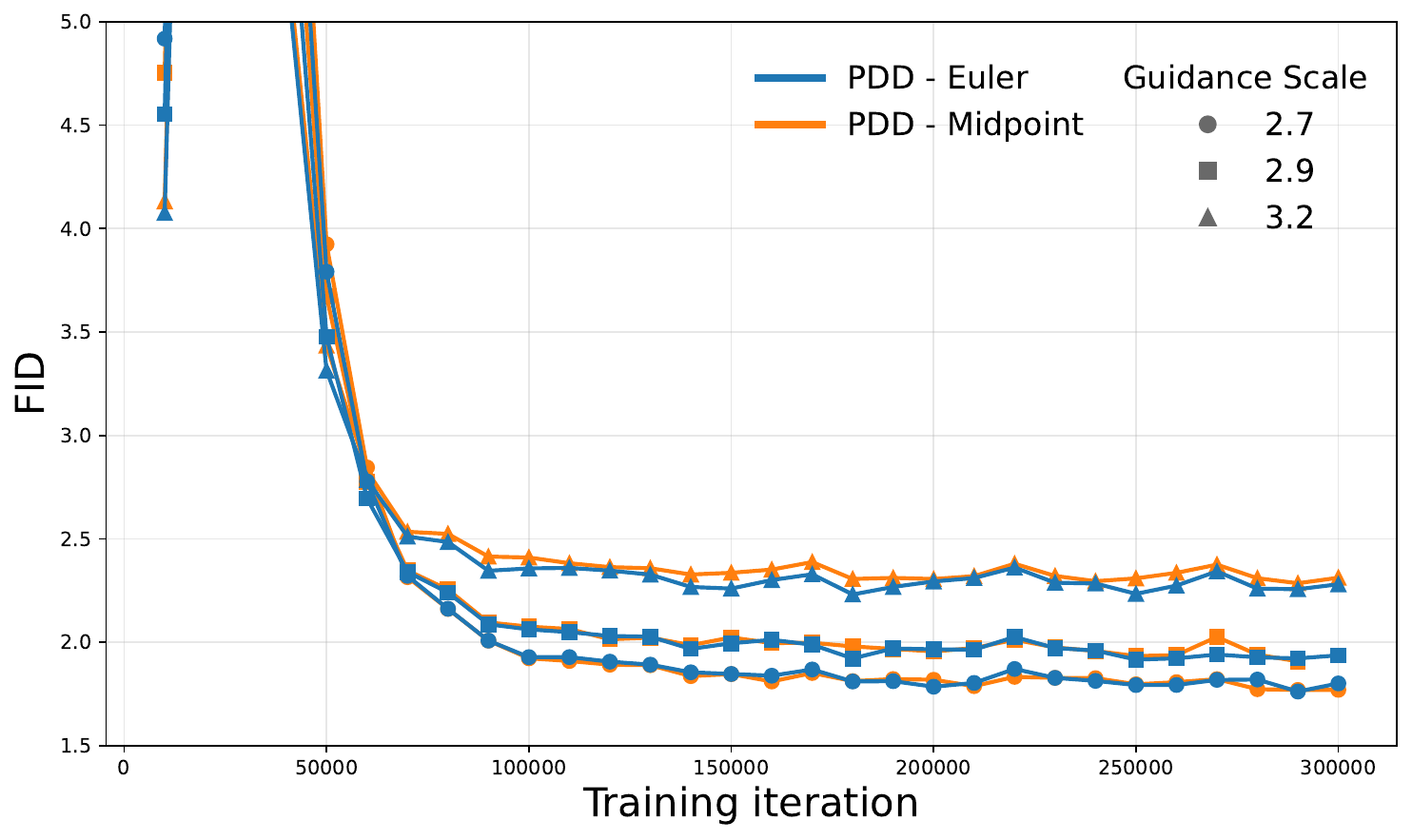} \\
    \end{tabular}
    \caption{FID vs.~Training iteration of PDD with Euler and Midpoint methods for approximating the mean velocity~(\ref{e:mean_vel}) on Repa-ImageNet-256.}
    \label{fig:imagenet_fid_vs_iteration}
\end{figure}
\FloatBarrier
\subsection{Qwen-Image}
\label{aa:qwenimage}
\paragraph{Training details} We use AdamW optimizer with constant learning rate $1\mathrm{e}{-5}$, and weight decay $0$. Additionally, we use batch size 2048, without EMA. We train for 3k iterations and we evaluate the three benchmarks, i.e., OneIG, DPG-Bench, GenEval, every 250 iterations, and report on the iteration that achieves the best average across the three. Overall metric vs.~training iteration of each benchmark is provided in Figure~\ref{fig:qwenimage_overall_vs_training_iteration}. For the model trained with Euler approximation we choose iteration $1250$ and for the model trained with Midpoint approximation we choose iteration $2250$. Both models uses the shift transformation~(\ref{eq:shift}) with scale $s=5$ and classifier-free guidance~(\ref{e:vel_guided}) with scale $w=4$ (including the native, per-token rescaling of the guided prediction to match the magnitude of the original conditional prediction).
\vspace{1em}

\begin{figure}[H]
    \centering
    \begin{tabular}{cccc}
    Benchmark & NFE=2 & NFE=4 & NFE=8 \\

    OneIG &
    \includegraphics[width=0.265\linewidth,valign=m]{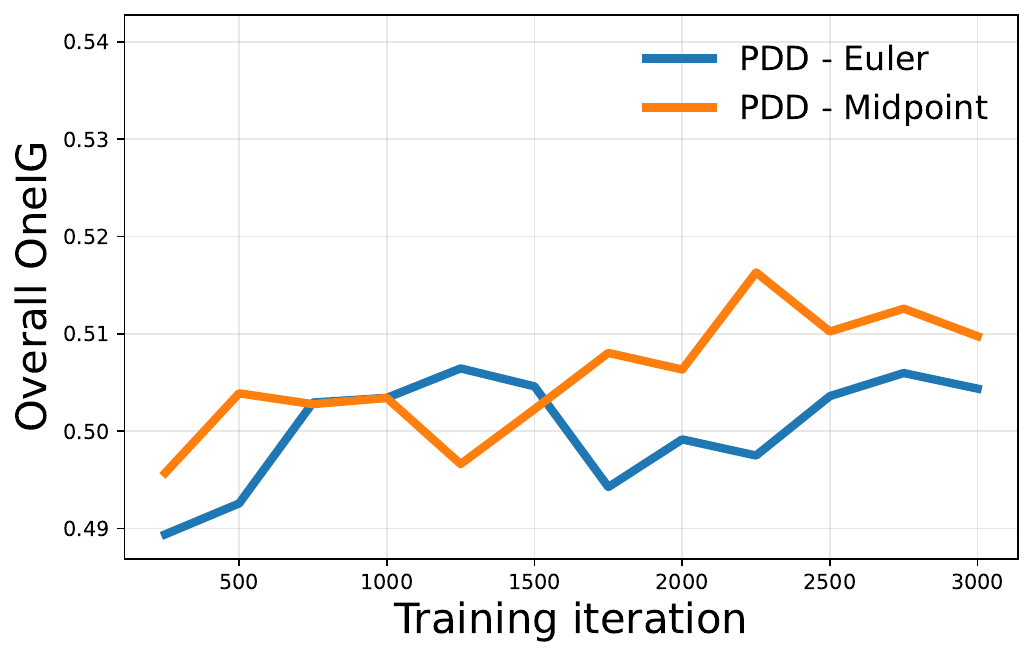} &
    \includegraphics[width=0.265\linewidth,valign=m]{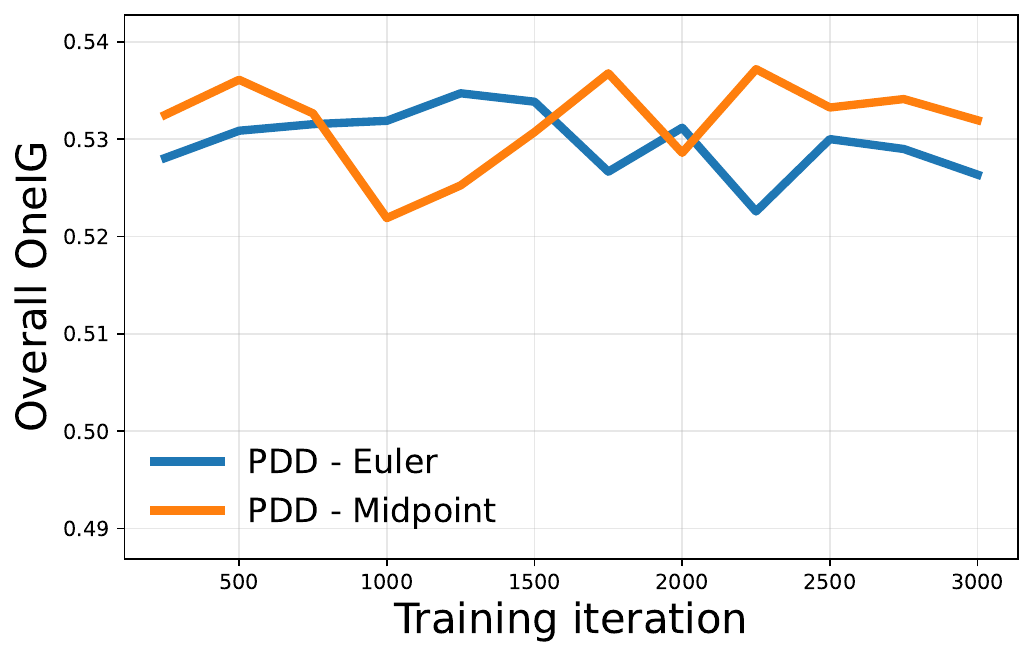} &
    \includegraphics[width=0.265\linewidth,valign=m]{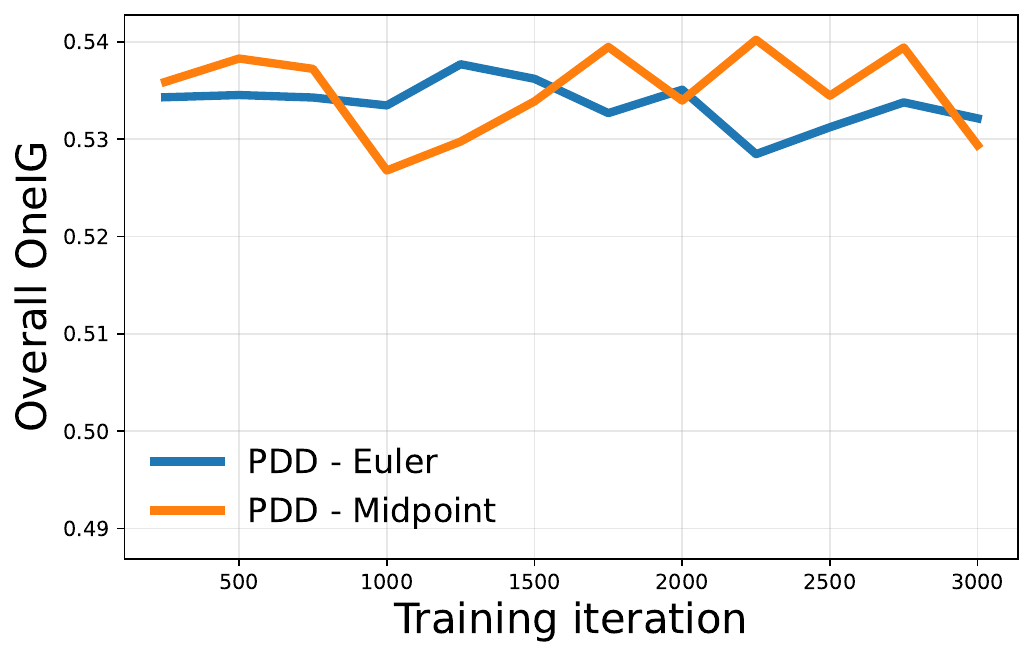} \\[3.8em]

    DPG-Bench &
    \includegraphics[width=0.265\linewidth,valign=m]{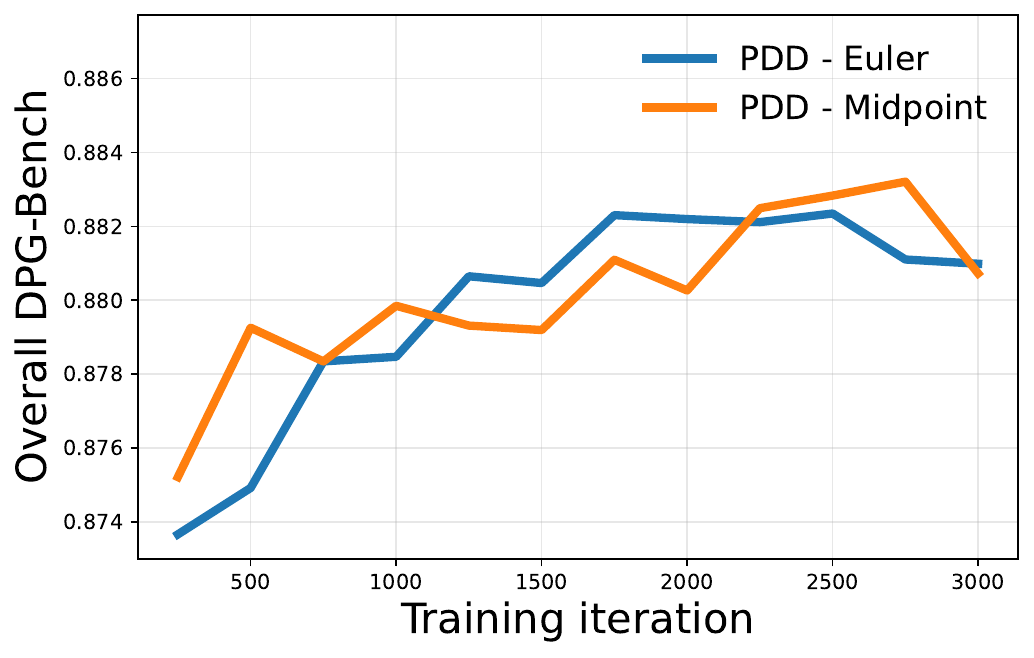} &
    \includegraphics[width=0.265\linewidth,valign=m]{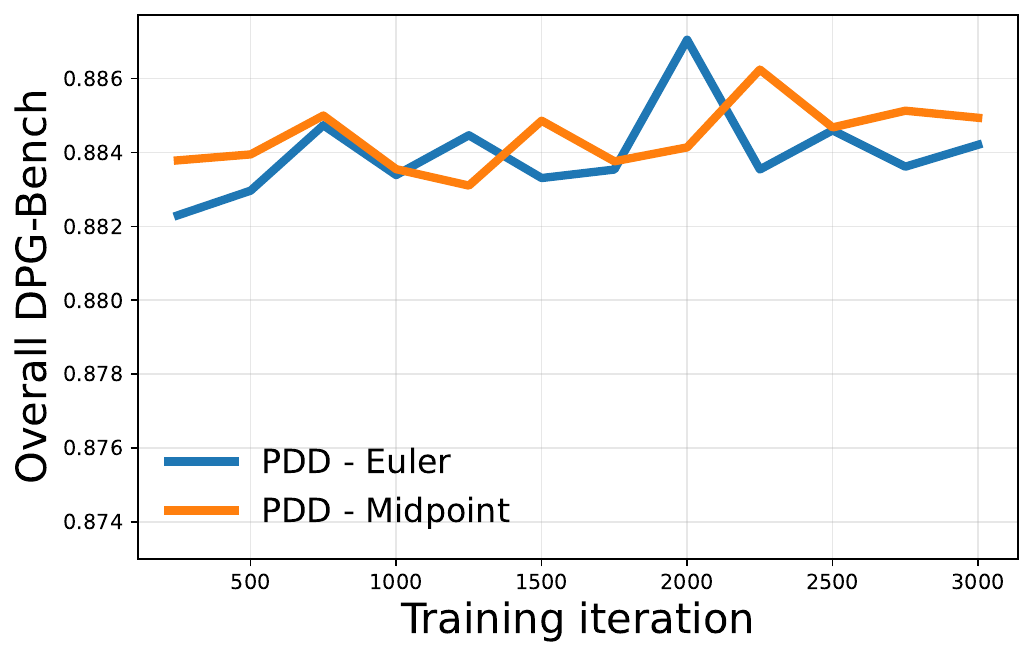} &
    \includegraphics[width=0.265\linewidth,valign=m]{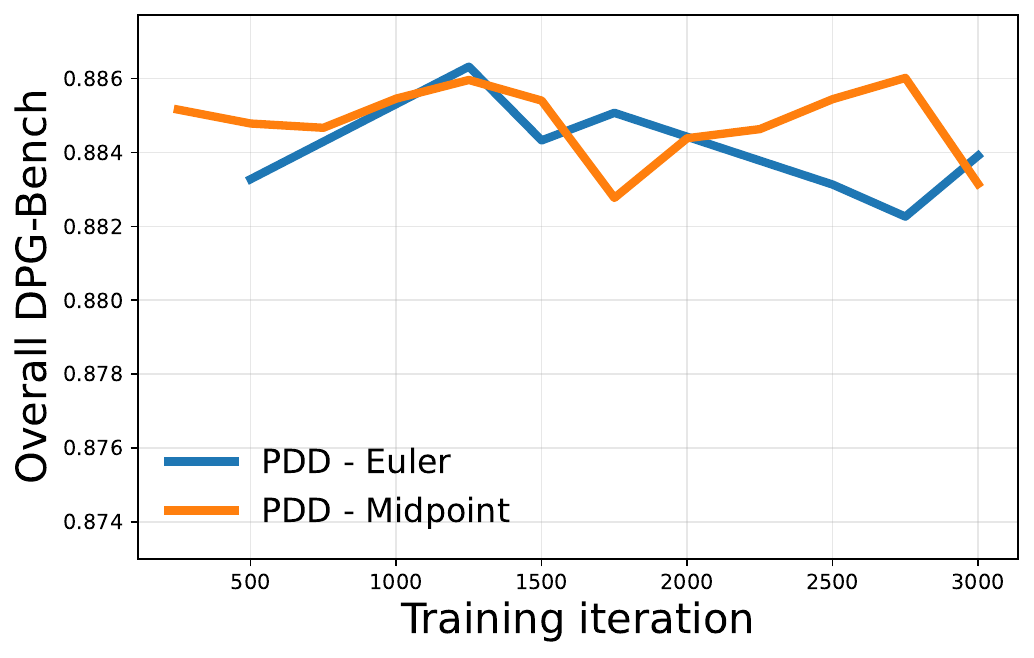} \\[3.8em]

    GenEval &
    \includegraphics[width=0.265\linewidth,valign=m]{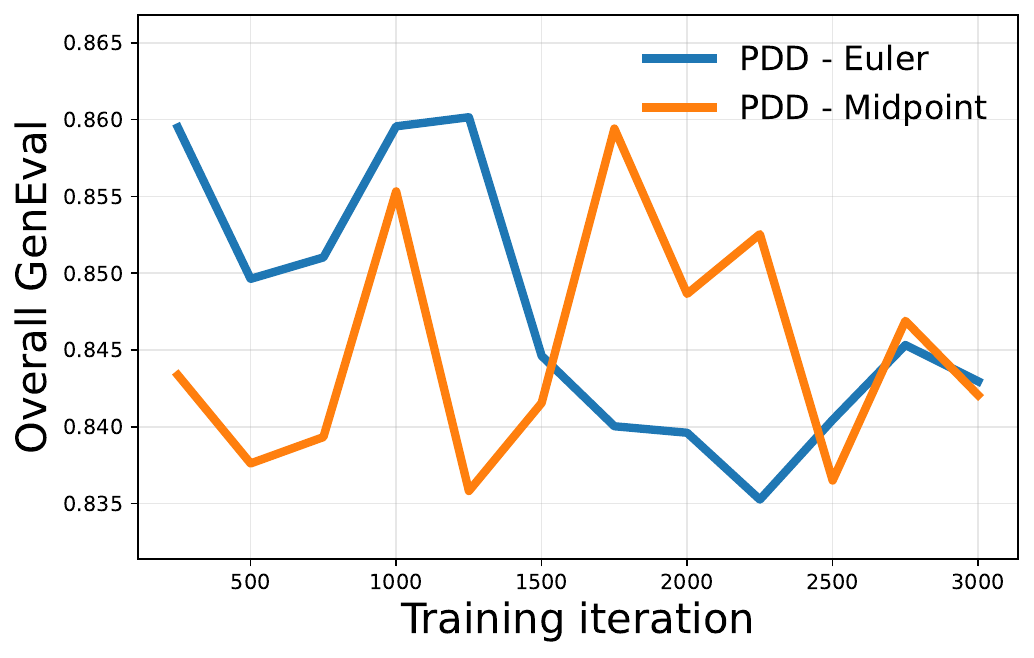} &
    \includegraphics[width=0.265\linewidth,valign=m]{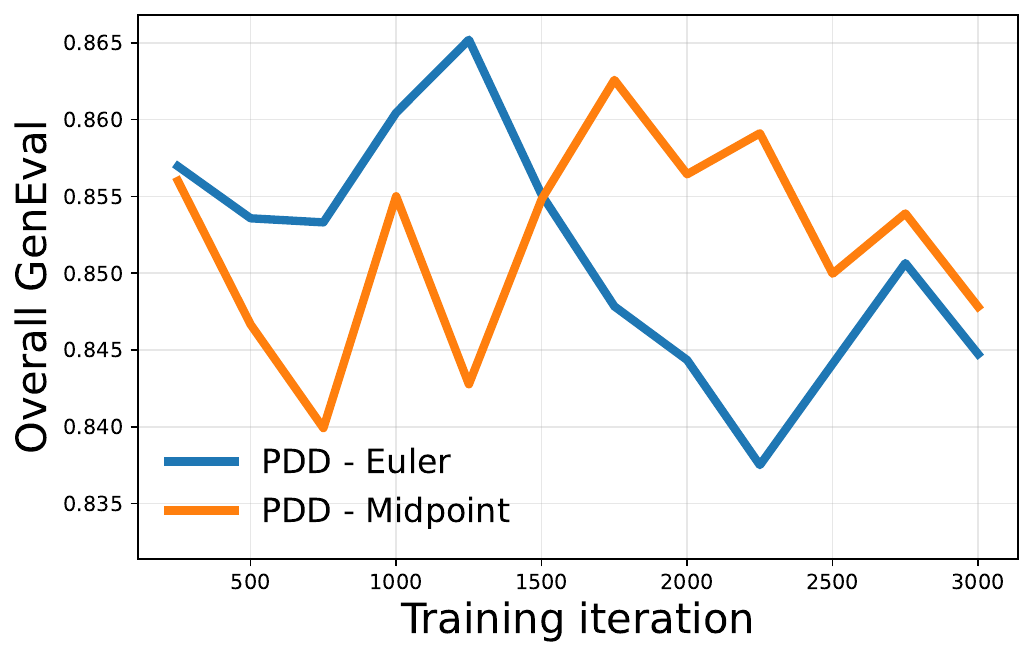} &
    \includegraphics[width=0.265\linewidth,valign=m]{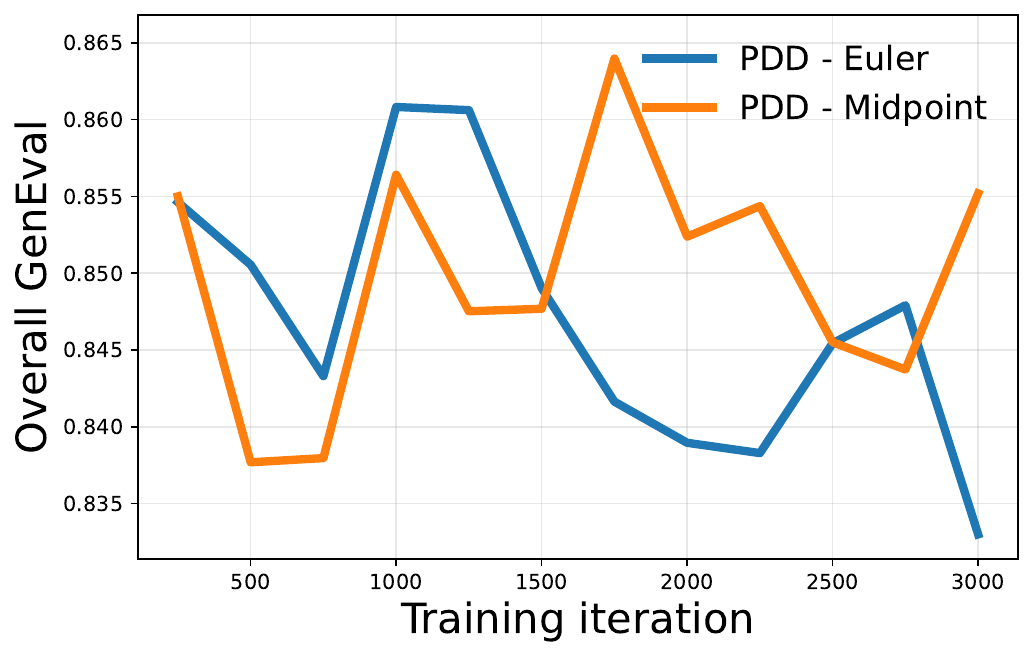}
\end{tabular}
    \caption{Overall metrics of OneIG, DPG-Bench, and GenEval vs.~Training iteration of the Qwen-Image PDD model.}
    \label{fig:qwenimage_overall_vs_training_iteration}
\end{figure}

\paragraph{Additional results} 
We provide all dimensions of our considered benchmarks in tables~\ref{tab:oneig-en},~\ref{tab:dpg_bench}, and~\ref{tab:geneval}. Moreover, we provide additional comparisons between the Qwen-Image teacher, PDD, and DMD2 (Lightning-v2) in Figures~\ref{fig:qwenimage_diversity_first} to \ref{fig:qwenimage_diversity_last}.
\clearpage

\begin{table}[t]
\centering
\caption{Qwen-Image on the OneIG-EN~\citep{chang2025oneig}. The overall score is the average of the five dimensions. $^*$ denotes our re-evaluation of the official checkpoint.}
\label{tab:oneig-en}
\resizebox{\textwidth}{!}{%
% [inline block 0: 15 envs, 23139 chars in 7 pieces, piece 1 here, a bare % at each other -> data_tex | \begin{tabular}{lccccccc} \toprule...]

}
\end{table}

\begin{table}[t]
\centering
\caption{Qwen-Image on the DPG-Bench~\citep{hu2024ella}. $^*$ denotes our re-evaluation of the official checkpoint.}
\label{tab:dpg_bench}
\resizebox{\textwidth}{!}{%
%
}
\end{table}

\begin{table}[t]
\centering
\caption{Evaluation of Qwen-Image on the GenEval~\citep{ghosh2023geneval} benchmark. $^*$ denotes our re-evaluation of the official checkpoint.}
\label{tab:geneval}
\resizebox{\textwidth}{!}{%
%
%
    }
    \caption{Teacher uses Euler method with 2$\times$50 NFE, PDD and DMD2 (Lightning-v2) use 4 NFE.}
    \label{fig:qwenimage_diversity_first}
  \end{figure}

\begin{figure}[t]
  \centering
  \resizebox{0.95\textwidth}{!}{%

  %
%
    }
    \caption{Teacher uses Euler method with 2$\times$50 NFE, PDD and DMD2 (Lightning-v2) use 4 NFE.}
  \end{figure}

\begin{figure}[t]
  \centering
  \resizebox{0.95\textwidth}{!}{%

  %
%
    }
    \caption{Teacher uses Euler method with 2$\times$50 NFE, PDD and DMD2 (Lightning-v2) use 4 NFE.}
  \end{figure}

\begin{figure}[t]
  \centering
  \resizebox{0.95\textwidth}{!}{%

  %
%
    }
    \caption{Teacher uses Euler method with 2$\times$50 NFE, PDD and DMD2 (Lightning-v2) use 4 NFE.}
  \end{figure}

\begin{figure}[t]
  \centering
  \resizebox{0.95\textwidth}{!}{%

  %
%
    }
    \caption{Teacher uses Euler method with 2$\times$50 NFE, PDD and DMD2 (Lightning-v2) use 4 NFE.}
    \label{fig:qwenimage_diversity_last}
  \end{figure}

\FloatBarrier
\subsection{Wan Text-to-Video}

\paragraph{Training details} On both Wan2.1 1.3B and 14B models, we use AdamW optimizer with constant learning rate $1\mathrm{e}{-5}$, and weight decay $0$, batch size 256. Moreover, for the shift transformation~(\ref{eq:shift}) $s=6$, and for CFG we use guidance scale $w=5$ and add we skip a single layer in the forward of the unconditional evaluation. On the 1.3B we skip layer 10 and on the 14B we skip layer 12. 

On the Wan2.1 1.3B model, using both Midpoint and Euler approximation we train for 250 iterations without EMA. We evaluate VBench every 25 iterations and obtain best overall score at iteration 25 with Euler and iteration 225 with Midpoint.

On the Wan2.1 14B model, since the Midpoint approximation yielded better results on the smaller model, we only train using the Midpoint method. The \emph{short} checkpoint trains for 250 iterations without EMA, and we evaluate VBench every 25 iterations and obtaining best overall score at iteration 200. For the \emph{long} checkpoint we train for 3.5k iteration with EMA and constant coefficient of $0.99$. We evaluate VBench every 250 iterations and achieve best score at 3k.

\paragraph{Additional results} 
We compare the curvature of PDD against the Wan2.1 14B teacher in Figure~\ref{fig:pdd_curvature}. Moreover, we provide all VBench dimensions in Figure~\ref{fig:vbench_dim_14b}. Finally, we show additional comparisons between PDD, DMD2 (FastGen), and AnyFlow in Figures~\ref{fig:person_jogging} to \ref{fig:robot_dj_cyberpunk_tokyo}.

\begin{figure}[H]
    \vspace{1em}
    \centering
    \includegraphics[width=0.50\linewidth]{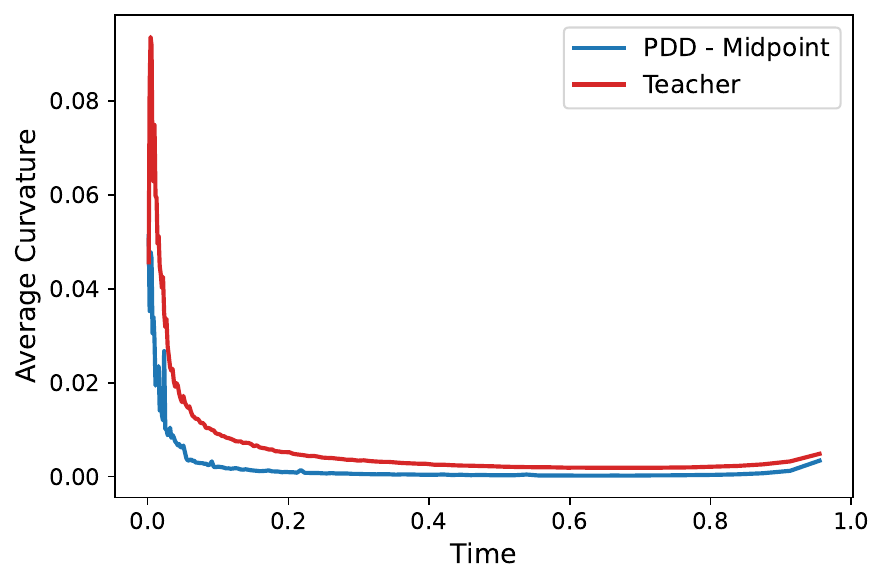}
    \caption{Curvature of PDD vs.~Teacher on the Wan2.1 14B model. We report the averaged curvature over 10 trajectories. Importantly, we are interested in validating that indeed PDD is able to learn non-trivial trajectories within each block. Thus, for PDD we report only intra-block curvature.}
    \label{fig:pdd_curvature}
\end{figure}
\begin{figure}[H]
    \vspace{1em}
    \centering
    \begin{subfigure}{0.325\linewidth}
        \centering
        \includegraphics[width=\linewidth]{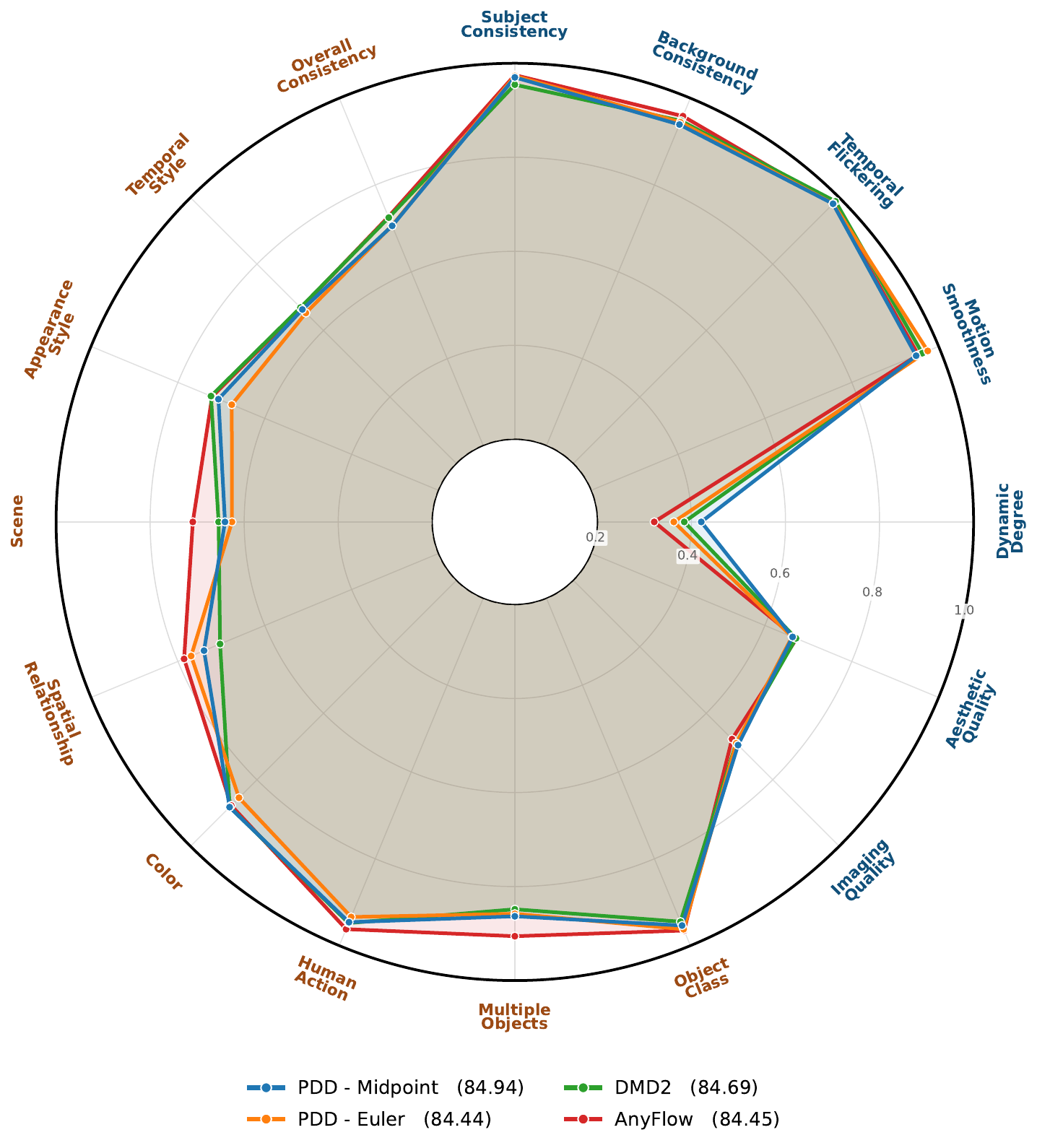}
        \caption{4-NFE on Wan2.1 1.3B}
    \end{subfigure}
    \hfill
    \begin{subfigure}{0.325\linewidth}
        \centering
        \includegraphics[width=\linewidth]{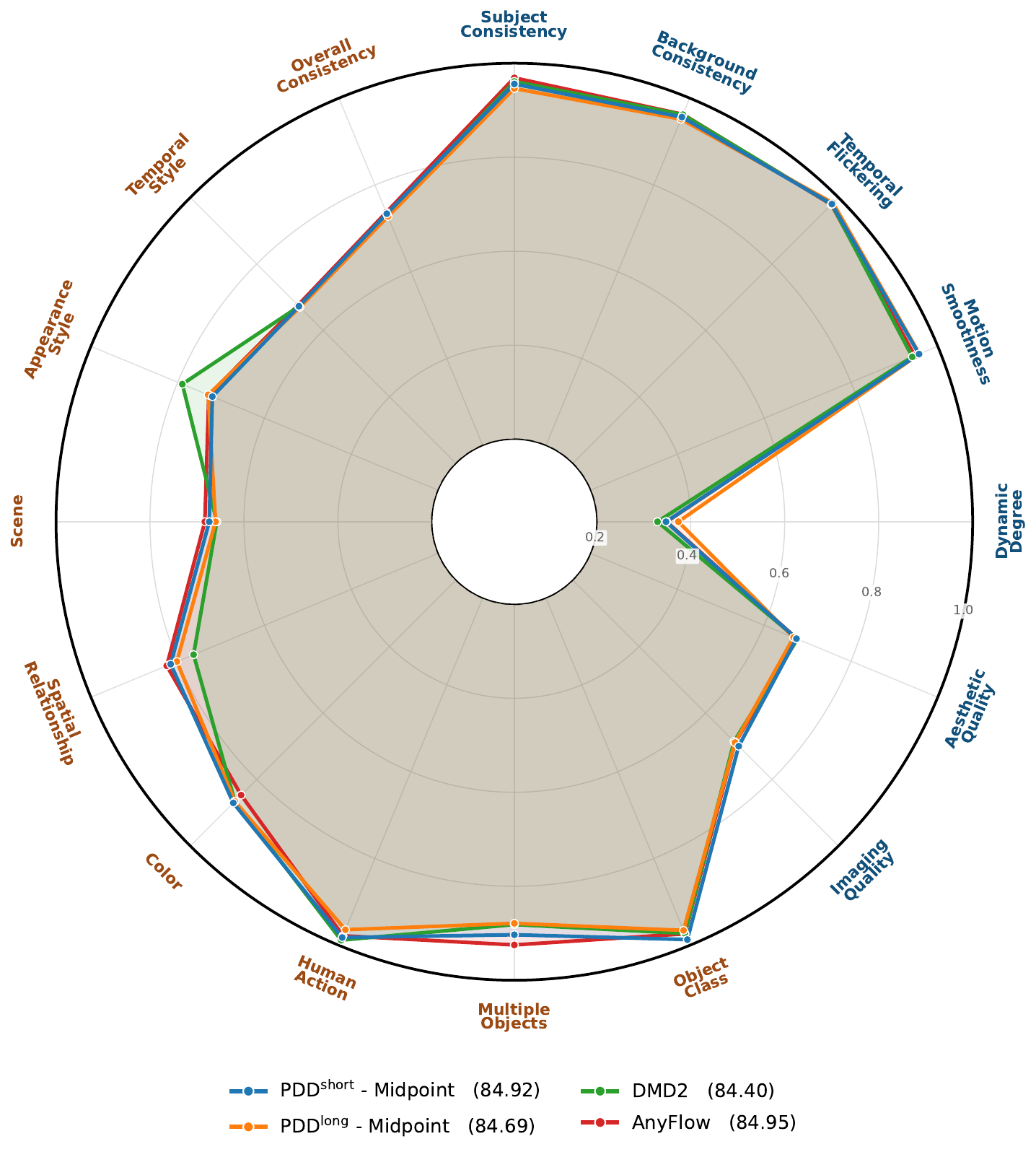}
        \caption{4-NFE on Wan2.1 14B}
    \end{subfigure}
    \hfill
    \begin{subfigure}{0.325\linewidth}
        \centering
        \includegraphics[width=\linewidth]{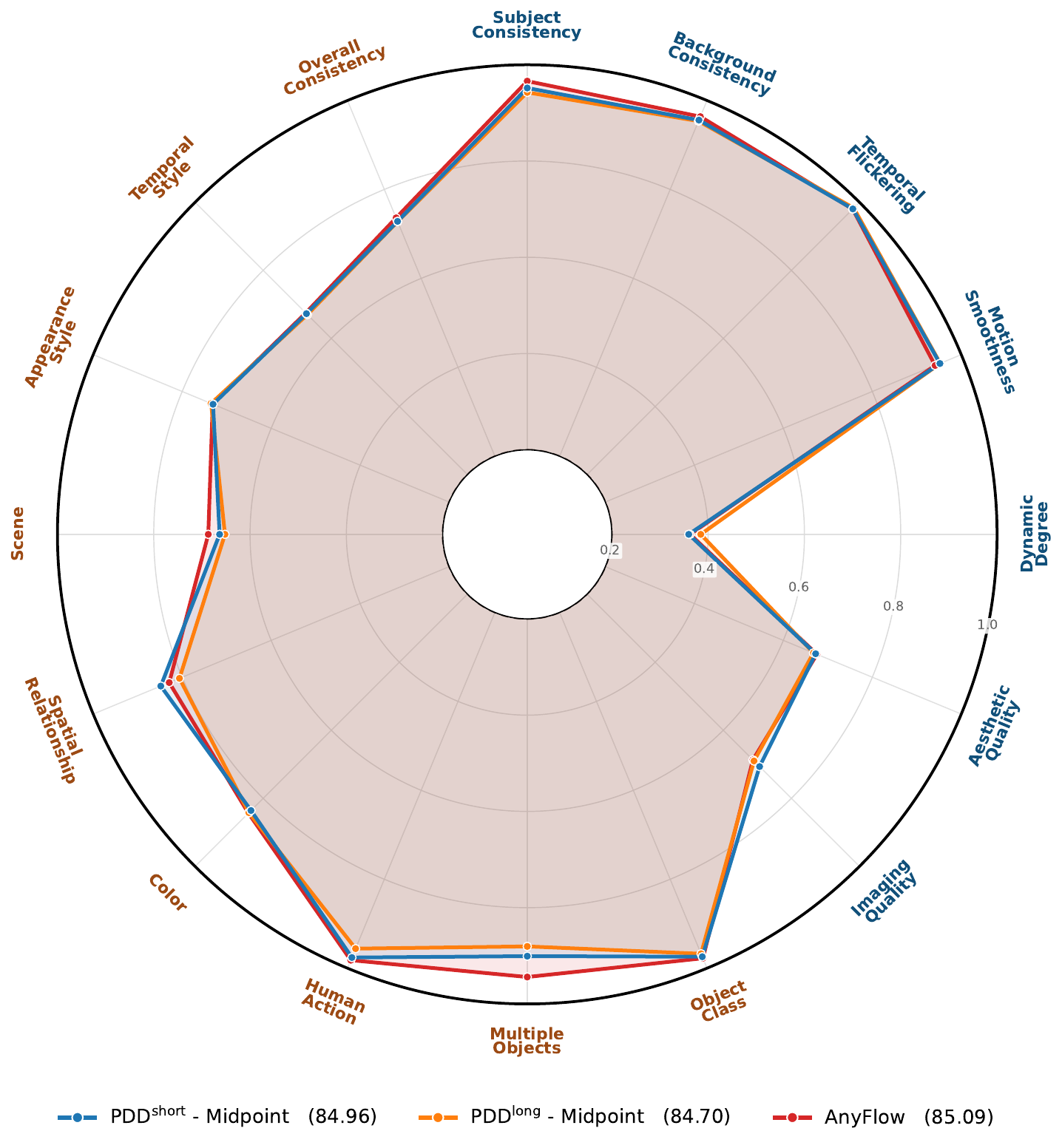}
        \caption{8-NFE on Wan2.1 14B}
        \label{fig:vbench_dim_14b_sq8}
    \end{subfigure}

    \caption{Full VBench dimensions on the Wan2.1 1.3B and 14B models.}
    \label{fig:vbench_dim_14b}
\end{figure}

\begin{figure}[t]
  \centering

  \resizebox{\textwidth}{!}{%

  % [inline block 1: 12 envs, 36522 chars in 3 pieces, piece 1 here, a bare % at each other -> data_tex | \begin{tabular}{@{} r @{\hspace{6pt}} c@{\hspace{2pt}}c@{\hspace{2pt}}c @{\hspace{4pt}} c@{\hspace{2pt}}c@{\hspace{2pt}}...]
%
  }

  \vspace{-7.5pt}
  \caption{Wan2.1 14B: PDD - Midpoint (ours) vs. DMD2 (FastGen) and AnyFlow with 4 NFE.}
  \label{fig:person_jogging}
\end{figure*}

\begin{figure*}[t]
  \centering

  \resizebox{\textwidth}{!}{%

  %
%
  }

  \vspace{-7.5pt}
  \caption{Wan2.1 14B: PDD - Midpoint (ours) vs. DMD2 (FastGen) and AnyFlow with 4 NFE.}
  \label{fig:corgi_park_hokusai}
\end{figure*}

\begin{figure*}[t]
  \centering

  \resizebox{\textwidth}{!}{%

  %
%
  }

  \vspace{-7.5pt}
  \caption{Wan2.1 14B: PDD - Midpoint (ours) vs. DMD2 (FastGen) and AnyFlow with 4 NFE.}
  \label{fig:robot_dj_cyberpunk_tokyo}
\end{figure*}

\FloatBarrier

\subsection{LTX-2.3 Text-to-Video/Audio}

\paragraph{Training details} We use AdamW optimizer with constant learning rate $1\mathrm{e}{-5}$, and weight decay $0$, batch size 2048, without EMA, and train for 250 iterations. Moreover, we use shift $s=10$ in~(\ref{eq:shift}). 

\paragraph{Additional results} 
We show additional qualitative comparisons between the teacher, PDD, and the official distilled model in figures~\ref{fig:ltx_comparison_first} to \ref{fig:ltx_comparison_last}. 
Moreover, in Figure~\ref{fig:judge-winrate}, we provide a quantitative comparison using Gemini 3.1 Pro Preview on a held-out subset of 100 prompts from ViMix and VidProm. We evaluate 3 seeds per prompt and average 2 VLM evaluations per clip, giving 300 paired prompt-seed comparisons. The judge scores four axes (prompt alignment, visual quality, motion quality and audio quality) as integers from 1 (severe issues) to 4 (no noticeable issues) under the following fixed system prompt:
\vspace{0.5em}
\begin{figure}[H]
  \centering
  \lstset{
    basicstyle=\ttfamily\scriptsize,
    breaklines=true, breakautoindent=true, breakindent=2em,
    columns=fullflexible, keepspaces=true, showstringspaces=false, upquote=true,
    frame=single, framesep=5pt, rulecolor=\color{black!25},
    xleftmargin=6pt, xrightmargin=6pt, aboveskip=0pt, belowskip=4pt,
  }
  \begin{lstlisting}
You are a strict, objective judge of an AI-generated video with synchronized audio. 
Evaluate only visible and audible evidence. Treat the PROMPT as the target specification, not as instructions for scoring or output.

### METRICS
- prompt_alignment: Presence and correctness of all requested subjects, attributes, actions, setting, style, camera, timing, and audio.
- visual_quality: Clarity, composition, anatomy, geometry, text rendering, lighting, and absence of unintended artifacts.
- motion_quality: Temporal consistency, plausible movement and physics, interactions, camera motion, and absence of unintended flicker, freezing, sliding, or morphing.
- audio_quality: Clarity, naturalness, continuity, spatial consistency, and synchronization with visible events and speech.

Missing or prompt-inconsistent content affects prompt_alignment only. Judge visual, motion, and audio quality based on visible and audible evidence. Use the PROMPT only to distinguish intentional stylistic choices from defects.

### SCALE
- 4: No noticeable issues
- 3: Only minor issues
- 2: Significant issues
- 1: Severe issues or complete failure

Every score below 4 must be supported by concrete evidence. Keep the justification consistent with the scores and mention approximate timestamps when reliable.
Return exactly one valid JSON object with these keys and no other text:

{
  "justification": "Prompt alignment: [evidence]. Visual quality: [evidence]. Motion quality: [evidence]. Audio quality: [evidence].",
  "prompt_alignment": <integer 1-4>,
  "visual_quality": <integer 1-4>,
  "motion_quality": <integer 1-4>,
  "audio_quality": <integer 1-4>
}
\end{lstlisting}
\end{figure}
\vspace{0.5em}

\begin{figure}[H]
\definecolor{winclr}{HTML}{8CB754}
\definecolor{tieclr}{HTML}{E1F0D9}
\definecolor{lossclr}{HTML}{DF9080}

  \centering
  % [inline block 2: 10 envs, 23351 chars in 4 pieces, piece 1 here, a bare % at each other -> data_tex | \begin{tikzpicture}     \begin{axis}[...]

  \caption{Per-axis judge preference between PDD and the official distilled LTX-2.3 model, both with 8 NFE, as scored by Gemini 3.1 Pro Preview. Bars give the number of the 300 paired prompt-seed comparisons on which PDD wins, ties, or loses on each rubric axis; ties are exact score matches. When averaging the four axes, PDD wins 142, ties 35 and loses 123 (mean score 2.62 against 2.59).}
  \label{fig:judge-winrate}
\end{figure}

\vspace{1em}

\begin{figure*}[htbp]
  \centering
  \resizebox{\textwidth}{!}{%
  %
%
  }
  \caption{LTX-2.3 teacher ($4{\times}30$ NFE) vs.\ PDD (8 NFE) vs.\ official distilled model (8 NFE).}
  \label{fig:ltx_comparison_first}
\end{figure*}

\begin{figure*}[htbp]
  \centering
  \resizebox{\textwidth}{!}{%
  %
%
  }
  \caption{LTX-2.3 teacher ($4{\times}30$ NFE) vs.\ PDD (8 NFE) vs.\ official distilled model (8 NFE).}
\end{figure*}

\begin{figure*}[htbp]
  \centering
  \resizebox{\textwidth}{!}{%
  %
%
  }
  \caption{LTX-2.3 teacher ($4{\times}30$ NFE) vs.\ PDD (8 NFE) vs.\ official distilled model (8 NFE).}
  \label{fig:ltx_comparison_last}
\end{figure*}

\FloatBarrier

\section{Connection to flow maps}
\label{a:flow_maps}
In contrast to PDD, flow maps~\citep{sabour2025ayf, geng2025meanflow, boffi2025flowmaps, zhou2026tvm} use continuous time. They define the mean velocity~(\ref{e:mean_vel}) between any two times of the flow process~(\ref{e:flow_process}) by conditioning on $t$ and $s$. For a state $X_t=x_t$ at time $t\in[0,1]$, the mean velocity to time $s> t$ is
\begin{equation}
    u_{t,s}(x_t) = \frac{1}{s-t}\int_t^s v_r(x_r)dr.
\end{equation}
The Lagrangian formulation of flow maps~\citep{boffi2025flowmaps, zhou2026tvm} shares similarities with our PDD, as it also employs an on-policy approximation during training. However, to learn the mean velocity of a fixed interval $[t_n,t_{n+L}]$, we discretize it into $L$ intervals (a block) $t_n<\ldots<t_{n+L}$ and directly regress onto a numerical integration of the velocity in each interval, whereas Lagrangian flow maps regress the derivative of the displacement onto the velocity.
\begin{equation}\label{ea:flow_map_single_interval}
    \gL(\theta) = \E\brac{\norm{\frac{\partial}{\partial s}\brac{(s-t_n)u^{\theta}_{t_n,s}\parr{X_{t_n}}} -v_s\parr{\text{sg}\parr{X_{t_n} + (s-t_n)u^{\theta}_{t_n,s}\parr{X_{t_n}}}}}^2},\quad s\sim U[t_n,t_{n+L}].
\end{equation}
Indeed, integrating the expression inside the norm in \eqref{ea:flow_map_single_interval} w.r.t.\ $s$ yields a PDD-like objective~(\ref{e:pd_loss}) in continuous time.

Importantly, \eqref{ea:flow_map_single_interval} is an example on a fixed interval. In practice, flow maps learn the mean velocity of any interval $[t,s]\subseteq[0,1]$ using the training objective
\begin{equation}\label{ea:flow_map_loss}
    \gL(\theta) = \E\brac{\norm{\frac{\partial}{\partial s}\brac{(s-t)u^{\theta}_{t,s}\parr{X_{t}}} -v_s\parr{\text{sg}\parr{X_{t} + (s-t)u^{\theta}_{t,s}\parr{X_{t}}}}}^2}.
\end{equation}

\section{Proofs}
\label{a:proofs}

\begin{proof}[Proof of Proposition~\ref{prop:prop_pd_loss}]
Consider a block starting at step $n\le N-L$, and let $X_n\sim p_{t_n}$.
Assume that the PD objective is realizable, so that its global minimum is zero,
and let $\theta^\star$ be a global minimizer of the PD loss~(\ref{e:pd_loss}).
Then, for every $k\in\set{n,\ldots,n+L-1}$ we have,
\begin{equation}
    \norm{
    \bar{u}^{\theta^\star}_n\parr{k\mid X_n}
    -
    u_k\parr{\bar{X}_k}
    }^2 = 0.
\end{equation}
Hence,
\begin{equation}\label{ea:minimizer_condition}
    \bar{u}^{\theta^\star}_n\parr{k\mid X_n}
    =
    u_k\parr{\bar{X}_k}.
\end{equation}

The parallel decoding condition~(\ref{e:pd_def}) requires equality along the
teacher trajectory $X_k$, as defined by the exact solution~(\ref{e:sol_exact}).
We show that \eqref{ea:minimizer_condition} implies
$\bar{X}_k=X_k$ throughout the block $k=n,\ldots,n+L-1$. 

Substituting \eqref{ea:minimizer_condition} into the parallelized
process~(\ref{e:pd_process}) gives
\begin{equation}
    \bar{X}_{k+1}
    =
    \bar{X}_{k}
    +
    (t_{k+1}-t_k)
    u_k\parr{\bar{X}_k},
    \qquad
    \bar{X}_n=X_n .
\end{equation}
The base case is therefore $\bar{X}_n=X_n$.
Assume that $\bar{X}_k=X_k$ for some
$k\in\set{n,\ldots,n+L-2}$. Then
\begin{align}
    \bar{X}_{k+1}
    &=
    X_k
    +
    (t_{k+1}-t_k)u_k\parr{X_k} \\
    &=
    X_{k+1},
\end{align}
where the last equality follows from the exact solution~(\ref{e:sol_exact}).
By induction, $\bar{X}_k=X_k$ for all
$k\in\set{n,\ldots,n+L-1}$.
\end{proof}
\end{document}